\PassOptionsToPackage{table}{xcolor}

\documentclass{article}

\usepackage{xcolor}

\usepackage[preprint]{preprint_2027}

\usepackage{comment}
\usepackage{xspace}
\usepackage{float}
\usepackage{amsthm}
\usepackage{amsmath}
\usepackage{amssymb}
\usepackage{tikz}
\usetikzlibrary{arrows.meta,positioning,fit,calc}
\usepackage[ruled,vlined]{algorithm2e}
\usepackage{subcaption}
\usepackage{booktabs}
\usepackage{tabularx}
\usepackage{multirow}
\usepackage{makecell}
\usepackage{graphicx}
\usepackage{listings}

\usepackage{soul} 

\definecolor{oursblue}{RGB}{224,238,255}
\newtheorem{proposition}{Proposition}
\newtheorem{lemma}{Lemma}
\newcommand{\method}{FOCUS\xspace}

\newcommand{\cellstd}[2]{\shortstack[c]{$#1$\\[-2pt]{\tiny$\pm\,#2$}}}

\definecolor{codebg}{RGB}{245,245,245}
\definecolor{framegray}{RGB}{180,180,180}
\definecolor{simplegray}{gray}{0.93}

\title{\method{}: From Privileged States to RGB-D\\ with Controlled Modality Switching and Representation Alignment}

\author{
  Filip Grigorov\\
  University of Toronto \\
  Canada\\
  \texttt{filip.grigorov@mail.utoronto.ca} \\
  \And
  Kourosh Darvish \\
  University of Toronto \\
  Canada \\
  \texttt{kourosh.darvish@utoronto.ca} \\
  \AND
  Nandita Vijaykumar \\
  University of Toronto \\
  Canada \\
  \texttt{nandita.vijaykumar@utoronto.ca} \\
}

\begin{document}

\hypersetup{
  pdftitle={FOCUS: From Privileged States to RGB-D with Controlled Modality Switching and Representation Alignment},
  pdfauthor={Filip Grigorov, Kourosh Darvish, Nandita Vijaykumar},
  pdfsubject={},
  pdfkeywords={Vision-Based Reinforcement Learning, Learning Representations for Robotic Perception and Control, Robotic Manipulation}
}

\maketitle

\begin{abstract}
Vision-based reinforcement learning for robotic manipulation is
sample-inefficient because RGB-D observations are high-dimensional and noisy.
Privileged state information available in simulation can accelerate training,
but its absence at test time creates a train--test modality gap. We propose
FOCUS, a single-stage PPO framework that trains the critic on privileged state
while automatically regulating whether the actor collects rollouts from RGB-D
or privileged-state latents. Regulation is driven by the KL divergence between
the action distributions induced by the two modalities, while representation
alignment encourages consistent action selection across them. Together, these
mechanisms limit RGB-D rollouts when the actor's action distributions from
RGB-D and privileged-state latents disagree. As they align, RGB-D exposure
increases, shifting on-policy training toward the RGB-D inputs used at test
time. Across five manipulation tasks, FOCUS raises average test success from
0.71 to 0.93 relative to the strongest RGB-D-at-test baseline on each task.
When accounting for each method's complete training pipeline,
budget-normalized training-success AUC increases from 0.47 to 0.65. On
Pick-and-Place, test success rises from 0.47 to 0.86, while AUC increases from
0.12 to 0.61, a 5.0$\times$ improvement in learning efficiency over the fixed
interaction budget.
\end{abstract}

\keywords{Vision-Based Reinforcement Learning, Learning Representations for Robotic Perception and Control, Robotic Manipulation}

\section{Introduction}
\label{sec:introduction}

Reinforcement learning enables robots to learn closed-loop manipulation policies through trial-and-error interaction and optimize them directly for task success. However, reinforcement learning for robotics remains limited by poor sample efficiency, sensitivity to reward design, and the cost and safety risks of real-world trial-and-error data collection~\citep{DulacArnold2021Challenges}. Simulation-based RL mitigates some of these issues through resettable environments, parallel rollouts, controlled task variation, and access to training-only privileged information.

Learning manipulation policies directly from RGB-D observations is difficult because visual inputs are high-dimensional and noisy \citep{yarats2020improvingsampleefficiencymodelfree,yarats2021masteringvisualcontinuouscontrol,mnih2013playingatarideepreinforcement}. In some settings, visual observations may also provide an incomplete view of task-relevant state \citep{levine2015endtoendtrainingdeepvisuomotor,yarats2020improvingsampleefficiencymodelfree}. In simulation, however, training can access compact privileged state information, such as object and gripper poses, contact signals, and task-goal variables, that helps determine which action should be taken~\citep{cai2025provablepartiallyobservablereinforcement}. This creates an opportunity to use privileged state during RL training to improve sample efficiency while still producing an RGB-D policy that does not require privileged state at test time.

The difficulty is addressing the modality gap between privileged-state training and RGB-D-based action selection at test time. While privileged states can accelerate policy learning, reliance on them during training can produce representations and action decisions that are poorly matched to the RGB-D observations available at test time. Conversely, training only from RGB-D avoids this mismatch but can be sample-inefficient. The problem we address is therefore how to adapt the actor from privileged-state rollouts toward RGB-D control during RL training while avoiding premature RGB-D exposure.

Prior work improves sample efficiency with privileged information during training in several ways. Asymmetric actor–critic methods give privileged state to the critic while training the actor from visual inputs \citep{pinto2017asymmetricactorcriticimagebased}. This can improve value estimation and the resulting policy updates, but the actor still generates rollouts from high-dimensional visual observations throughout training. More broadly, asymmetric learning methods use full-state information during training to improve policies that operate from partial observations \citep{warrington2021robustasymmetriclearningpomdps}. However, these methods do not explicitly regulate visual rollout exposure according to whether privileged-state and visual inputs induce compatible action distributions. Teacher–student and distillation methods transfer knowledge from a trained teacher to a student model or policy \citep{hinton2015distillingknowledgeneuralnetwork,rusu2016policydistillation,parisotto2016actormimicdeepmultitasktransfer}. In robotics, privileged simulator state is often used to train a teacher policy or world model that is then transferred to a vision-based student \citep{yamada2023twistteacherstudentworldmodel,lum2024dextrahgpixelstoactiondexterousarmhand,singh2025dextrahrgbvisuomotorpoliciesgrasp,Srinivasan2025ACGD}. Such pipelines typically separate privileged teacher training from visual student transfer. In contrast, \method{} performs this transition online within a single PPO run by using cross-modal policy disagreement to regulate whether rollouts are generated from privileged-state or RGB-D latents. Other privileged-learning methods use training-only information to guide visual attention, provide auxiliary learning signals, or regularize reliance on privileged inputs~\citep{salter2021attentionprivilegedreinforcementlearning,hu2024privilegedsensingscaffoldsreinforcement,kamienny2020privilegedinformationdropoutreinforcement}. Separately, visual RL methods improve learning from pixels through representation learning and data augmentation~\citep{srinivas2020curlcontrastiveunsupervisedrepresentations,laskin2020reinforcementlearningaugmenteddata,yarats2021masteringvisualcontinuouscontrol}. These directions are complementary, but they do not use cross-modal action agreement to regulate which modality generates on-policy rollouts. We ask whether privileged information can be used online for this purpose.

In this work, our goal is to improve the sample efficiency and training reliability of RGB-D policies in simulated PPO-based manipulation without requiring RGB-D-based rollout collection from the start or separating privileged training and RGB-D transfer into distinct stages. We propose \textbf{\method}, a single-stage RL framework applied to PPO \citep{schulman2017proximalpolicyoptimizationalgorithms} for privileged-to-vision transfer in robotic manipulation. \method is based on two complementary ideas. First, we use cross-modal representation alignment to reduce mismatch between training-time and test-time inputs. Second, we introduce an online KL-guided switching mechanism that regulates when the actor uses RGB-D rather than privileged-state latents. 
An upper-quantile KL budget reduces RGB-D exposure when rare
high-disagreement states remain. This progressively shifts on-policy
training toward the RGB-D inputs available at test time while retaining
privileged-state value estimation.

\noindent\textbf{Contributions.}

(1) We formulate \method{} as a single-stage PPO-based method that makes privileged-state and RGB-D actor representations compatible while adaptively regulating whether on-policy rollouts are generated from privileged-state or RGB-D latents.

(2) We introduce KL-guided modality switching, which uses disagreement between the action distributions produced by the same policy from privileged-state- and RGB-D-derived actor inputs to regulate RGB-D-based rollout collection. An upper-quantile KL budget reduces the probability of RGB-D rollouts when rare high-disagreement states remain.

(3) Across five IsaacLab manipulation tasks, \method{} raises average test success
from 0.71 to 0.93 relative to the strongest RGB-D-at-test baseline on each task,
while end-to-end budget-normalized training-success AUC increases from 0.47 to
0.65. On Pick-and-Place, test success increases from 0.47 to 0.86 and AUC from
0.12 to 0.61 (5.0$\times$). Replacing adaptive KL-controlled exposure with a
fixed ramp reduces Pick-and-Place AUC to 0.36, while the lightweight
switch-plus-policy-KL variant (\method{}-simple) matches or exceeds \method{} on
two of three discriminative tasks for each metric, suggesting simplified future
variants.

\section{Related Work}
\label{sec:related_work}
We review prior work related to privileged-to-vision transfer, focusing on teacher–student distillation, asymmetric actor–critic methods, and privileged-information-guided learning.

\paragraph{Teacher--Student and Distillation.}
Knowledge distillation transfers behavior or representations from a teacher to a student
\citep{hinton2015distillingknowledgeneuralnetwork,
rusu2016policydistillation,
parisotto2016actormimicdeepmultitasktransfer}.
In robotic learning with privileged information, a policy or world model is often first trained using privileged state, and its knowledge is then transferred to an RGB- or depth-based student
\citep{yamada2023twistteacherstudentworldmodel,
lum2024dextrahgpixelstoactiondexterousarmhand,
singh2025dextrahrgbvisuomotorpoliciesgrasp,
Srinivasan2025ACGD}.
Such pipelines typically separate privileged training from visual transfer. Related asymmetric imitation methods use a privileged expert to guide a policy that operates from limited observations
\citep{warrington2021robustasymmetriclearningpomdps}. Recent methods reduce this separation through joint teacher-student training or adaptive data sharing
\citep{wu2025learnteachsampleefficientprivileged,
wang2024cts,
messikommer2025studentinformed},
but still organize learning around distinct teacher and student policies and do not directly use their current behavioral agreement to determine which observation pathway collects on-policy experience. \method{} instead uses privileged-state and RGB-D paths within a single actor and regulates their rollout use according to policy disagreement, limiting premature RGB-D rollout collection while allowing the RGB-D path to learn directly from task rewards. By reducing low-quality RGB-D rollouts early in training, this design aims to improve sample efficiency while preserving or improving RGB-D test-time success.

\paragraph{Asymmetric Actor--Critic and Privileged Critics.}
Asymmetric actor--critic methods provide privileged information to the critic while training the actor from observations available at test time
\citep{pinto2017asymmetricactorcriticimagebased,
yue2022aaccasymmetricactorcriticcontextual,
lambrechts2025theoreticaljustificationasymmetricactorcritic}.
The privileged critic can improve sample efficiency by providing more informative value estimates without making the actor depend on privileged information at test time.
Related dual-critic methods combine privileged-state and observation-based critics to improve learning under partial observability
\citep{li2024dual}.
However, the actor must still learn perception and control jointly from RGB-D from the start of training, so an immature visual pathway can generate low-quality on-policy experience and slow learning.
\method{} retains a privileged-state critic, but additionally provides privileged-state and RGB-D actor paths and regulates which path generates rollouts according to their policy disagreement. This aims to reduce inefficient early RGB-D experience while improving RGB-D-only success at test time.

\paragraph{Privileged-Information-Guided Learning and Visual RL.}
Privileged information has been used during training to guide visual attention, strengthen auxiliary learning components, or regularize reliance on privileged inputs
\citep{salter2021attentionprivilegedreinforcementlearning,
hu2024privilegedsensingscaffoldsreinforcement,
kamienny2020privilegedinformationdropoutreinforcement}.
These methods can improve sample efficiency and final visual-policy performance, but they do not explicitly align privileged-state and RGB-D actor paths at the level of action selection or use their agreement to control rollout generation.

Separately, visual RL methods improve learning from pixels through reconstruction, latent dynamics models, contrastive representation learning, and data augmentation
\citep{yarats2020improvingsampleefficiencymodelfree,
lee2020stochasticlatentactorcriticdeep,
srinivas2020curlcontrastiveunsupervisedrepresentations,
laskin2020reinforcementlearningaugmenteddata,
yarats2021masteringvisualcontinuouscontrol}.
These methods directly optimize the test-time visual policy and can improve both learning speed and final success, but still rely on RGB-D-generated experience from the start of training.
\method{} combines privileged-state and RGB-D actor paths, using action-level alignment and policy disagreement to limit inefficient early RGB-D rollouts while continuing to optimize RGB-D-only test-time performance.

\paragraph{Fusion and Modality Routing.}
Gated fusion, Transformer-based attention, multimodal transformers, FiLM conditioning, and representation switching provide useful mechanisms for combining or selecting representations
\citep{arevalo2017gatedmultimodalunitsinformation,
vaswani2017attentionneed,
tsai2019multimodaltransformerunalignedmultimodal,
perez2017filmvisualreasoninggeneral,
vanseijen2008switching}.
However, they do not address replacing training-only privileged state with RGB-D-based action selection at test time.

\section{Methodology}
\label{sec:method}

\subsection{Problem setup and method overview}
At environment step \(t\), the simulator provides
\(\mathbf{x}_t=[\mathbf{s}_t^{\mathrm{priv}},\mathbf{s}_t^{\mathrm{prop}},o_t,\mathbf{a}_{t-1}]\),
where \(\mathbf{s}_t^{\mathrm{priv}}\) is privileged simulator state, \(\mathbf{s}_t^{\mathrm{prop}}\) is proprioceptive state, \(o_t\) is an RGB-D observation, and \(\mathbf{a}_{t-1}\) is the previous action. At test time, privileged state is unavailable, so the policy acts from
\(\mathbf{y}_t=[\mathbf{s}_t^{\mathrm{prop}},o_t,\mathbf{a}_{t-1}]\).

\method uses a state encoder $E_s$, an RGB-D encoder $E_o$, an actor
$\pi_\theta$, and a critic $V_\phi$. The encoders produce
\begin{equation}
\label{eq:encoders}
\mathbf{z}_{t,o}=E_o(o_t),\qquad
\mathbf{z}_{t,s}=E_s(\mathbf{s}_t^{\mathrm{priv}}).
\end{equation}

A shared fusion module combines the modality latent with proprioception
$\mathbf{s}_t^{\mathrm{prop}}$ and the previous action $\mathbf{a}_{t-1}$
before it is passed to the actor or critic. We absorb this module into the
actor and critic notation below for compactness.

As shown in Figure~\ref{fig:focus-ideas}, the KL-guided switch
selects either the privileged-state latent $\mathbf{z}_{t,s}$ or the
RGB-D latent $\mathbf{z}_{t,o}$ for the current actor rollout.
We denote the selected latent by $\mathbf{z}_t^{\mathrm{selected}}$.
During training, the actor therefore generates each rollout according to
\begin{equation}
\pi_\theta\!\left(
\cdot\mid
\mathbf{z}_t^{\mathrm{selected}},
\mathbf{s}_t^{\mathrm{prop}},
\mathbf{a}_{t-1}
\right),
\end{equation}
while the critic remains anchored to privileged state,
\begin{equation}
V_\phi\!\left(
\mathbf{z}_{t,s},
\mathbf{s}_t^{\mathrm{prop}},
\mathbf{a}_{t-1}
\right).
\end{equation}
This retains privileged information for value estimation while regulating when the RGB-D pathway generates on-policy experience.

\method aligns privileged-state $\mathbf{z}_{t,s}$ and RGB-D $\mathbf{z}_{t,o}$
representations during training so that their fused actor inputs induce compatible
action distributions, enabling RGB-D-based action selection at test time. It
combines KL-guided RGB-D exposure, representation alignment that encourages
matched privileged-state and RGB-D actor inputs to induce similar actions, and
an additional control-agreement objective that regularizes agreement in policy
outputs and value estimates.

\begin{figure}[t]
\centering
\begin{subfigure}[t]{1.1\linewidth}
    \hspace*{-0.05\linewidth}
    \centering
    \includegraphics[width=\linewidth]{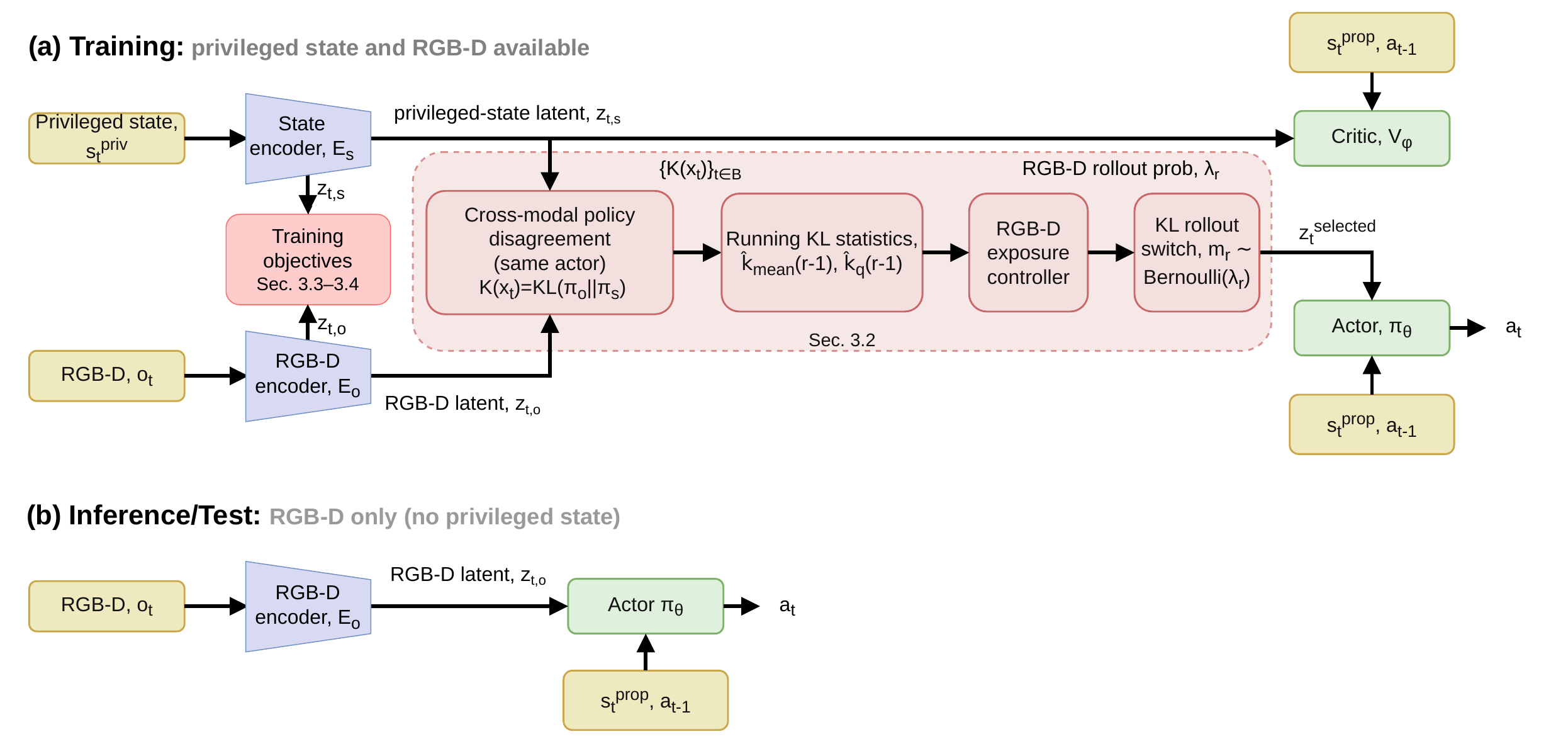}
\end{subfigure}

\vspace{0.6em}

\caption{\textbf{Overview of \method.} During training, cross-modal policy disagreement is aggregated into running statistics that determine the RGB-D exposure probability
$\lambda_r$ for subsequent actor rollouts. A KL-guided rollout switch samples either the privileged-state or RGB-D pathway and keeps the selected pathway fixed throughout the rollout, while the critic remains anchored to privileged state. At inference, privileged state and the
switching mechanism are removed, and the actor operates only from RGB-D, proprioception, and the previous action.}
\label{fig:focus-ideas}
\end{figure}

\subsection{KL-guided modality switching}
\label{sec:method:switch}

\method controls which observation pathway generates each actor rollout. For each rollout, it uses cross-modal policy disagreement to determine the probability of selecting the RGB-D pathway, and keeps the selected modality fixed throughout the rollout. It measures how much the actor's action distribution changes when the privileged-state input is replaced by the RGB-D input, and converts this disagreement into an RGB-D rollout probability \(\lambda_r\). A single Bernoulli variable is sampled per rollout and held fixed throughout that rollout, so the actor does not switch observation pathways between timesteps. We use this KL only as a measure of cross-modal behavioral compatibility, not as a certificate of task competence.

For each sampled transition, the same actor can be evaluated using either
the privileged-state or RGB-D latent while keeping proprioception and the
previous action fixed. We denote the resulting action distributions by
\begin{equation}
\pi_s(\cdot\mid t)
=
\pi_\theta\!\left(
\cdot\mid
\mathbf{z}_{t,s},
\mathbf{s}_t^{\mathrm{prop}},
\mathbf{a}_{t-1}
\right),
\end{equation}
and
\begin{equation}
\pi_o(\cdot\mid t)
=
\pi_\theta\!\left(
\cdot\mid
\mathbf{z}_{t,o},
\mathbf{s}_t^{\mathrm{prop}},
\mathbf{a}_{t-1}
\right).
\end{equation}
We use the privileged-state pathway as a training-time anchor and compute,
for each timestep in the collected rollout, the policy disagreement
induced by replacing it with the RGB-D pathway:
\begin{equation}
\label{eq:policy-kl}
K(\mathbf{x}_t)
=
\mathrm{KL}\!\left(
\pi_o(\cdot\mid t)
\,\Vert\,
\pi_s(\cdot\mid t)
\right).
\end{equation}
The resulting timestep-wise KL values are aggregated into running
statistics that determine the rollout-level RGB-D exposure probability.

At the beginning of rollout \(r\), the switch uses running estimates of
cross-modal disagreement obtained from the preceding rollout updates.
Let \(\hat{k}_{\mathrm{mean}}(r-1)\) and \(\hat{k}_{q}(r-1)\) denote
the running mean and upper-\(q\)-quantile estimates of
\(K(\mathbf{x}_t)\), with \(q=0.9\) in our experiments.
These statistics determine two RGB-D exposure factors for rollout \(r\):
\begin{equation}
\label{eq:kl-budget}
\begin{aligned}
\lambda_{\mathrm{mean}}(r)
&=
\min\!\left\{
\frac{\varepsilon_{\mathrm{mean}}}
{\max(\hat{k}_{\mathrm{mean}}(r-1),K_{\min})},\,1
\right\},\\
\lambda_q(r)
&=
\min\!\left\{
\frac{\varepsilon_q}
{\max(\hat{k}_{q}(r-1),K_{\min})},\,1
\right\},\\
\end{aligned}
\end{equation}
where \(\varepsilon_{\mathrm{mean}}\) and \(\varepsilon_q\) are fixed hyperparameters specifying the mean and upper-quantile disagreement levels at which RGB-D exposure begins to be reduced, and \(K_{\min}>0\) is a numerical floor. When the measured disagreement is at or below the corresponding \(\varepsilon\), the exposure factor is one and when above it, exposure decreases inversely with disagreement. For example, disagreement equal to \(2\varepsilon\) yields an exposure factor of \(0.5\). The upper-quantile term therefore limits RGB-D exposure when a subset of states still exhibits large cross-modal disagreement. We combine this KL-controlled exposure with a warmup schedule \(w(r)\) and a maximum exposure \(\lambda_{\max}\), and smooth the resulting target probability over training:
\begin{equation}
\lambda_{\text{target}}(r)
=
\min\{w(r)\lambda_{\max},\lambda_{\mathrm{mean}}(r),\lambda_q(r)\}.
\end{equation}
The controller state is smoothed at each controller update \(j\) as
\begin{equation}
\lambda^{(j+1)}
=
(1-\alpha)\lambda^{(j)}
+\alpha\,\lambda_{\text{target}}(r).
\end{equation}
We denote by \(\lambda_r\) the current controller value when rollout \(r\) is sampled.
RGB-D exposure therefore increases when the two modalities agree and decreases when mismatch becomes too large. The quantile term limits RGB-D exposure when a subset of samples still exhibits large policy disagreement.

At the start of each rollout, we sample a single Bernoulli variable
\(m_r\sim\mathrm{Bernoulli}(\lambda_r)\) and keep it fixed throughout
the rollout. The actor then receives
\begin{equation}
\label{eq:method:hard-switch}
\mathbf{z}^{\mathrm{selected}}_t
=
m_r\,\mathbf{z}_{t,o}
+
(1-m_r)\,\mathbf{z}_{t,s}.
\end{equation}
This keeps the actor input modality fixed within each rollout, providing
regulated RGB-D exposure without mixing privileged-state- and RGB-D-based
actions within the same trajectory.

During optimization of rollout \(r\), we evaluate the cross-modal policy
disagreement over its minibatches and update
$\hat{k}_{\mathrm{mean}}(r)$ and $\hat{k}_{q}(r)$.
Once the rollout update is complete, these running statistics inform the
RGB-D exposure probability for the next rollout.

\subsection{Representation alignment objective}
\label{sec:method:repr}

The representation objective trains the RGB-D pathway to induce control
behavior compatible with the privileged-state pathway. Our primary term
is a policy-output alignment loss. 
For corresponding privileged-state and RGB-D inputs, we evaluate the
same actor using $\mathbf{z}_{t,s}$ and $\mathbf{z}_{t,o}$ while keeping
proprioception and the previous action fixed, and denote the resulting
action distributions by $\pi_s$ and $\pi_o$. During this loss, the actor and fusion network are
frozen, so the RGB-D encoder is updated to reduce their policy disagreement:
\begin{equation}
\mathcal{L}_{\pi,\mathrm{repr}}
=
\tfrac12\mathrm{KL}(\pi_s\|\pi_o)
+
\tfrac12\mathrm{KL}(\pi_o\|\pi_s).
\end{equation}
This objective reduces cross-modal policy disagreement. Its
$\mathrm{KL}(\pi_o\|\pi_s)$ component corresponds to the directional
policy disagreement monitored by the KL-guided switch.

The full formulation additionally includes auxiliary representation
regularizers. When enabled, a cross-attention aligner \(f_\varphi\) produces
paired aligned targets
\begin{equation}
(\hat{\mathbf{z}}_{t,s},\hat{\mathbf{z}}_{t,o})
=
f_\varphi(\mathbf{z}_{t,s},\mathbf{z}_{t,o}),
\end{equation}
which support an aligner-guided auxiliary loss
\(\mathcal{L}_{\mathrm{align}}\). We additionally use raw-latent
regularization \(\mathcal{L}_{\mathrm{raw}}\) and a decoder objective
\(\mathcal{L}_{\mathrm{dec}}\).

The full objective is
\begin{equation}
\label{eq:align}
\mathcal{L}_{\mathrm{repr}}
=
\lambda_{\pi,\mathrm{repr}}\mathcal{L}_{\pi,\mathrm{repr}}
+
\mathcal{L}_{\mathrm{align}}
+
\mathcal{L}_{\mathrm{raw}}
+
\mathcal{L}_{\mathrm{dec}} .
\end{equation}

\subsection{Cross-modal control regularization}
\label{sec:method:ctrl}

Unlike the representation objective in Section~\ref{sec:method:repr},
which freezes the actor and fusion module and updates the RGB-D
representation pathway, these auxiliary terms regularize the control pathway
during RL optimization. They encourage the privileged-state and RGB-D
pathways to induce compatible policy outputs and value predictions, using the
privileged-state outputs as stop-gradient anchors.

We denote by $V_s$ and $V_o$ the value predictions obtained by evaluating the
critic from the privileged-state and RGB-D pathways, respectively. The
RGB-D-conditioned value $V_o$ is used only for this auxiliary agreement
objective. Rollout value estimation, return computation, and PPO advantages
remain anchored to $V_s$. We define
\begin{align}
\mathcal{L}_{\pi}
&=
\mathbb{E}\!\left[
\mathrm{KL}(\pi_s \Vert \pi_o)
\right], \\
\mathcal{L}_{V}
&=
\tfrac{1}{2}\,
\mathbb{E}\!\left[
(V_s - V_o)^2
\right].
\end{align}

When enabled, these agreement terms augment the PPO objective, with
$\lambda_{\pi}(t)$ and $\lambda_V(t)$ denoting their corresponding scheduled
weights:
\begin{equation}
\label{eq:method:rl-full}
\mathcal{L}_{\mathrm{RL}}
=
\mathcal{L}_{\mathrm{PPO}}
+\lambda_{\pi}(t)\mathcal{L}_{\pi}
+\lambda_{V}(t)\mathcal{L}_{V}.
\end{equation}

\subsection{Training procedure}
\label{sec:method:train}

Training uses a single on-policy PPO loop with separate RL and representation optimizers. The overall rollout flow is illustrated in Figure~\ref{fig:focus-ideas}, with the complete training procedure given in Appendix~\ref{app:algorithm_overview}.  
The RL optimizer steps the actor-critic and privileged-state pathway,
while the representation optimizer steps the RGB-D encoder and auxiliary
representation components. Implementation-level optimizer routing is
provided in Appendix~\ref{app:training_details}. For each rollout, the KL-guided switch selects either the privileged-state or RGB-D pathway for the actor and keeps that selection fixed throughout the rollout, while the critic remains anchored to the privileged-state pathway. Cross-modal policy disagreement measured during training updates the running statistics
used to regulate RGB-D exposure in subsequent rollouts.

\section{Experimental Setup}
\label{sec:expsetup}

\paragraph{Tasks and baselines.}
We evaluate on Reach-Cube, Lift-Cube, Push-Cube, Open-Drawer, and two-block Pick-and-Place (PnP) in IsaacLab (See Appendix \ref{app:protocol_details}). We compare \method against states-only as a privileged-observability reference, RGB-D-only, RGB-D-asymmetric, and a 3-stage teacher-student baseline that trains a privileged-state PPO teacher, distills an RGB-D student from simulator rollouts, and then fine-tunes the student with PPO. These baselines are matched IsaacLab instantiations of the corresponding paradigms rather than direct reproductions of every specialized prior system.

\paragraph{Protocol and metrics.}

All methods are trained with two training seeds and evaluated with five
evaluation seeds per checkpoint. Entries report mean \(\pm\) standard
deviation over training seeds after averaging evaluation seeds. Test
performance is measured by test episode success rate over a fixed
9000-step evaluation rollout. End-to-end training efficiency is measured
primarily by budget-normalized \textit{Training Success AUC},
computed as
\(\frac{1}{T}\int_0^T s(t)\,dt\),
where \(s(t)\) is training success occupancy and \(T\) is the
task-specific training horizon. This metric is computed over the complete
training procedure: for \method, both privileged-state and RGB-D rollouts
are included, while teacher--student includes teacher PPO, RGB-D
distillation, and student PPO fine-tuning. \textit{Train Step@50} is
reported as a supplementary diagnostic in Appendix~\ref{app:steps50}.
Appendix~\ref{app:diagnostics} lists additional diagnostics and metrics
implemented in IsaacLab.

\paragraph{Implementation details.}
All methods use a single \(128\times128\) RGB-D camera with matched preprocessing and shared core implementation choices unless otherwise stated (See Appendix \ref{app:implementation_details}). Tables~\ref{tab:arch-summary} and~\ref{tab:training-defaults-main} summarize architecture and hyperparameters.

\subsection{Main Results}
\label{subsec:main}

Table~\ref{tab:per-task-test-success} reports test success, Table~\ref{tab:success-occupancy-auc} reports budget-normalized \textit{Training Success AUC}, and Figure~\ref{fig:train-success-vs-steps} shows training curves on the more discriminative tasks. Supplementary \textit{Train Step@50} results and full five-task training curves are provided in Appendices~\ref{app:steps50} and~\ref{app:training-curves}, respectively.

On Lift-Cube, \method improves test success from 0.35 to 0.98 and normalized training success AUC from 0.16 to 0.45. On Push-Cube, it improves test success from 0.75 to 0.85 and normalized training success AUC from 0.29 to 0.34. On Pick-and-Place, it improves test success from 0.47 to 0.86 and normalized training success AUC from 0.12 to 0.61, a 5.0$\times$ increase in this end-to-end training-efficiency metric. Reach-Cube is solved by all methods, while Open-Drawer is less
diagnostic because dense reward shaping lets several methods approach saturation.

Across the evaluation tasks, \method{} improves both average RGB-D test success
and average end-to-end training efficiency. Relative to the strongest
RGB-D-at-test baseline on each task, average test success improves from
0.71 to 0.93. When comparing complete training pipelines,
budget-normalized Training Success AUC improves from 0.47 to 0.65.
These averages exclude the states-only reference, which uses privileged
state at test time.

The states-only policy is a privileged-observability reference rather than an oracle upper bound. On Lift-Cube, \method attains higher mean success than this reference, but the two states-only training runs show substantial variation. Given the limited number of training seeds, we
do not interpret this gap as evidence that RGB-D observations are intrinsically superior to privileged state. The asymmetric actor-critic baseline also exhibits unstable behavior on some tasks, including low performance on Push-Cube and substantial variation on Pick-and-Place,
and the present evaluation does not isolate the source of this behavior. Teacher-student reaches high success during its final student PPO stage, but its end-to-end training-success AUC is lower when the preceding teacher and distillation stages are included.

\paragraph{Qualitative test-time behavior}
Figure~\ref{fig:test-rollouts} shows representative successful test-time
rollouts of \method{} on Lift-Cube, Push-Cube, and Pick-and-Place.
The sequences illustrate the manipulation behavior underlying the
quantitative success rates in Table~\ref{tab:per-task-test-success}.

\begin{figure*}[t]
    \centering

    \includegraphics[
        width=0.98\textwidth
    ]{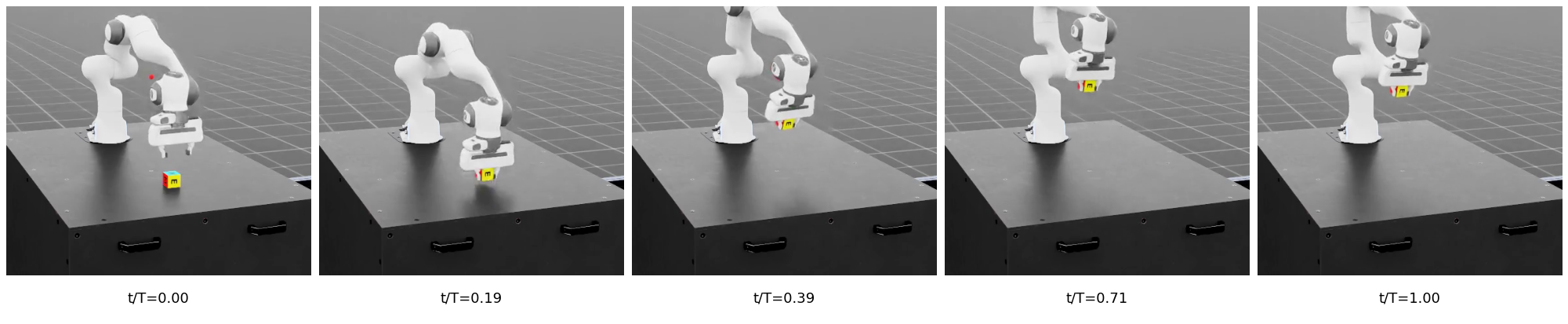}

    \vspace{0.3em}

    \includegraphics[
        width=0.98\textwidth
    ]{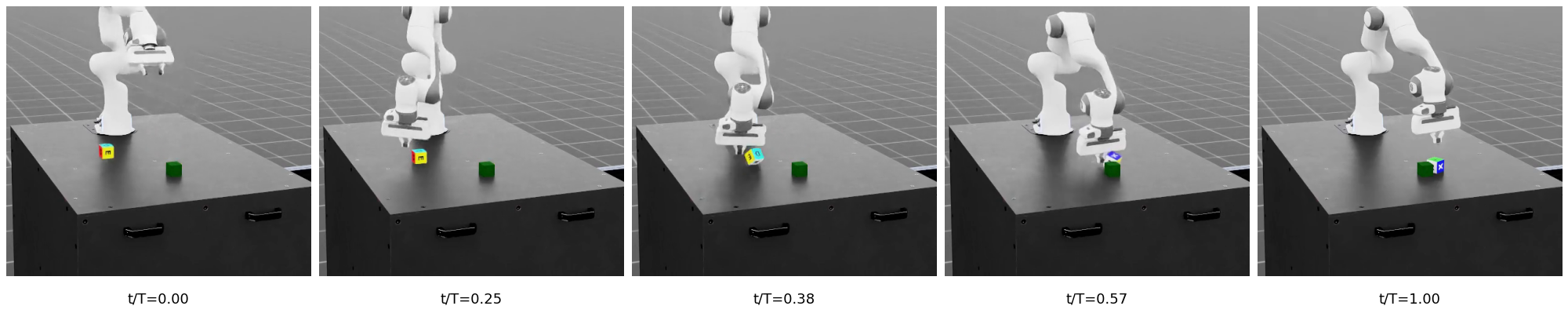}

    \vspace{0.3em}

    \includegraphics[
        width=0.98\textwidth
    ]{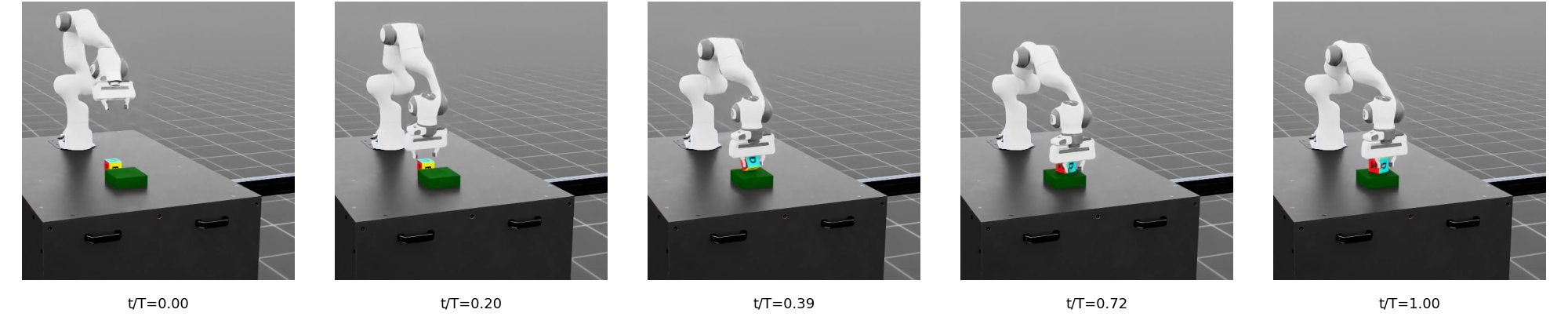}

    \caption{\textbf{Representative successful test-time rollouts of \method{}.}
    Sequential frames show \method{} executing Lift-Cube (top),
    Push-Cube (middle), and Pick-and-Place (bottom) using the deployment
    observation pathway. At test time, privileged state and modality
    switching are removed, and the policy acts from RGB-D observations,
    proprioception, and the previous action. Frames are labeled by
    normalized rollout time \(t/T\).}
    \label{fig:test-rollouts}
\end{figure*}

\begin{table}[t]
\caption{\textbf{Test episode success rate} over a 9k-step evaluation rollout. An episode is counted as successful if the task succeeds at least once before termination. Higher is better.}
\label{tab:per-task-test-success}
\centering
\scriptsize
\setlength{\tabcolsep}{2.5pt}
\renewcommand{\arraystretch}{1.08}
\begin{tabularx}{\linewidth}{@{}l *{5}{>{\centering\arraybackslash}X}@{}}
\toprule
\textbf{Method} & \textbf{Reach} & \textbf{Lift} & \textbf{Push} & \textbf{Open\,-\,Drawer} & \textbf{Pick\,-\,and\,-\,Place}$^{\dagger}$ \\
\midrule
states-only & 1.000 $\pm$ 0.000 & 0.555 $\pm$ 0.628 & 0.972 $\pm$ 0.040 & 1.000 $\pm$ 0.000 & 1.000 $\pm$ 0.000 \\
RGB-D-only & 1.000 $\pm$ 0.000 & 0.233 $\pm$ 0.024 & 0.753 $\pm$ 0.098 & 0.983 $\pm$ 0.024 & 0.453 $\pm$ 0.122 \\
RGB-D-asymmetric & 1.000 $\pm$ 0.000 & 0.350 $\pm$ 0.338 & 0.192 $\pm$ 0.059 & 0.811 $\pm$ 0.157 & 0.467 $\pm$ 0.660 \\
teacher-student (3-stage, final student) & 1.000 $\pm$ 0.000 & 0.181 $\pm$ 0.122 & 0.595 $\pm$ 0.032 & 0.978 $\pm$ 0.031 & 0.289 $\pm$ 0.110 \\
\midrule
\rowcolor{oursblue}
\textbf{\method} (ours) & 1.000 $\pm$ 0.000 & 0.975 $\pm$ 0.012 & 0.850 $\pm$ 0.071 & 0.978 $\pm$ 0.018 & 0.858 $\pm$ 0.069 \\
\bottomrule
\end{tabularx}
\vspace{2pt}
\raggedright\footnotesize
$^{\dagger}$ For Pick-and-Place, evaluation requires 4 consecutive on-target steps; training uses a one-step success signal.
\end{table}

\begin{table}[t]
\caption{\textbf{Budget-normalized Training Success AUC summarizes sample efficiency} over the full training trajectory. AUC is divided by each method's task-specific simulator-interaction horizon. For teacher-student, we report both the final student phase alone as a diagnostic and the comparable end-to-end 3-stage cost used in our conclusions. Higher is better.}
\label{tab:success-occupancy-auc}
\centering
\scriptsize
\setlength{\tabcolsep}{2.5pt}
\renewcommand{\arraystretch}{1.08}
\begin{tabularx}{\linewidth}{@{}l *{5}{>{\centering\arraybackslash}X}@{}}
\toprule
\textbf{Method} & \textbf{Reach} & \textbf{Lift} & \textbf{Push} & \textbf{Open\,-\,Drawer} & \textbf{Pick\,-\,and\,-\,Place} \\
\midrule
states-only & 0.966 $\pm$ 0.002 & 0.211 $\pm$ 0.268 & 0.447 $\pm$ 0.007 & 0.912 $\pm$ 0.016 & 0.656 $\pm$ 0.009 \\
RGB-D-only & 0.889 $\pm$ 0.003 & 0.023 $\pm$ 0.002 & 0.288 $\pm$ 0.036 & 0.859 $\pm$ 0.021 & 0.094 $\pm$ 0.056 \\
RGB-D-asymmetric & 0.891 $\pm$ 0.020 & 0.158 $\pm$ 0.190 & 0.094 $\pm$ 0.003 & 0.886 $\pm$ 0.015 & 0.122 $\pm$ 0.171 \\
teacher-student (final student) & 0.958 $\pm$ 0.010 & 0.258 $\pm$ 0.321 & 0.233 $\pm$ 0.012 & 0.930 $\pm$ 0.013 & 0.155 $\pm$ 0.003 \\
teacher-student (3-stage, e2e) & 0.411 $\pm$ 0.004 & 0.111 $\pm$ 0.137 & 0.100 $\pm$ 0.005 & 0.399 $\pm$ 0.006 & 0.071 $\pm$ 0.001 \\
\midrule
\rowcolor{oursblue}
\textbf{\method} (ours) & 0.969 $\pm$ 0.001 & 0.451 $\pm$ 0.040 & 0.337 $\pm$ 0.167 & 0.864 $\pm$ 0.025 & 0.612 $\pm$ 0.037 \\
\bottomrule
\end{tabularx}
\end{table}

\begin{figure}[t]
    \centering

    \begin{subfigure}[t]{0.32\linewidth}
        \centering
        \includegraphics[width=\linewidth]{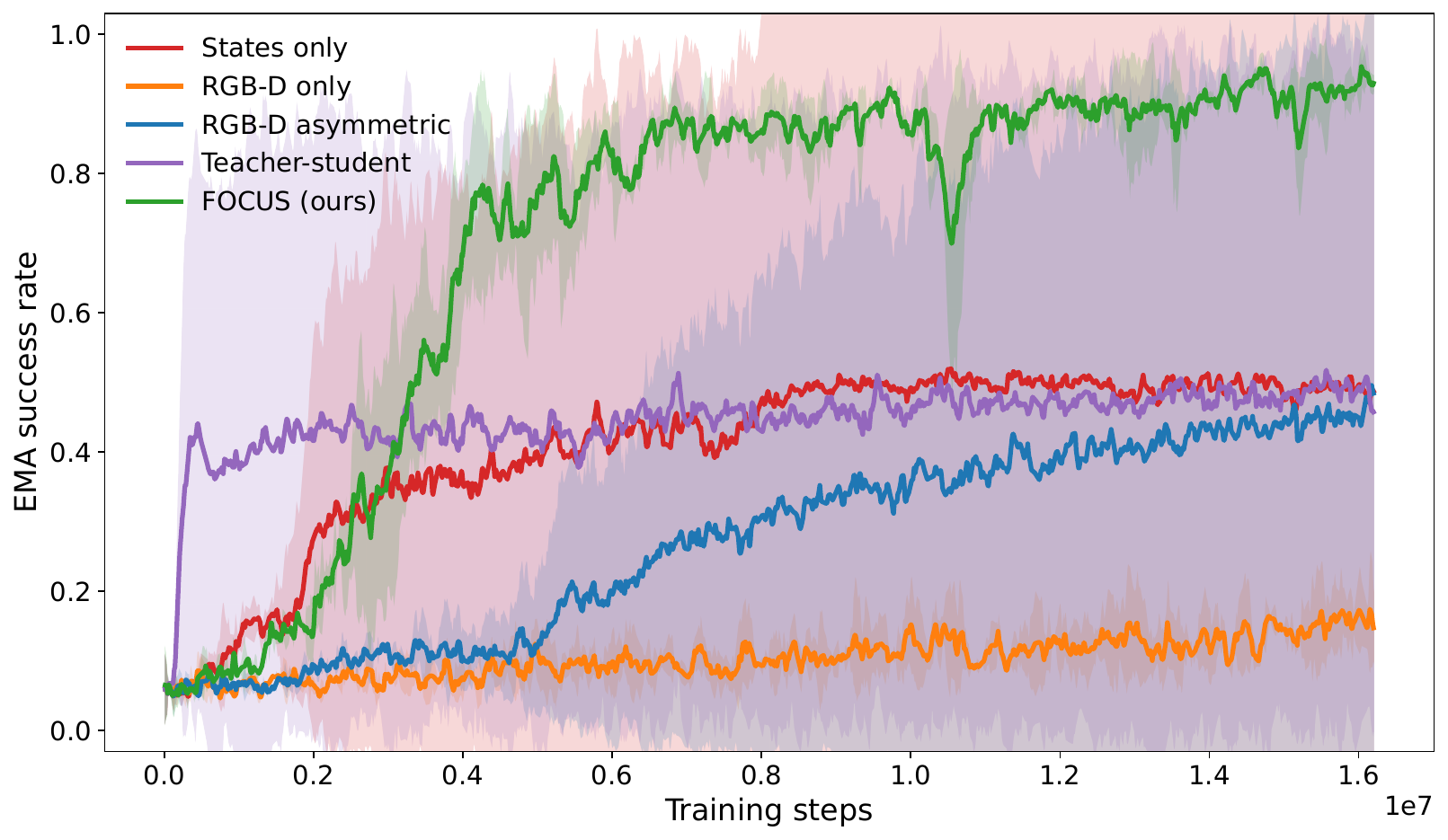}
        \caption{\textbf{Lift-Cube}}
        \label{fig:train-lift-success}
    \end{subfigure}\hfill
    \begin{subfigure}[t]{0.32\linewidth}
        \centering
        \includegraphics[width=\linewidth]{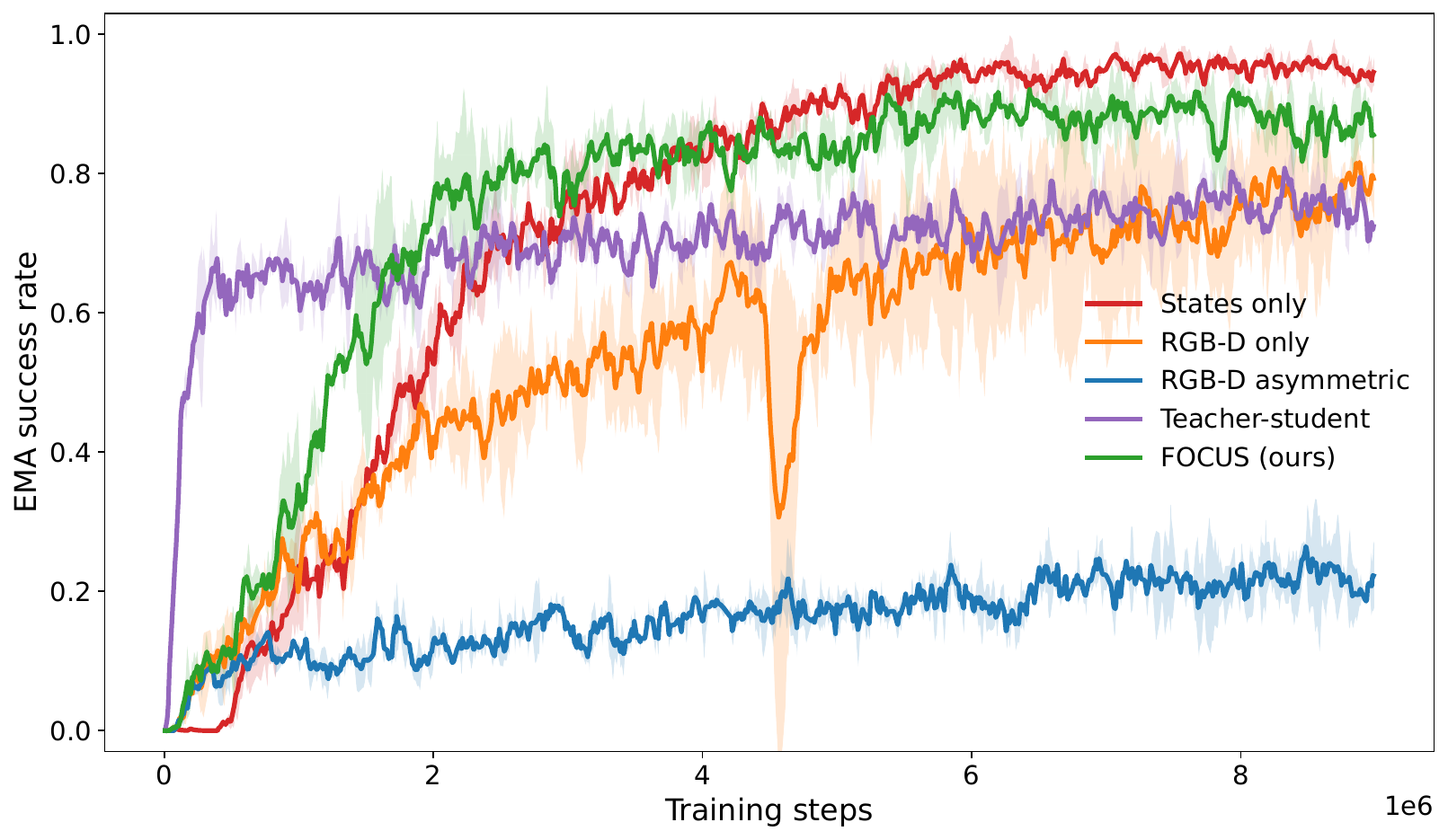}
        \caption{\textbf{Push-Cube}}
        \label{fig:train-push-success}
    \end{subfigure}\hfill
    \begin{subfigure}[t]{0.32\linewidth}
        \centering
        \includegraphics[width=\linewidth]{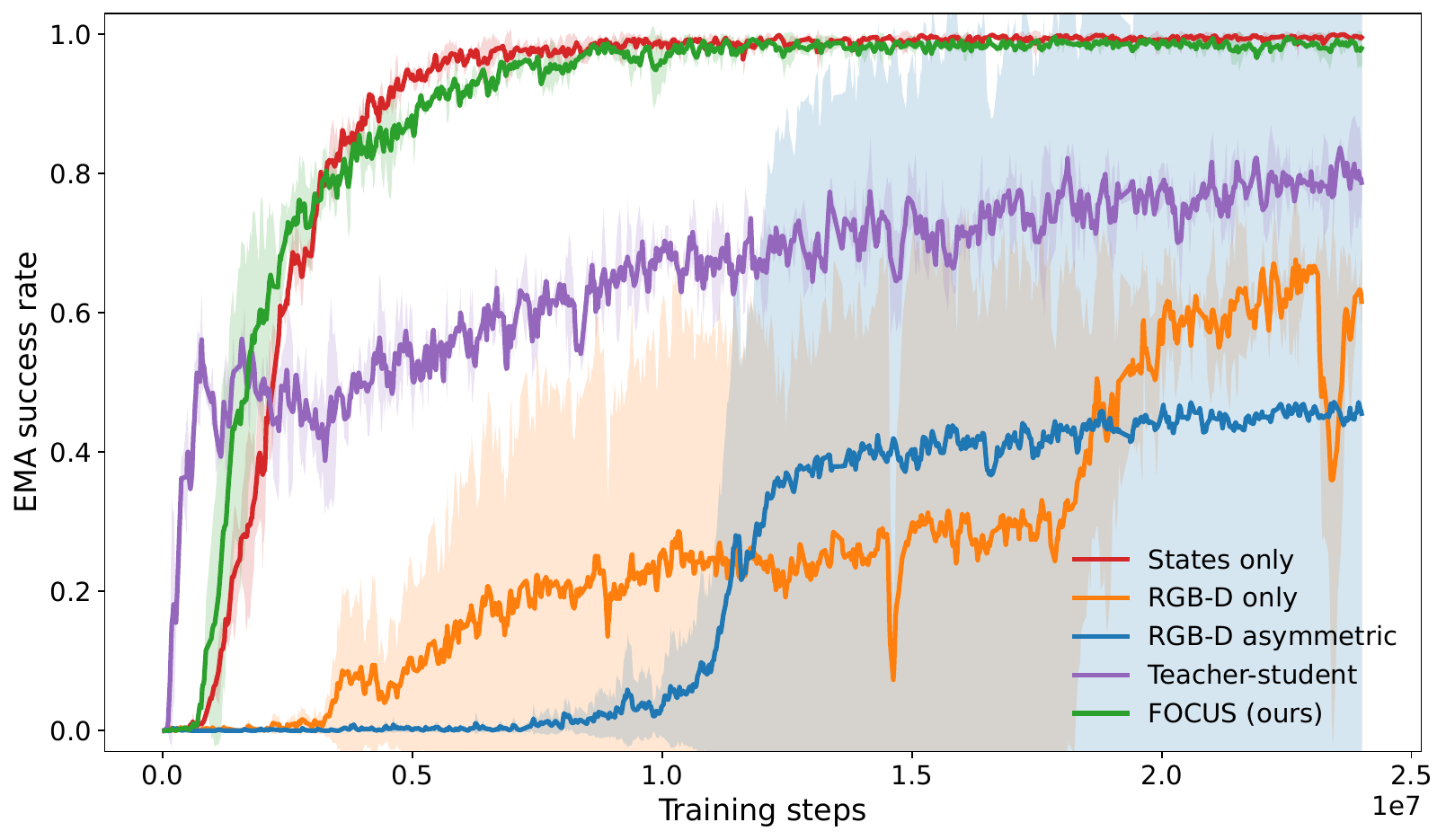}
        \caption{\textbf{Pick-and-Place}}
        \label{fig:train-pick-place-success}
    \end{subfigure}
    \caption{\textbf{Training success EMA on the more discriminative tasks.}
        Teacher--student curves show the final student PPO phase, while
        Table~\ref{tab:success-occupancy-auc} reports end-to-end normalized AUC.
        Full curves and Step@50 are provided in
        Appendices~\ref{app:training-curves} and~\ref{app:steps50}, respectively.}
    \label{fig:train-success-vs-steps}
\end{figure}

\subsection{Ablation Study}
\label{subsec:ablations}

We ablate the modality-switching and alignment components of \method. \textbf{Fixed schedule ramp} removes KL feedback from the modality switch and increases the RGB-D rollout probability according only to the predefined warmup schedule \(w(r)\), while retaining rollout-level
modality sampling. \textbf{w/o KL upper-quantile budget} retains KL-guided switching but
removes the upper-tail constraint, so RGB-D exposure is regulated only by the running mean policy disagreement, \(\lambda_{\mathrm{budget}}(r)=\lambda_{\mathrm{mean}}(r)\). \textbf{\method-simple} retains the KL-guided rollout switch and policy-output representation loss \(\mathcal{L}_{\pi,\mathrm{repr}}\), while removing the auxiliary representation and control-agreement terms.

Table~\ref{tab:ablations} summarizes ablations on the three more discriminative tasks. The main trend is that controlled RGB-D rollout exposure and policy-level image-state agreement are the most reliable ingredients in our evaluation. A fixed exposure ramp can reach high test success on some tasks, but gives lower normalized AUC on Pick-and-Place than \method{} (0.361 vs. 0.612) and lower test success on Push-Cube (0.822 vs. 0.850), indicating that adaptive exposure provides a better sample-efficiency and success trade-off than a predetermined schedule on these tasks.

Figure~\ref{fig:push_kl_switch_diagnostics} provides a representative
Push-Cube diagnostic. The KL-guided switch keeps RGB-D exposure from
saturating too early, which lowers exposure-weighted upper-tail KL mismatch
relative to a fixed ramp and a variant without the upper-quantile budget.
This controlled exposure coincides with higher training success, supporting
tail-aware control as a way to avoid premature reliance on RGB-D rollouts.

\method{}-simple keeps only the KL-guided switch and image-encoder policy-output KL term, disabling all other representation and RL agreement losses. It remains competitive, reaching 1.000 test success on Lift-Cube and 0.994 on Pick-and-Place. This ablation suggests that controlled visual exposure and image-encoder alignment through policy-output KL are the most reliable contributors in our evaluation. We therefore treat the remaining auxiliary losses as optional task-dependent terms within the \method{} formulation, rather than as universally required components.

\begin{table}[t]
\caption{
\textbf{Ablations on the three more discriminative tasks.}
All variants use the same training budget for Push/Lift/PnP (1500/1500/2500 training iterations), are trained with two seeds, and are evaluated with five test seeds. AUC is budget-normalized by each task horizon. Higher is better for test success and normalized AUC.
}
\label{tab:ablations}
\centering
\scriptsize
\setlength{\tabcolsep}{2.6pt}
\renewcommand{\arraystretch}{1.04}
\begin{tabularx}{\linewidth}{@{}p{0.31\linewidth} *{6}{>{\centering\arraybackslash}X}@{}}
\toprule
\multirow{2}{*}{\textbf{Variant}} &
\multicolumn{3}{c}{\textbf{Test success}} &
\multicolumn{3}{c}{\textbf{Norm. Training Success AUC}} \\
\cmidrule(lr){2-4}\cmidrule(lr){5-7}
& \textbf{Push} & \textbf{Lift} & \textbf{Pick\,-\,and\,-\,Place}$^{\dagger}$ 
& \textbf{Push} & \textbf{Lift} & \textbf{Pick\,-\,and\,-\,Place} \\
\midrule
\method{} 
& \cellstd{0.850}{0.071} & \cellstd{0.975}{0.012} & \cellstd{0.858}{0.069}
& \cellstd{0.337}{0.167} & \cellstd{0.451}{0.040} & \cellstd{0.612}{0.037} \\

\midrule
\multicolumn{7}{@{}l}{\textit{Training-objective ablations}} \\
w/o decoder 
& \cellstd{0.669}{0.090} & \cellstd{0.986}{0.020} & \cellstd{0.997}{0.003}
& \cellstd{0.406}{0.104} & \cellstd{0.472}{0.003} & \cellstd{0.666}{0.080} \\

w/o aligner ($f_\varphi$ bypass) 
& \cellstd{0.854}{0.038} & \cellstd{0.964}{0.011} & \cellstd{0.458}{0.458}
& \cellstd{0.445}{0.013} & \cellstd{0.464}{0.030} & \cellstd{0.283}{0.283} \\

w/o RL agreement 
& \cellstd{0.856}{0.032} & \cellstd{0.950}{0.063} & \cellstd{0.822}{0.050}
& \cellstd{0.345}{0.063} & \cellstd{0.415}{0.102} & \cellstd{0.631}{0.064} \\

w/o image policy KL 
& \cellstd{0.097}{0.043} & \cellstd{0.975}{0.027} & \cellstd{0.586}{0.242}
& \cellstd{0.422}{0.126} & \cellstd{0.266}{0.196} & \cellstd{0.580}{0.029} \\

\midrule
\multicolumn{7}{@{}l}{\textit{Switching and visual-pathway ablations}} \\
fixed schedule ramp 
& \cellstd{0.822}{0.031} & \cellstd{0.997}{0.004} & \cellstd{0.953}{0.042}
& \cellstd{0.281}{0.063} & \cellstd{0.184}{0.218} & \cellstd{0.361}{0.175} \\

w/o KL upper-quantile budget 
& \cellstd{0.653}{0.020} & \cellstd{0.970}{0.043} & \cellstd{0.961}{0.028}
& \cellstd{0.371}{0.096} & \cellstd{0.317}{0.216} & \cellstd{0.480}{0.155} \\

top-view only 
& \cellstd{0.670}{0.067} & \cellstd{0.756}{0.039} & \cellstd{0.922}{0.006}
& \cellstd{0.471}{0.039} & \cellstd{0.499}{0.003} & \cellstd{0.587}{0.069} \\

RGB-D encoder from scratch 
& \cellstd{0.716}{0.001} & \cellstd{0.950}{0.016} & \cellstd{0.983}{0.017}
& \cellstd{0.406}{0.014} & \cellstd{0.381}{0.020} & \cellstd{0.664}{0.020} \\

\midrule
\rowcolor{simplegray}
\method{}-simple: KL-guided switch + policy-output KL
& \cellstd{0.831}{0.138} & \cellstd{1.000}{0.000} & \cellstd{0.994}{0.008}
& \cellstd{0.411}{0.074} & \cellstd{0.529}{0.034} & \cellstd{0.568}{0.021} \\

\bottomrule
\end{tabularx}

\vspace{2pt}
\raggedright\footnotesize
$^{\dagger}$ For Pick-and-Place, evaluation requires 4 consecutive on-target steps; training uses a one-step success signal.
\end{table}

\begin{figure}[t]
    \centering
    \includegraphics[
        width=0.95\linewidth,
    ]{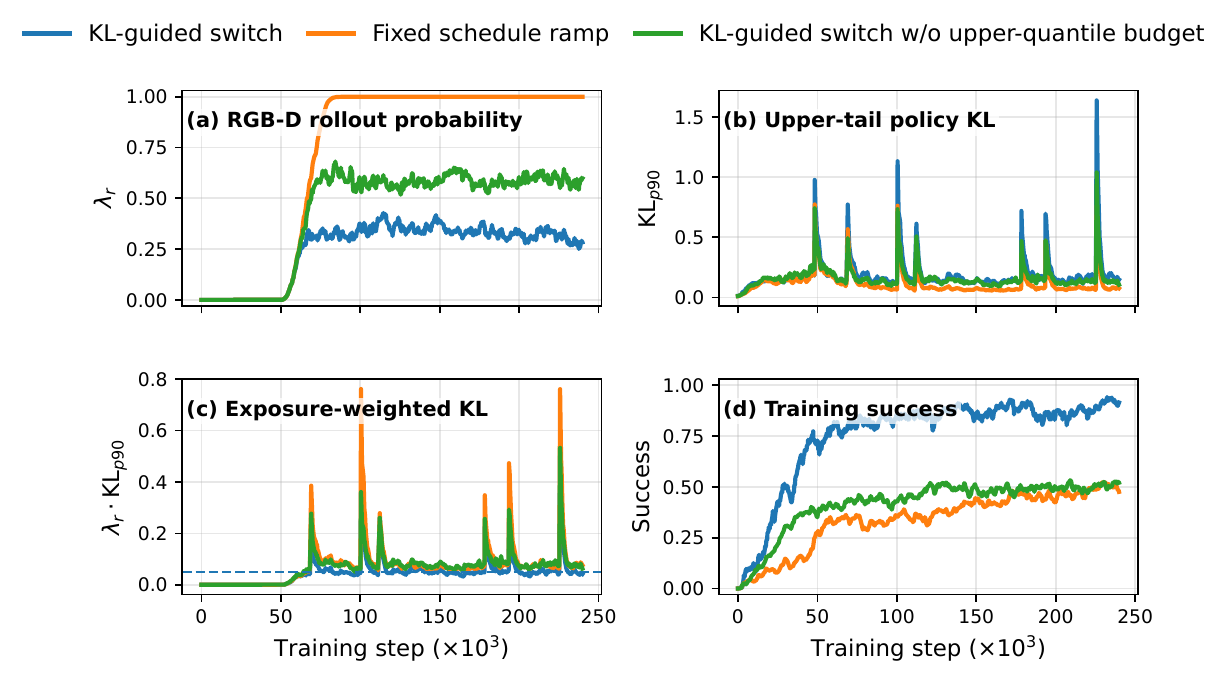}
    \caption{\textbf{Representative Push-Cube switch diagnostic.}
        The KL-guided switch reduces exposure-weighted upper-tail mismatch,
        \(\lambda_r \cdot \mathrm{KL}_{p90}\), relative to a fixed ramp and a variant
        without the upper-quantile budget, while improving training success.
        Here, \(\mathrm{KL}_{p90}\) denotes the 90th percentile of cross-modal policy KL,
        and \(\lambda_r\) is the Bernoulli probability that the actor uses RGB-D latents
        for an entire rollout.}
    \label{fig:push_kl_switch_diagnostics}
\end{figure}

\section{Conclusion}
\label{sec:conclusion}
We introduced FOCUS, a single-stage PPO-based framework that regulates
whether privileged-state or RGB-D latents generate on-policy actor rollouts
during training, while using RGB-D for action selection at test time.
Across five IsaacLab manipulation tasks, FOCUS improves average test success
relative to the strongest RGB-D-at-test baseline on each task from 0.71 to
0.93. When comparing complete training pipelines, budget-normalized training
success AUC improves from 0.47 to 0.65. Ablations suggest that controlled
RGB-D rollout exposure and image-encoder alignment through policy-output KL
are the most reliable contributors, while the remaining auxiliary losses
have task-dependent effects and motivate simpler future variants.

\section{Limitations}
\label{sec:limitations}

This study is limited to simulation, relatively short-horizon manipulation, and two training seeds, reflecting the engineering cost and long runtimes of full RGB-D PPO training. The method is also sensitive to task-specific reward engineering, which may affect the learned behavior and comparability across tasks. Some runs converge to locally successful but suboptimal behaviors. Given the limited evaluation suite, ablations should be interpreted as component-level trends rather than universal claims about which auxiliary losses are necessary. \method addresses the privileged-state-to-RGB-D observation gap and is orthogonal to the sim-to-real gap. Appearance shift, sensing noise, calibration error, latency, and dynamics mismatch would still require dedicated sim-to-real methods or real-world fine-tuning. Future work should evaluate \method{} on physical robots, broader manipulation suites, and less engineered training signals. More extensive task and seed coverage is also needed to isolate when the auxiliary alignment components are necessary versus when the lightweight \method{}-simple core is sufficient.

\clearpage

\acknowledgments{This work was carried out in the \href{https://www.embarclab.com/}{embARC Research Group} at the University of Toronto. This research is part of the University of Toronto's Acceleration Consortium, which receives funding from the CFREF-2022-00042 Canada First Research Excellence Fund. We thank Callum Sharrock for his contributions to this work during his internship in the lab.}

\bibliography{references}  

\clearpage

\appendix

\section*{Appendix Overview}
This appendix provides implementation and analysis details supporting the main paper. Appendix~\ref{app:algorithm_overview} gives the full training algorithm, Appendix~\ref{app:theory} analyzes the KL-guided modality switch, Appendix~\ref{app:repr-details} defines the representation-alignment objective, and Appendix~\ref{app:utils} describes latent post-processing. Appendix~\ref{app:protocol_details} gives task-suite and protocol details, and Appendix~\ref{app:implementation_details} summarizes implementation details including RGB-D preprocessing. We also include \textit{Train Step@50} results, full training curves, qualitative activation and latent visualizations, architecture and hyperparameter details, logged diagnostics, and task reward definitions for reproducibility.

\section{Algorithm overview}
\label{app:algorithm_overview}

For implementation-level detail, we explicitly denote by $g_\psi$ the
shared fusion module that combines a modality latent with proprioception
and the previous action before it is passed to the actor or critic:
\[
g_\psi:
\left(
\mathbf{z}_t,
\mathbf{s}_t^{\mathrm{prop}},
\mathbf{a}_{t-1}
\right)
\mapsto
\tilde{\mathbf{z}}_t.
\]
The same fusion module $g_\psi$, with shared parameters $\psi$, is used
for both actor and critic inputs and across the privileged-state and RGB-D
pathways. The actor receives the selected modality latent, whereas the
critic always receives the privileged-state latent. In the main text, we
absorb this implementation detail into the actor and critic notation.

\begingroup
\SetAlFnt{\footnotesize}
\setlength{\algomargin}{1em}
\SetAlgoInsideSkip{smallskip}
\SetAlgoSkip{smallskip}
\SetNlSty{}{}{}
\SetInd{0.4em}{0.9em}
\DontPrintSemicolon

\begin{algorithm}[H]
\caption{\textbf{Training Recipe for FOCUS}}
\label{alg:focus_overview}

\BlankLine

\textbf{Act (rollout collection).}\;
$\lambda_r \leftarrow \textsc{CurrentModalityProbability}()$\;
$m_r \sim \mathrm{Bernoulli}(\lambda_r)$ \tcp*[f]{snapshot current smoothed controller state}\;
\For{$t=1,\dots,T$}{
  observe $(\mathbf{s}^{priv}_t,\mathbf{s}^{prop}_t,o_t,\mathbf{a}_{t-1})$\;
  $z_{s,t}\leftarrow E_s(\mathbf{s}^{priv}_t)$;\quad $z_{o,t}\leftarrow E_o(o_t)$\;
  $z_t^{\text{pre}}\leftarrow (1-m_r)z_{s,t}+m_r z_{o,t}$\;
$\tilde z_t^{act}\leftarrow g_\psi(z_t^{\text{pre}},\mathbf{s}^{prop}_t,\mathbf{a}_{t-1})$\;
$\tilde z_t^{crt}\leftarrow g_\psi(z_{s,t},\mathbf{s}^{prop}_t,\mathbf{a}_{t-1})$\;
$a_t\sim \pi_\theta(\cdot\mid \tilde z_t^{act})$\;
  step env; store $(m_r,\lambda_r,\mathbf{s}^{priv}_t,o_t,\mathbf{s}^{prop}_t,\mathbf{a}_{t-1},a_t,r_t,d_t)$\;
}
compute returns/advantages using state-anchored critic\;

\BlankLine
\textbf{Update (on-policy optimization).}\;
\For{PPO epochs/minibatches}{
  load minibatch (incl.\ stored $m_r$ and $\lambda_r$)\;
  recompute $(z_{s,t},z_{o,t})$ and form $(\tilde z_t^{act},\tilde z_t^{crt})$ using stored $m_r$\;
  update \(\{E_o,f_\varphi,D_s\}\) with \(\mathcal{L}_{\pi,\mathrm{repr}}\) 
        (+ optional \(\mathcal{L}_{\mathrm{align}},\mathcal{L}_{\mathrm{raw}},\mathcal{L}_{\mathrm{dec}}\))\;
  update \(\{\pi_\theta,g_\psi,E_s\}\) with \(\mathcal{L}_{\mathrm{PPO}}\)
  (+ optional \(\mathcal{L}_{\pi},\mathcal{L}_{V}\))\;
  update $(k_{\text{mean}},k_{q90})$ from $\mathrm{KL}(\pi_o\|\pi_s)$\;
}
\end{algorithm}
\endgroup

\section{KL-Controlled Modality Switching}
\label{app:theory}

This appendix characterizes policy divergence induced by stochastic
modality switching and motivates KL-budgeted exposure control.

\subsection{Setup}
\label{app:setup}

For the theoretical analysis, we make the implementation-level shared
fusion module explicit so that we can distinguish raw encoder latents
from the effective inputs presented to the actor.

We analyze the policy as a function of its \emph{effective actor input}. At timestep \(t\), let
\[
\tilde{\mathbf z}^{\,s}_t
:= g_\psi(\mathbf z_{t,s}, \mathbf s_t^{\mathrm{prop}}, \mathbf a_{t-1}),
\qquad
\tilde{\mathbf z}^{\,o}_t
:= g_\psi(\mathbf z_{t,o}, \mathbf s_t^{\mathrm{prop}}, \mathbf a_{t-1}),
\]
denote the fused actor inputs obtained from the state and RGB-D pathways.

We consider a shared diagonal-Gaussian policy
\[
\pi(\cdot\mid \tilde{\mathbf z})=\mathcal{N}(\mu(\tilde{\mathbf z}),\Sigma),
\]
where the mean \(\mu(\tilde{\mathbf z})\) depends on the effective actor input \(\tilde{\mathbf z}\), while the covariance
\(\Sigma\succ 0\) is shared across modalities. In our implementation, \(\Sigma\) is diagonal and is parameterized by a learned global standard-deviation parameter, so it does not depend on the input. Let \(\sigma_{\min}^2\) denote the smallest eigenvalue of \(\Sigma\).

Define the fused modality gap
\[
\Delta \tilde{\mathbf z}_t
:=
\tilde{\mathbf z}^{\,o}_t-\tilde{\mathbf z}^{\,s}_t.
\]
These inputs induce the modality-conditioned policies
\[
\pi_s(\cdot\mid t):=\pi(\cdot\mid \tilde{\mathbf z}^{\,s}_t),
\qquad
\pi_o(\cdot\mid t):=\pi(\cdot\mid \tilde{\mathbf z}^{\,o}_t).
\]

\subsection{Stochastic switching}
\label{app:bernoulli}

For analysis, we consider Bernoulli switching at a generic timestep \(t\). In implementation, the sampled mask is held fixed over an entire rollout \(r\).

Using the rollout-wise mask
\[
m_r\sim\mathrm{Bernoulli}(\lambda_r),
\]
define the switched effective actor input
\[
\tilde{\mathbf z}^{\,\mathrm{pre}}_t
=
(1-m_r)\tilde{\mathbf z}^{\,s}_t + m_r \tilde{\mathbf z}^{\,o}_t
=
\tilde{\mathbf z}^{\,s}_t + m_r\,\Delta \tilde{\mathbf z}_t.
\]

Defining \(K_t \equiv K(\mathbf{x}_t)\), with
\[
K_t
:=
\mathrm{KL}\!\left(\pi_o(\cdot\mid t)\Vert\pi_s(\cdot\mid t)\right),
\]
the expected divergence satisfies
\[
\mathbb E_{m_r}\!\left[
\mathrm{KL}\!\left(
\pi(\cdot\mid \tilde{\mathbf z}^{\,\mathrm{pre}}_t)
\Vert
\pi_s(\cdot\mid t)
\right)
\right]
=
\lambda_r\,K_t,
\]
and is therefore linear in the switching probability. The auxiliary result below relates \(K_t\) to fused-input mismatch under a local Lipschitz assumption.

\subsection{KL-budgeted exposure control}
\label{app:budget}

At the beginning of rollout \(r\), let
\(\hat{k}_{\mathrm{mean}}(r-1)\) and \(\hat{k}_{q}(r-1)\) denote the
running estimates of the mean and an upper quantile (e.g., \(q=0.9\))
of \(K_t\) obtained from preceding rollout updates.
Given budgets \(\varepsilon_{\mathrm{mean}},\varepsilon_q>0\), define
\[
\lambda_{\mathrm{budget}}(r)
=
\min\!\left\{
\frac{\varepsilon_{\mathrm{mean}}}{\max(\hat{k}_{\mathrm{mean}}(r - 1),K_{\min})},
\frac{\varepsilon_q}{\max(\hat{k}_{q}(r - 1),K_{\min})},
1
\right\}.
\]

With warmup \(w(r)\in[0,1]\) and cap \(\lambda_{\max}\le1\), the target exposure is
\[
\lambda_{\mathrm{target}}(r)
=
\min\{w(r)\lambda_{\max},\,\lambda_{\mathrm{budget}}(r)\},
\]
and the controller state is smoothed at each controller update \(j\) via
\[
\lambda^{(j+1)}
=
(1-\alpha)\lambda^{(j)}
+\alpha\,\lambda_{\mathrm{target}}(r).
\]
For rollout \(r\), let \(j_r\) denote the controller update immediately
preceding rollout sampling, and define
\[
\lambda_r := \lambda^{(j_r)}.
\]

\begin{proposition}[Exposure-weighted KL control]
\label{prop:tail-budget}
Consider a rollout batch at rollout \(r\), and let
\[
K_t =
\mathrm{KL}\!\left(\pi_o(\cdot\mid t)\Vert \pi_s(\cdot\mid t)\right)
\]
denote the cross-modal policy mismatch for the matched state and RGB-D
observations at timestep \(t\). For the Bernoulli switch
\(m_r\sim\mathrm{Bernoulli}(\lambda_r)\),
\[
\mathbb E_{m_r}
\left[
\mathrm{KL}\!\left(
\pi(\cdot\mid \tilde{\mathbf z}^{\,\mathrm{pre}}_t)
\Vert
\pi_s(\cdot\mid t)
\right)
\middle| t
\right]
=
\lambda_r K_t .
\]
If the applied exposure satisfies
\[
\lambda_r \le
\min\!\left\{
\frac{\varepsilon_{\mathrm{mean}}}
{\max(\hat{k}_{\mathrm{mean}}(r),K_{\min})},
\frac{\varepsilon_q}
{\max(\hat{k}_q(r),K_{\min})},
1
\right\},
\]
then the exposure-weighted mean mismatch satisfies
\[
\mathbb E_t
\left[
\mathbb E_{m_r}
\left[
\mathrm{KL}\!\left(
\pi(\cdot\mid \tilde{\mathbf z}^{\,\mathrm{pre}}_t)
\Vert
\pi_s(\cdot\mid t)
\right)
\middle| t
\right]
\right]
=
\lambda_r \hat{k}_{\mathrm{mean}}(r)
\le
\varepsilon_{\mathrm{mean}},
\]
up to estimation error. Moreover, the \(q\)-quantile of the
exposure-weighted conditional mismatch satisfies
\[
Q_q\!\left(
\mathbb E_{m_r}
\left[
\mathrm{KL}\!\left(
\pi(\cdot\mid \tilde{\mathbf z}^{\,\mathrm{pre}}_t)
\Vert
\pi_s(\cdot\mid t)
\right)
\middle| t
\right]
\right)
=
\lambda_r \hat{k}_q(r)
\le
\varepsilon_q .
\]
Thus, the mean constraint controls average modality-induced policy drift,
while the quantile constraint suppresses exposure to rare high-mismatch
states.
\end{proposition}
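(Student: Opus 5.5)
The plan is to prove the three claims in sequence, using only that \(m_r\in\{0,1\}\) is a single Bernoulli draw independent of the timestep-level quantities \(\mathbf{x}_t\) (it is sampled at the start of the rollout and held fixed).

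\textbf{Step 1: the conditional identity.} Condition on \(t\), so that \(\tilde{\mathbf z}^{\,s}_t\) and \(\tilde{\mathbf z}^{\,o}_t\) are fixed. There are two cases.
\begin{itemize}
\item If \(m_r=0\), then \(\tilde{\mathbf z}^{\,\mathrm{pre}}_t=\tilde{\mathbf z}^{\,s}_t\). The divergence is \(\mathrm{KL}(\pi_s\Vert\pi_s)=0\).
\item If \(m_r=1\), then \(\tilde{\mathbf z}^{\,\mathrm{pre}}_t=\tilde{\mathbf z}^{\,o}_t\). The divergence is exactly \(K_t\).
\end{itemize}
The divergence is therefore \(m_r K_t\), and taking the expectation over \(m_r\) gives \(\lambda_r K_t\). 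This step does not use the Gaussian structure of Section~\ref{app:setup}, only that the switch is hard, as in \eqref{eq:method:hard-switch}, not a convex mixture of latents. It also shows where the statement would break: a soft interpolation of \(\tilde{\mathbf z}\) would make the KL quadratic in the mixing weight under the shared-\(\Sigma\) Gaussian, not linear.

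\textbf{Step 2: the mean bound.} Average the identity of Step 1 over \(t\) drawn from the rollout batch. By linearity, and because \(\lambda_r\) does not depend on \(t\), the average equals \(\lambda_r\,\mathbb E_t[K_t]\). I then identify \(\mathbb E_t[K_t]\) with \(\hat k_{\mathrm{mean}}(r)\), which is where the "up to estimation error" qualifier enters. For the inequality:
\begin{itemize}
\item If \(\hat k_{\mathrm{mean}}(r)\ge K_{\min}\), the hypothesis gives \(\lambda_r\le \varepsilon_{\mathrm{mean}}/\hat k_{\mathrm{mean}}(r)\), so \(\lambda_r\hat k_{\mathrm{mean}}(r)\le\varepsilon_{\mathrm{mean}}\).
\item If \(\hat k_{\mathrm{mean}}(r)< K_{\min}\), then \(\lambda_r\hat k_{\mathrm{mean}}(r)\le \lambda_r K_{\min}\le\varepsilon_{\mathrm{mean}}\).
\end{itemize}
In both cases the claimed bound holds.

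\textbf{Step 3: the quantile bound.} I use the positive homogeneity of quantiles: for a constant \(c\ge0\), \(Q_q(cX)=cQ_q(X)\). This follows because \(x\mapsto cx\) is nondecreasing, so \(\{cX\le cx\}\supseteq\{X\le x\}\) and the generalized-inverse definition commutes with the scaling. The case \(c=0\) is trivial.

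Apply this with \(c=\lambda_r\) and \(X=K_t\), \(t\) uniform over the batch. This gives \(Q_q(\lambda_r K_t)=\lambda_r Q_q(K_t)\). Identifying \(Q_q(K_t)\) with \(\hat k_q(r)\), the same two-case argument as in Step 2 yields \(\lambda_r\hat k_q(r)\le\varepsilon_q\).

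\textbf{Main obstacle.} The only real subtlety is that the controller uses \(\hat k(r-1)\) and a smoothed \(\lambda^{(j_r)}\), whereas the proposition is stated with \(\hat k(r)\) and assumes the inequality holds. I will not derive that hypothesis from the controller dynamics. Instead I will state explicitly that the result is conditional on it, and that the running mean and quantile estimators are treated as exact batch statistics, with their deviation absorbed into the stated estimation error. If a rigorous form is wanted, this can be made explicit as \(|\mathbb E_t K_t-\hat k_{\mathrm{mean}}(r)|\le\delta\), which gives the mean bound \(\le\varepsilon_{\mathrm{mean}}+\lambda_r\delta\), and analogously for the quantile.
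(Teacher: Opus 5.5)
Your proposal is correct and follows the paper's proof essentially step for step. Both use the two-case evaluation of the hard switch to get $\lambda_r K_t$, average over the batch with $\hat{k}_{\mathrm{mean}}(r)$ identified as the batch mean of $K_t$, and use positive homogeneity $Q_q(\lambda_r K_t)=\lambda_r Q_q(K_t)$ for the tail bound. Your explicit case split on $\max(\hat{k},K_{\min})$ is a small sharpening: it shows the floor $K_{\min}$ costs nothing, whereas the paper lists it as a source of approximation. Your $\delta$-version also makes ``up to estimation error'' precise. The only inaccuracy is your side remark that a soft interpolation would make the KL quadratic in the mixing weight, which holds only if $\mu$ is affine along the segment; it plays no role in the argument.
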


\begin{proof}
For fixed timestep \(t\), the switched input equals
\(\tilde{\mathbf z}^{\,s}_t\) when \(m_r=0\) and
\(\tilde{\mathbf z}^{\,o}_t\) when \(m_r=1\). Therefore, the KL
divergence to the state-conditioned action distribution is \(0\) with
probability \(1-\lambda_r\) and \(K_t\) with probability \(\lambda_r\).
Hence,
\[
\mathbb E_{m_r}
\left[
\mathrm{KL}\!\left(
\pi(\cdot\mid \tilde{\mathbf z}^{\,\mathrm{pre}}_t)
\Vert
\pi_s(\cdot\mid t)
\right)
\middle| t
\right]
=
(1-\lambda_r)\cdot 0+\lambda_r K_t
=
\lambda_r K_t .
\]
Averaging this conditional expectation over timesteps in the rollout
batch gives
\[
\mathbb E_t[\lambda_r K_t]
=
\lambda_r \hat{k}_{\mathrm{mean}}(r),
\]
where \(\hat{k}_{\mathrm{mean}}(r)\) estimates the batch mean of \(K_t\).

Under the stated exposure condition,
\[
\lambda_r \hat{k}_{\mathrm{mean}}(r)
\le
\varepsilon_{\mathrm{mean}},
\]
up to estimation error and the use of \(K_{\min}\) for numerical
stability.

Similarly, because \(\lambda_r\) is fixed for the rollout, the
\(q\)-quantile over timesteps of the exposure-weighted conditional
mismatch is
\[
Q_q(\lambda_r K_t)
=
\lambda_r Q_q(K_t)
\approx
\lambda_r \hat{k}_q(r).
\]

The quantile-budget constraint then gives
\[
\lambda_r \hat{k}_q(r)
\le
\varepsilon_q .
\]
This proves the stated mean and tail exposure bounds.
\end{proof}

In implementation, the controller uses the lagged running statistics
\(\hat{k}_{\mathrm{mean}}(r-1)\) and \(\hat{k}_q(r-1)\) when determining
the exposure for rollout \(r\). Thus, the proposition gives the exact
conditional bound for a fixed rollout batch, while the implemented
controller realizes this bound approximately, subject to changes in the
running KL statistics between rollouts and EMA tracking error.

\subsection{Why representation alignment can reduce policy mismatch}
\label{app:kl-bound}

The switching rule itself uses the measured policy KL \(K_t\), so it does not require the following bound. The bound is included only to motivate why aligning the fused state and RGB-D representations can reduce the measured cross-modal policy mismatch.

For diagonal Gaussians \(\mathcal{N}(\mu_a,\mathrm{diag}(\sigma_a^2))\) and
\(\mathcal{N}(\mu_b,\mathrm{diag}(\sigma_b^2))\),
\begin{equation}
\mathrm{KL}(a\Vert b)
=
\sum_{j=1}^m
\left[
\log\frac{\sigma_{b,j}}{\sigma_{a,j}}
+
\frac{\sigma_{a,j}^2+(\mu_{a,j}-\mu_{b,j})^2}{2\sigma_{b,j}^2}
-\frac12
\right].
\end{equation}

In the shared-covariance case used here, this expression simplifies to a quadratic form in the mean difference.

\begin{lemma}[Local KL sensitivity to fused-input mismatch]
\label{lem:latent-to-kl}
Assume that the actor uses a Gaussian policy with shared covariance,
\[
\pi(\cdot \mid \tilde{\mathbf z}) =
\mathcal N(\mu(\tilde{\mathbf z}), \Sigma),
\]
where \(\Sigma \succ 0\) does not depend on the input. Let
\(\sigma_{\min}^2\) denote the smallest eigenvalue of \(\Sigma\). Assume
that \(\mu\) is Lipschitz along the line segment
\[
\left\{
\tilde{\mathbf z}^{\,s}_t
+
\lambda\bigl(\tilde{\mathbf z}^{\,o}_t-\tilde{\mathbf z}^{\,s}_t\bigr)
:\lambda\in[0,1]
\right\}
\]
with constant \(L_\mu\). Then
\[
\mathrm{KL}\!\left(\pi_o(\cdot\mid t)\Vert\pi_s(\cdot\mid t)\right)
\le
\frac{L_\mu^2}{2\sigma_{\min}^2}
\left\|
\tilde{\mathbf z}^{\,o}_t-\tilde{\mathbf z}^{\,s}_t
\right\|^2.
\]
\end{lemma}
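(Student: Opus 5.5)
The plan is to compute the KL divergence in closed form for Gaussians with a common covariance, bound the resulting quadratic form by the smallest eigenvalue of \(\Sigma\), and then use the Lipschitz assumption to convert the mean gap into the fused-input gap. Write \(\mu_o:=\mu(\tilde{\mathbf z}^{\,o}_t)\) and \(\mu_s:=\mu(\tilde{\mathbf z}^{\,s}_t)\).

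\textbf{Step 1: closed form.} For two Gaussians \(\mathcal N(\mu_o,\Sigma)\) and \(\mathcal N(\mu_s,\Sigma)\) with the same \(\Sigma\), the general formula simplifies. The log-determinant term vanishes, and the trace term \(\mathrm{tr}(\Sigma^{-1}\Sigma)\) cancels against the dimension. This leaves
\[
\mathrm{KL}\!\left(\pi_o(\cdot\mid t)\Vert\pi_s(\cdot\mid t)\right)
=\tfrac12(\mu_o-\mu_s)^\top\Sigma^{-1}(\mu_o-\mu_s).
\]
In the diagonal case this is exactly the displayed diagonal-Gaussian formula with \(\sigma_{a,j}=\sigma_{b,j}\), so every \(\log\) term and every \(-\tfrac12\) cancels. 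I would simply cite that formula.

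\textbf{Step 2: eigenvalue bound.} Since \(\Sigma\succ0\), the largest eigenvalue of \(\Sigma^{-1}\) is \(1/\sigma_{\min}^2\). The Rayleigh quotient then gives
\[
(\mu_o-\mu_s)^\top\Sigma^{-1}(\mu_o-\mu_s)\le \sigma_{\min}^{-2}\|\mu_o-\mu_s\|^2.
\]

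\textbf{Step 3: Lipschitz bound.} Both endpoints \(\tilde{\mathbf z}^{\,s}_t\) and \(\tilde{\mathbf z}^{\,o}_t\) lie on the segment, at \(\lambda=0\) and \(\lambda=1\). Lipschitz continuity of \(\mu\) on that segment therefore gives
\[
\|\mu_o-\mu_s\|\le L_\mu\|\tilde{\mathbf z}^{\,o}_t-\tilde{\mathbf z}^{\,s}_t\|.
\]
Squaring this and chaining it with Steps 1 and 2 yields the stated bound \(\frac{L_\mu^2}{2\sigma_{\min}^2}\|\Delta\tilde{\mathbf z}_t\|^2\).

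\textbf{Main obstacle.} There is no real difficulty here; the only point needing care is the scope of the Lipschitz hypothesis. If it is read as applying only to points on the segment, the two endpoints are enough, as used in Step 3. If instead one prefers a local or differentiable formulation, apply the mean value inequality along the segment, \(\mu_o-\mu_s=\int_0^1 J_\mu(\tilde{\mathbf z}^{\,s}_t+\lambda\Delta\tilde{\mathbf z}_t)\,\Delta\tilde{\mathbf z}_t\,d\lambda\). Bounding the Jacobian operator norm by \(L_\mu\) then gives the same inequality.
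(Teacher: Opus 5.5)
Your proposal is correct and follows the paper's proof essentially step for step. Both use the shared-covariance closed form $\tfrac12(\mu_o-\mu_s)^\top\Sigma^{-1}(\mu_o-\mu_s)$, then the bound $\Sigma^{-1}\preceq\sigma_{\min}^{-2}I$, then the Lipschitz estimate between the two endpoints of the segment; the optional mean-value remark and the diagonal-formula check are the only additions.
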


\begin{proof}
Let
\[
\mu_o = \mu(\tilde{\mathbf z}^{\,o}_t),
\qquad
\mu_s = \mu(\tilde{\mathbf z}^{\,s}_t),
\]
so that
\[
\pi_o(\cdot\mid t)=\mathcal N(\mu_o,\Sigma),
\qquad
\pi_s(\cdot\mid t)=\mathcal N(\mu_s,\Sigma).
\]
For two \(m\)-dimensional Gaussians
\(\mathcal N(\mu_o,\Sigma)\) and \(\mathcal N(\mu_s,\Sigma)\) with the
same positive-definite covariance, the KL divergence is
\[
\mathrm{KL}\!\left(
\mathcal N(\mu_o,\Sigma)
\Vert
\mathcal N(\mu_s,\Sigma)
\right)
=
\frac{1}{2}
\left[
\mathrm{tr}(\Sigma^{-1}\Sigma)
+
(\mu_s-\mu_o)^\top \Sigma^{-1}(\mu_s-\mu_o)
-
m
+
\log\frac{\det\Sigma}{\det\Sigma}
\right].
\]
The trace term satisfies \(\mathrm{tr}(\Sigma^{-1}\Sigma)=m\), and the
log-determinant term is zero because the two covariances are identical.
Therefore the KL reduces to
\[
\mathrm{KL}\!\left(\pi_o\Vert\pi_s\right)
=
\frac{1}{2}
(\mu_o-\mu_s)^\top
\Sigma^{-1}
(\mu_o-\mu_s).
\]

Since \(\Sigma\succ 0\) and its smallest eigenvalue is
\(\sigma_{\min}^2\), every eigenvalue of \(\Sigma^{-1}\) is at most
\(\sigma_{\min}^{-2}\). Equivalently,
\[
\Sigma^{-1} \preceq \sigma_{\min}^{-2} I .
\]
Thus, for any vector \(v\),
\[
v^\top \Sigma^{-1} v
\le
\sigma_{\min}^{-2}\|v\|^2.
\]
Applying this inequality with \(v=\mu_o-\mu_s\) gives
\[
\mathrm{KL}\!\left(\pi_o\Vert\pi_s\right)
\le
\frac{1}{2\sigma_{\min}^2}
\|\mu_o-\mu_s\|^2.
\]

It remains to relate the difference in policy means to the difference
between the fused actor inputs. By the assumed Lipschitz continuity of
\(\mu\) along the segment connecting
\(\tilde{\mathbf z}^{\,s}_t\) and \(\tilde{\mathbf z}^{\,o}_t\),
\[
\|\mu_o-\mu_s\|
=
\left\|
\mu(\tilde{\mathbf z}^{\,o}_t)
-
\mu(\tilde{\mathbf z}^{\,s}_t)
\right\|
\le
L_\mu
\left\|
\tilde{\mathbf z}^{\,o}_t
-
\tilde{\mathbf z}^{\,s}_t
\right\|.
\]
Substituting this bound into the KL inequality yields
\[
\mathrm{KL}\!\left(\pi_o(\cdot\mid t)\Vert\pi_s(\cdot\mid t)\right)
\le
\frac{L_\mu^2}{2\sigma_{\min}^2}
\left\|
\tilde{\mathbf z}^{\,o}_t
-
\tilde{\mathbf z}^{\,s}_t
\right\|^2,
\]
which proves the claim.
\end{proof}

\section{Detailed representation alignment objective}
\label{app:repr-details}

Let
\[
\mathbf{z}_{t,s}=E_s(\mathbf{s}_t^{\mathrm{priv}}),\qquad
\mathbf{z}_{t,o}=E_o(o_t)
\]
denote the raw latent representations produced by the state and RGB-D encoders. The representation alignment objective updates the RGB-D encoder and, when enabled, the cross-attention aligner and state decoder. Its purpose is to make matched privileged-state and RGB-D observations produce compatible latent representations for control.

\paragraph{Optional aligner and aligned targets.}
When enabled, a lightweight cross-attention aligner $f_\varphi$ produces
\[
(\hat{\mathbf{z}}_{t,s},\hat{\mathbf{z}}_{t,o}) = f_\varphi(\mathbf{z}_{t,s},\mathbf{z}_{t,o}).
\]
These aligned latents are used as aligned targets for representation learning. In practice, we post-process [see Appendix \ref{app:utils}] both raw and aligned latents before applying several of the losses:
\[
z_s'=\texttt{post\_process}(\mathbf{z}_{t,s}),\qquad
z_o'=\texttt{post\_process}(\mathbf{z}_{t,o}),
\]
\[
z_s^T=\texttt{post\_process}(\hat{\mathbf{z}}_{t,s}),\qquad
z_o^T=\texttt{post\_process}(\hat{\mathbf{z}}_{t,o}).
\]

\paragraph{Loss decomposition.}
The full representation objective is
\begin{equation}
\label{eq:repr-full}
\mathcal{L}_{\mathrm{repr}}
=
\mathcal{L}_{\mathrm{align}}
+\mathcal{L}_{\mathrm{raw}}
+\lambda_{\pi,\mathrm{repr}}\mathcal{L}_{\pi,\mathrm{repr}}
+\mathcal{L}_{\mathrm{dec}}.
\end{equation}
The four groups are defined below.

\paragraph{Aligned-target regularization.}
This group uses the aligner outputs as a supervisory signal. It aligns the post-alignment latents and regularizes the raw latents toward that aligned structure. 
\begin{equation}
\mathcal{L}_{\mathrm{align}}
=
\mathcal{L}_{\mathrm{sim}}
+\mathcal{L}_{\mathrm{reg}}
+\mathcal{L}_{\mathrm{mse}}
+\lambda_{\mathrm{byol}}\mathcal{L}_{\mathrm{byol}}
+\lambda_{\mathrm{distill}}\mathcal{L}_{\mathrm{distill}}.
\end{equation}

Here, $\mathcal{L}_{\mathrm{sim}}$ is a similarity-based alignment loss on the aligned latents, $\mathcal{L}_{\mathrm{reg}}$ is an alignment regularizer, and $\mathcal{L}_{\mathrm{mse}}$ is a direct feature-matching term. We also use a BYOL-style term on the aligner outputs,
\[
\mathcal{L}_{\mathrm{byol}}=\mathrm{BYOL}\!\left(\hat{\mathbf{z}}_{t,s},\hat{\mathbf{z}}_{t,o}\right),
\]
and a cosine target-matching term that pushes the raw latents toward the aligned targets:
\[
\mathcal{L}_{\mathrm{distill}}
=
\tfrac12\,\mathbb{E}\!\left[1-\langle z_s',z_s^T\rangle\right]
+
\tfrac12\,\mathbb{E}\!\left[1-\langle z_o',z_o^T\rangle\right].
\]

\paragraph{Raw-latent regularization.}
This group acts directly on the encoder outputs before control. It keeps the raw latent space comparable across modalities and discourages collapse:
\begin{equation}
\mathcal{L}_{\mathrm{raw}}
=
\lambda_{\mathrm{raw}}\mathcal{L}_{\mathrm{raw\_sim}}
+\lambda_{\mathrm{NCE}}\mathcal{L}_{\mathrm{NCE}}
+\lambda_{\mathrm{VICReg}}\mathcal{L}_{\mathrm{VICReg}}.
\end{equation}

$\mathcal{L}_{\mathrm{raw\_sim}}$ is a raw-latent similarity term. For InfoNCE, we use matched state-image pairs from the same timestep:
\[
\mathcal{L}_{\mathrm{NCE}}
=
-\frac{1}{B}\sum_{b=1}^{B}
\log
\frac{\exp(\mathrm{sim}(\mathbf{z}_{s}^{(b)},\mathbf{z}_{o}^{(b)})/\tau)}
{\sum_{k=1}^{B}\exp(\mathrm{sim}(\mathbf{z}_{s}^{(b)},\mathbf{z}_{o}^{(k)})/\tau)}.
\]
We also apply VICReg-style regularization on the post-processed raw latents:
\[
\mathcal{L}_{\mathrm{VICReg}}
=
\|z_s'-z_o'\|_2^2
+\tfrac12\!\left(\mathcal{L}_{\mathrm{var}}(z_s')+\mathcal{L}_{\mathrm{var}}(z_o')\right)
+\lambda_{\mathrm{cov}}\!\left(\mathcal{L}_{\mathrm{cov}}(z_s')+\mathcal{L}_{\mathrm{cov}}(z_o')\right).
\]

\paragraph{Policy-conditioned alignment.}
To tie representation learning to control, we add a symmetric KL term between the state-conditioned and image-conditioned policies:
\begin{equation}
\label{eq:pi-distill-app}
\mathcal{L}_{\pi,\mathrm{repr}}
=
\tfrac12\,\mathrm{KL}\!\left(\pi(\cdot\mid \tilde{\mathbf{z}}^{\,s})\Vert\pi(\cdot\mid \tilde{\mathbf{z}}^{\,o})\right)
+
\tfrac12\,\mathrm{KL}\!\left(\pi(\cdot\mid \tilde{\mathbf{z}}^{\,o})\Vert\pi(\cdot\mid \tilde{\mathbf{z}}^{\,s})\right).
\end{equation}
During this step, the actor and fusion module are frozen. Gradients from $\mathcal{L}_{\pi,\mathrm{repr}}$ therefore update the RGB-D encoder rather than the control policy. This makes the RGB-D encoder learn representations that are not only close to state representations, but also useful for action selection.

\paragraph{Optional decoding.}
The optional decoder reconstructs privileged-state targets from both modalities:
\[
\hat{\mathbf{s}}_{t,s}=D_s(\mathbf{z}_{t,s}),\qquad
\hat{\mathbf{s}}_{t,o}=D_s(\mathbf{z}_{t,o}).
\]
This yields
\begin{equation}
\mathcal{L}_{\mathrm{dec}}
=
\mathcal{L}_{\mathrm{state\_dec}}
+\lambda_{\mathrm{conf}}\mathcal{L}_{\mathrm{conf}},
\end{equation}
where
\[
\mathcal{L}_{\mathrm{state\_dec}}
=
\tfrac12\|\hat{\mathbf{s}}_{t,s}-\mathbf{s}_t^{\mathrm{priv}}\|_2^2
+
\tfrac12\|\hat{\mathbf{s}}_{t,o}-\mathbf{s}_t^{\mathrm{priv}}\|_2^2,
\]
and
\[
\mathcal{L}_{\mathrm{conf}}
=
\|\hat{\mathbf{s}}_{t,s}-\hat{\mathbf{s}}_{t,o}\|_2^2.
\]
This term preserves control-relevant state information in both modalities. It can also serve as a simple interpretability probe of what the latent space retains.

\paragraph{Implementation weights.}
In the implementation used here, the full loss is
\begin{align}
\mathcal{L}_{\mathrm{repr}}
&=
\mathcal{L}_{\mathrm{sim}}
+\mathcal{L}_{\mathrm{reg}}
+\mathcal{L}_{\mathrm{mse}}
+0.02\,\mathcal{L}_{\mathrm{byol}}
+0.02\,\mathcal{L}_{\mathrm{distill}} \nonumber\\
&\quad
+0.01\,\mathcal{L}_{\mathrm{raw\_sim}}
+0.05\,\mathcal{L}_{\mathrm{NCE}}
+0.3\,\mathcal{L}_{\mathrm{VICReg}}
+0.01\,\mathcal{L}_{\pi,\mathrm{repr}} \nonumber\\
&\quad
+\mathcal{L}_{\mathrm{state\_dec}}
+0.2\,\mathcal{L}_{\mathrm{conf}}.
\end{align}
These weights match the current implementation. If different weights are used in a variant or ablation, we report them in the corresponding experimental settings table.

\section{Latent post-processing}
\label{app:utils}

For a minibatch \(Z\in\mathbb{R}^{M\times d}\) (rows are latent vectors), we apply a
two-stage normalization consisting of global standardization followed by
row-wise \(\ell_2\) normalization:
\begin{equation}
\label{eq:postprocess}
\begin{aligned}
\widehat{Z} \;&=\; \frac{Z - \operatorname{mean}(Z)}{\operatorname{std}(Z) + \delta},\\
\texttt{post\_process}(Z)_i \;&=\; 
\frac{\widehat Z_{i\cdot}}{\|\widehat Z_{i\cdot}\|_2 + \varepsilon},
\qquad i=1,\dots,M,
\end{aligned}
\end{equation}
where \(\operatorname{mean}(Z)\) and \(\operatorname{std}(Z)\) are the scalar (global)
mean and standard deviation over all \(Md\) entries of \(Z\), and
\(\delta,\varepsilon>0\) are small stability constants (we use
\(\delta=10^{-12}\), \(\varepsilon=10^{-6}\)).
This operation is applied whenever alignment or contrastive losses require
scale-invariant latent comparisons.

\clearpage

\section{Train Step@50 Results}
\label{app:steps50}

\begin{table}[H]
\caption{\textbf{Train Step@50\% success occupancy} reports the first training point at which success occupancy reaches 50\%. Lower is better. Parentheses indicate the number of training seeds that reached the threshold, e.g., 693~(1/2). ``Not reached'' means no seed reached the threshold. For teacher-student, ``final student'' reports final fine-tuning, while ``3-stage, e2e'' counts the full pipeline.}
\label{tab:steps-at-50}
\centering
\scriptsize
\setlength{\tabcolsep}{2.5pt}
\renewcommand{\arraystretch}{1.08}
\begin{tabularx}{\linewidth}{@{}l *{5}{>{\centering\arraybackslash}X}@{}}
\toprule
\textbf{Method} & \textbf{Reach} & \textbf{Lift} & \textbf{Push} & \textbf{Open\,-\,Drawer} & \textbf{Pick\,-\,and\,-\,Place} \\
\midrule
states-only & 18.5 $\pm$ 6.4~(2/2) & 693~(1/2) & 556.5 $\pm$ 27.6~(2/2) & 31.5 $\pm$ 3.5~(2/2) & 522.0 $\pm$ 46.7~(2/2) \\
RGB-D-only & 15.5 $\pm$ 9.2~(2/2) & Not reached~(0/2) & 1447.5 $\pm$ 20.5~(2/2) & 51.5 $\pm$ 4.9~(2/2) & Not reached~(0/2) \\
RGB-D-asymmetric & 12.5 $\pm$ 5.0~(2/2) & 1098~(1/2) & Not reached~(0/2) & 50.0 $\pm$ 4.2~(2/2) & 1611~(1/2) \\
teacher-student (final student) & 1.0 $\pm$ 1.4 (2/2) & 262 (1/2) & Not reached (0/2) & 5.0 $\pm$ 5.7 (2/2) & Not reached (0/2) \\
teacher-student (3-stage, e2e) & 2001.0 $\pm$ 1.4 (2/2) & 2262 (1/2) & Not reached (0/2) & 2005.0 $\pm$ 5.7 (2/2) & Not reached (0/2) \\
\midrule
\rowcolor{oursblue}
\textbf{\method} (ours) & 11.5 $\pm$ 3.5~(2/2) & 461.5 $\pm$ 112.4~(2/2) & 461.5 $\pm$ 19.1~(2/2) & 91.5 $\pm$ 20.5~(2/2) & 470.0 $\pm$ 107.5~(2/2) \\
\bottomrule
\end{tabularx}
\end{table}

\section{Additional Training Curves}
\label{app:training-curves}
\begin{figure}[H]
    \centering

    \begin{subfigure}[t]{0.49\linewidth}
        \centering
        \includegraphics[width=\linewidth]{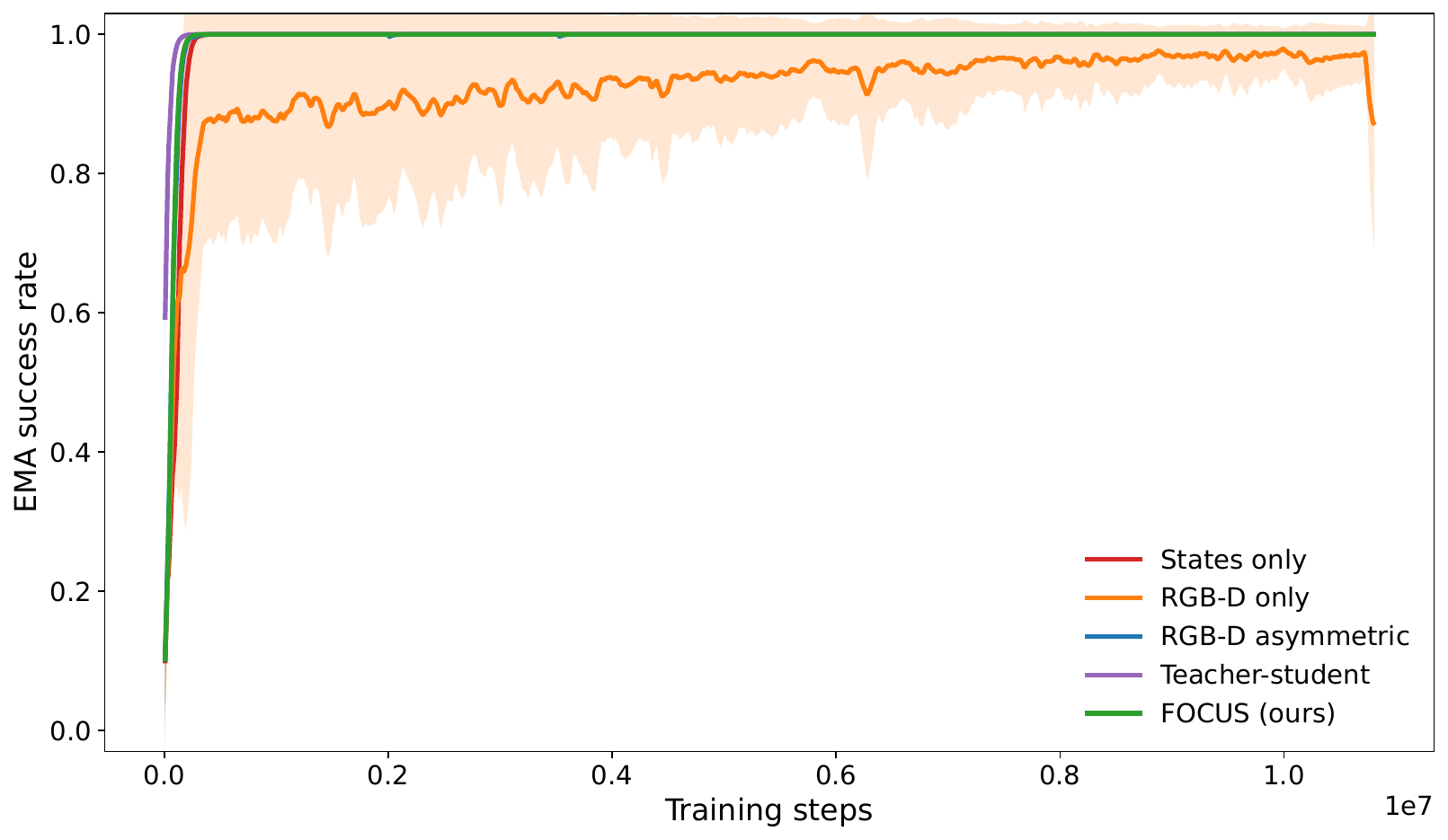}
        \caption{\textbf{Reach-Cube}}
    \end{subfigure}\hfill
    \begin{subfigure}[t]{0.49\linewidth}
        \centering
        \includegraphics[width=\linewidth]{images/sample_efficiency/train_lift_success_envsteps.pdf}
        \caption{\textbf{Lift-Cube}}
    \end{subfigure}

    \smallskip

    \begin{subfigure}[t]{0.49\linewidth}
        \centering
        \includegraphics[width=\linewidth]{images/sample_efficiency/train_push_success_envsteps.pdf}
        \caption{\textbf{Push-Cube}}
    \end{subfigure}\hfill
    \begin{subfigure}[t]{0.49\linewidth}
        \centering
        \includegraphics[width=\linewidth]{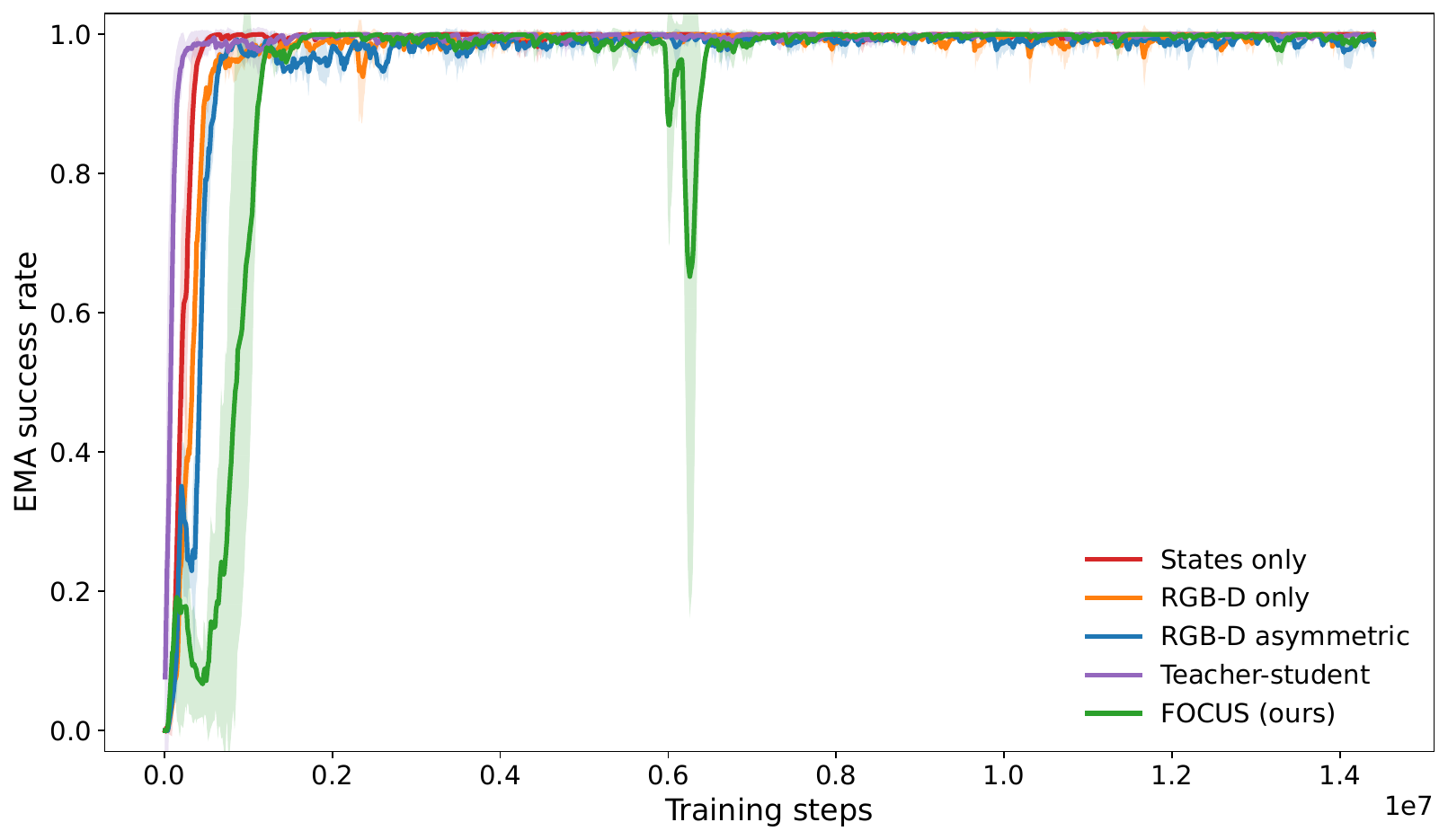}
        \caption{\textbf{Open-Drawer}}
    \end{subfigure}

    \smallskip

    \begin{subfigure}[t]{0.49\linewidth}
        \centering
        \includegraphics[width=\linewidth]{images/sample_efficiency/train_pick_and_place_success_envsteps.pdf}
        \caption{\textbf{Pick-and-Place}}
    \end{subfigure}

    \caption{\textbf{Full training success EMA curves for all tasks.}
    Curves show training-time success. For teacher-student, curves show the final student PPO phase; Table~\ref{tab:success-occupancy-auc} reports end-to-end normalized AUC in the main paper.}
    \label{fig:train-success-vs-steps-full}
\end{figure}

\section{Per episode success rate curves}
This appendix reports per-episode training success curves for all five tasks. Unlike the EMA curves in the main text, these plots show the raw training success signal and therefore expose short-term variance, instability, and plateau behavior. Earlier rises and higher final plateaus indicate better sample efficiency and more reliable training.

\begin{figure}[H]
    \centering

    \begin{subfigure}[t]{0.49\columnwidth}\vspace{0pt}
        \centering
        \includegraphics[width=\columnwidth]{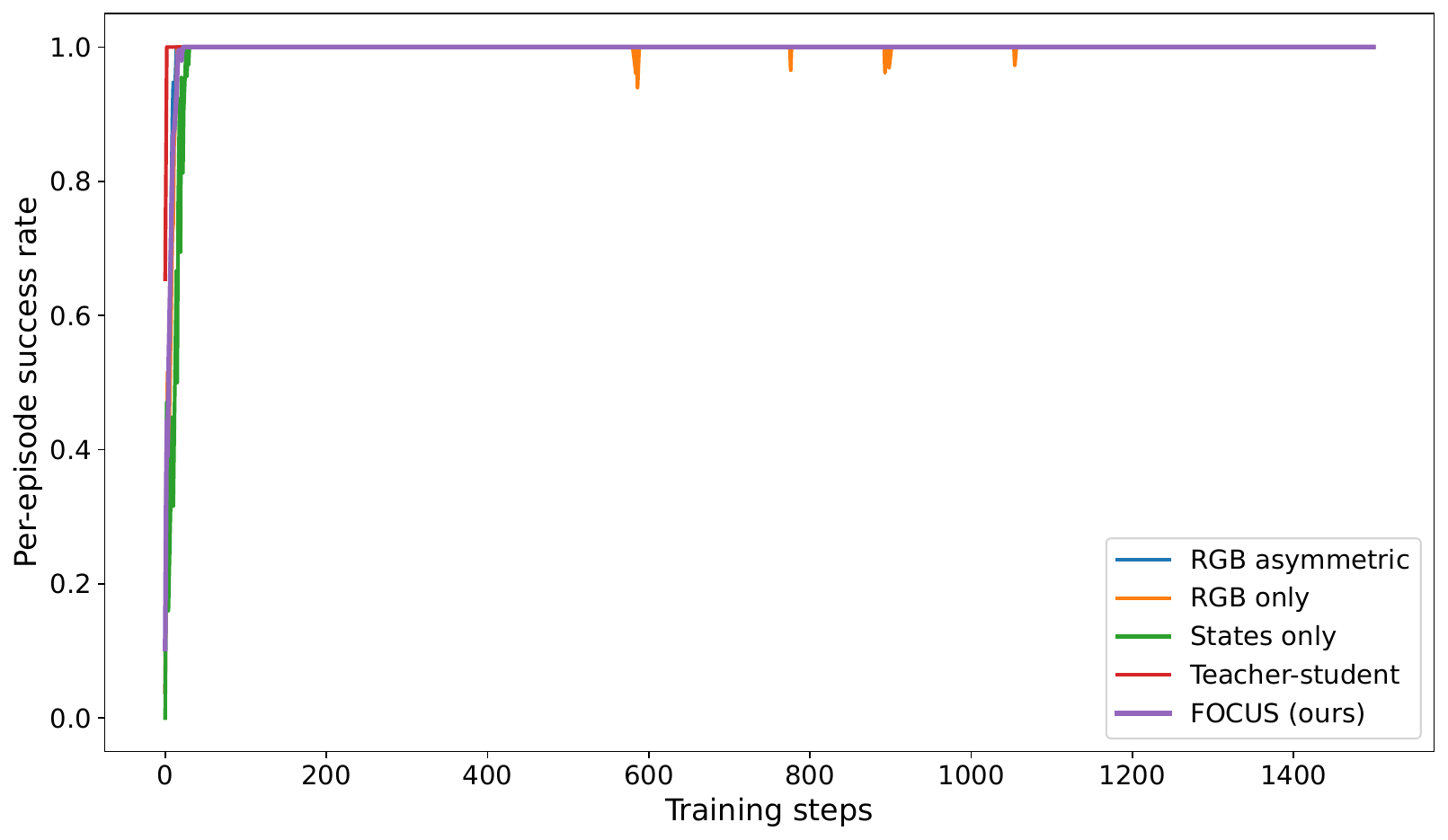}
        \caption{\textbf{Reach-Cube.}}
        \label{fig:per-episode-reach-success}
    \end{subfigure}\hfill
    \begin{subfigure}[t]{0.49\columnwidth}\vspace{0pt}
        \centering
        \includegraphics[width=\columnwidth]{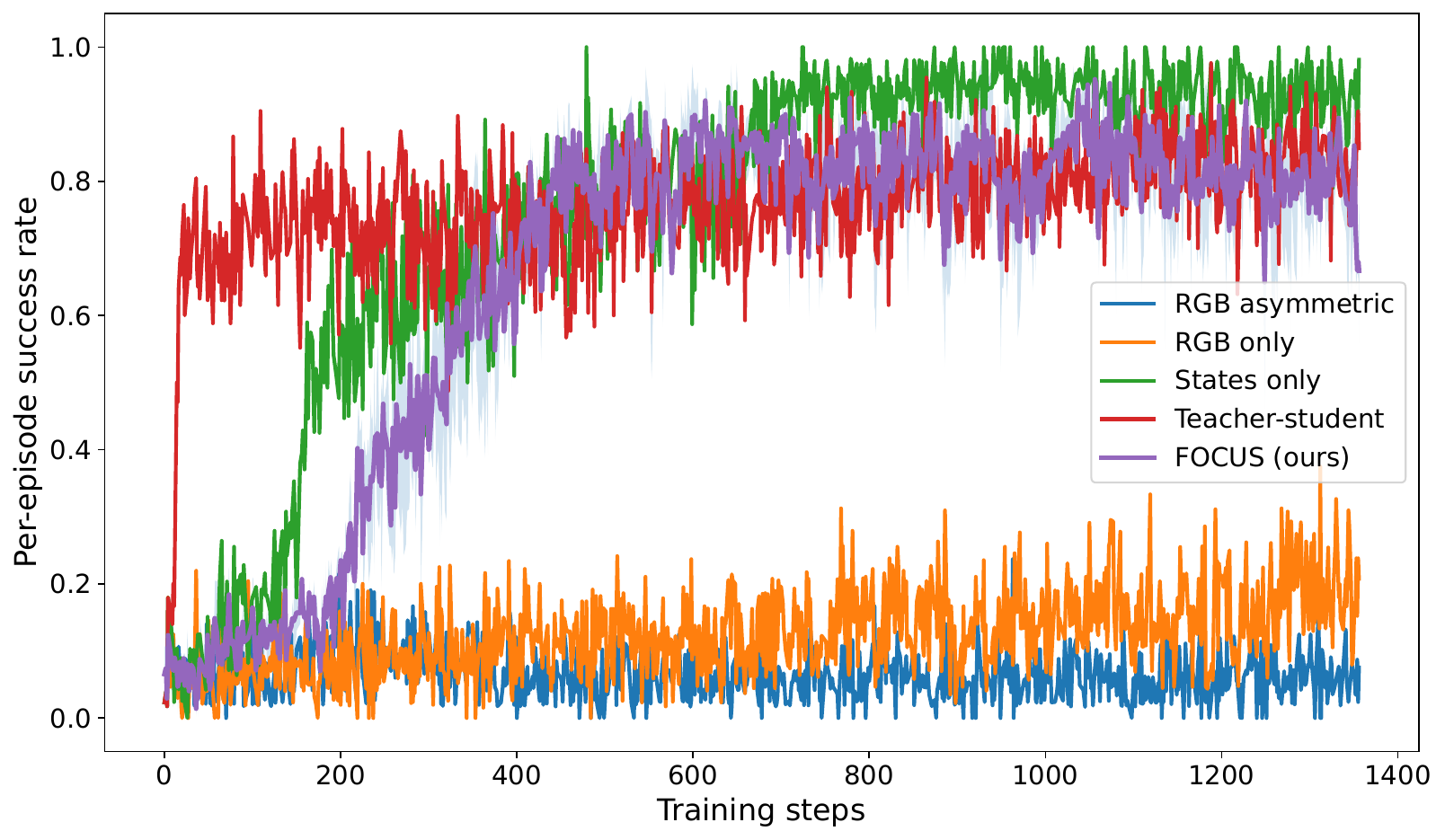}
        \caption{\textbf{Lift-Cube.}}
        \label{fig:per-episode-lift-success}
    \end{subfigure}

    \smallskip

    \begin{subfigure}[t]{0.49\columnwidth}\vspace{0pt}
        \centering
        \includegraphics[width=\columnwidth]{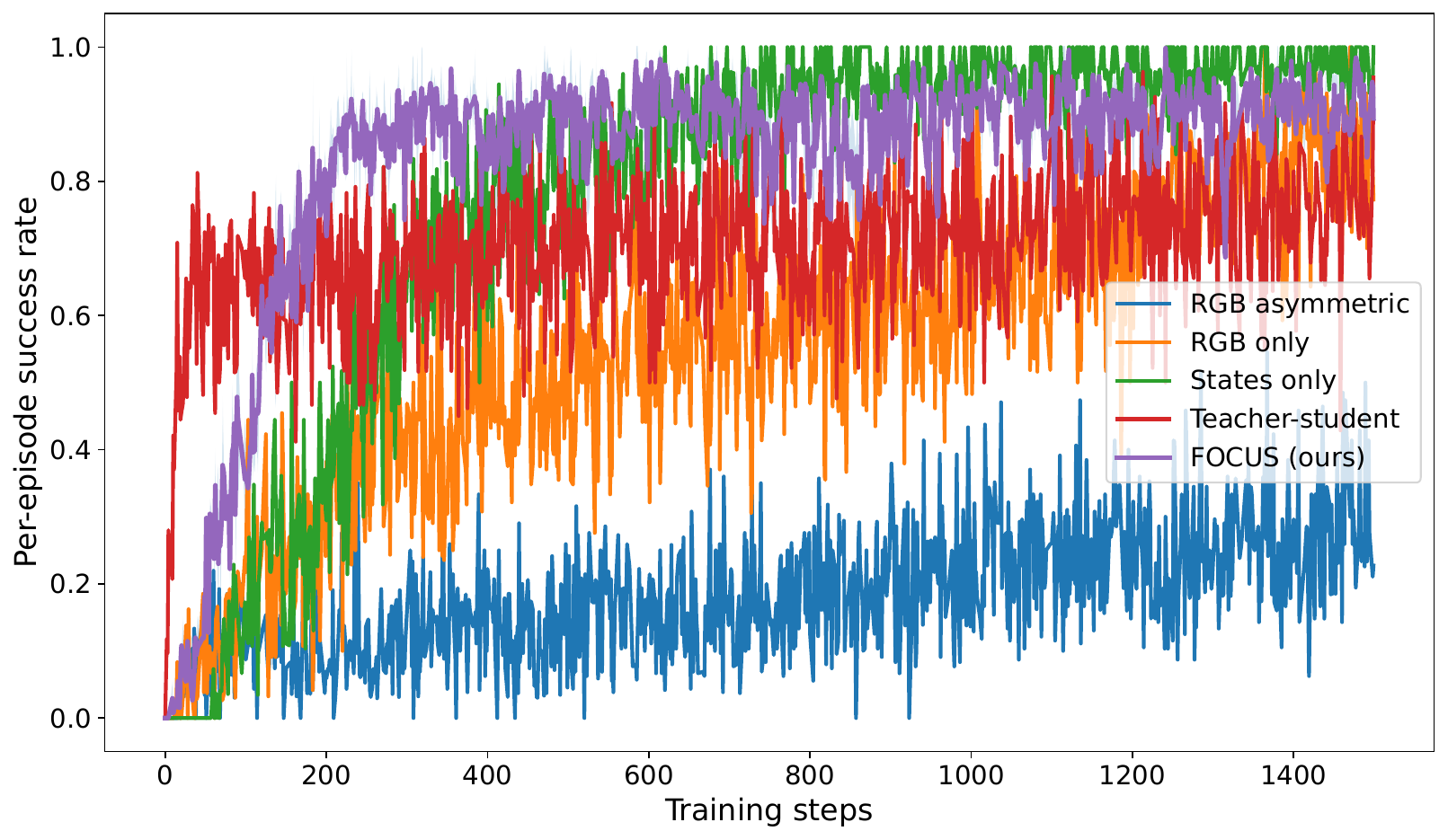}
        \caption{\textbf{Push-Cube.}}
        \label{fig:per-episode-push-success}
    \end{subfigure}\hfill
    \begin{subfigure}[t]{0.49\columnwidth}\vspace{0pt}
        \centering
        \includegraphics[width=\columnwidth]{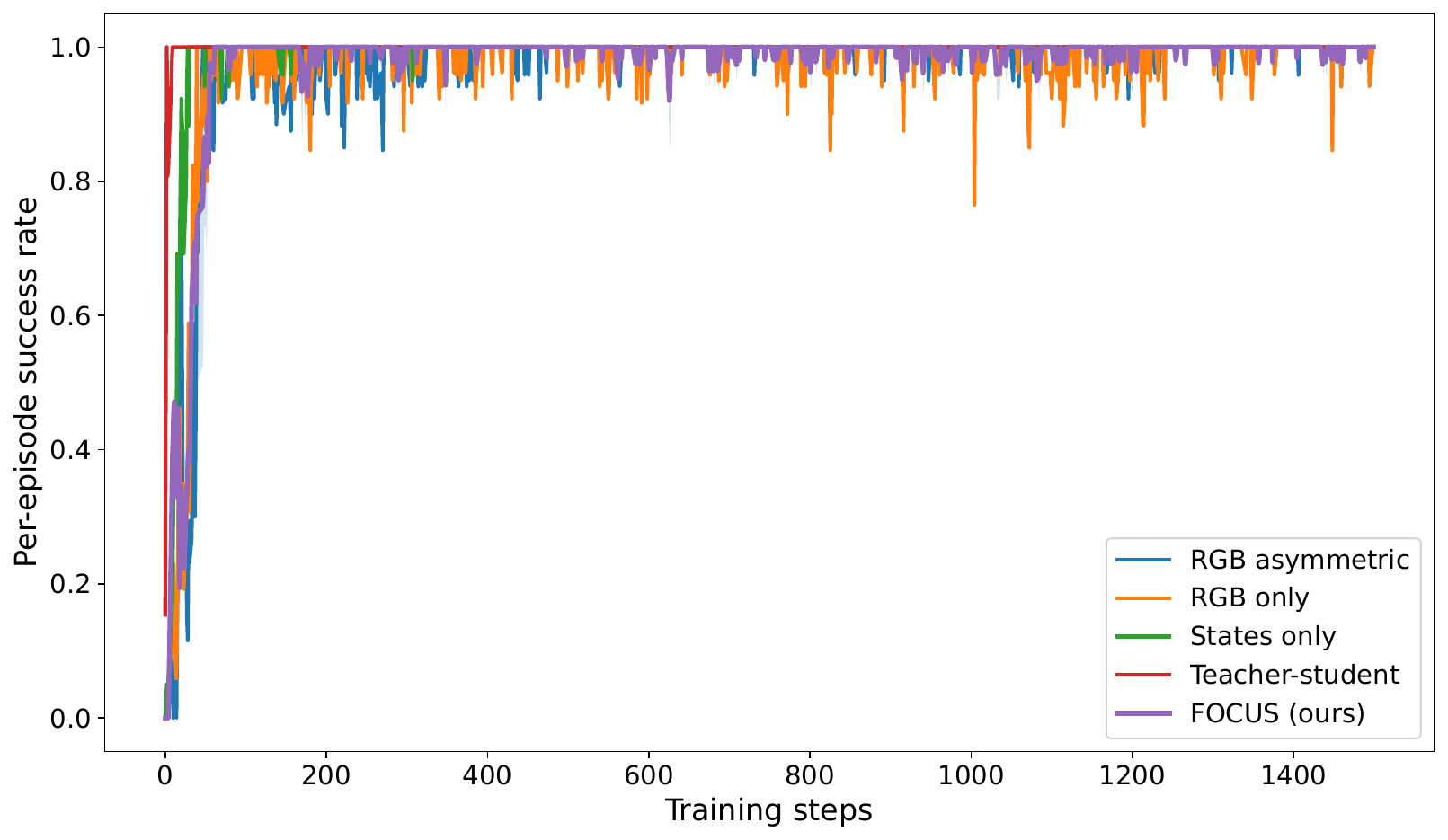}
        \caption{\textbf{Open-Drawer.}}
        \label{fig:per-episode-open-success}
    \end{subfigure}

    \smallskip

    \begin{subfigure}[t]{0.49\columnwidth}\vspace{0pt}
        \centering
        \includegraphics[width=\columnwidth]{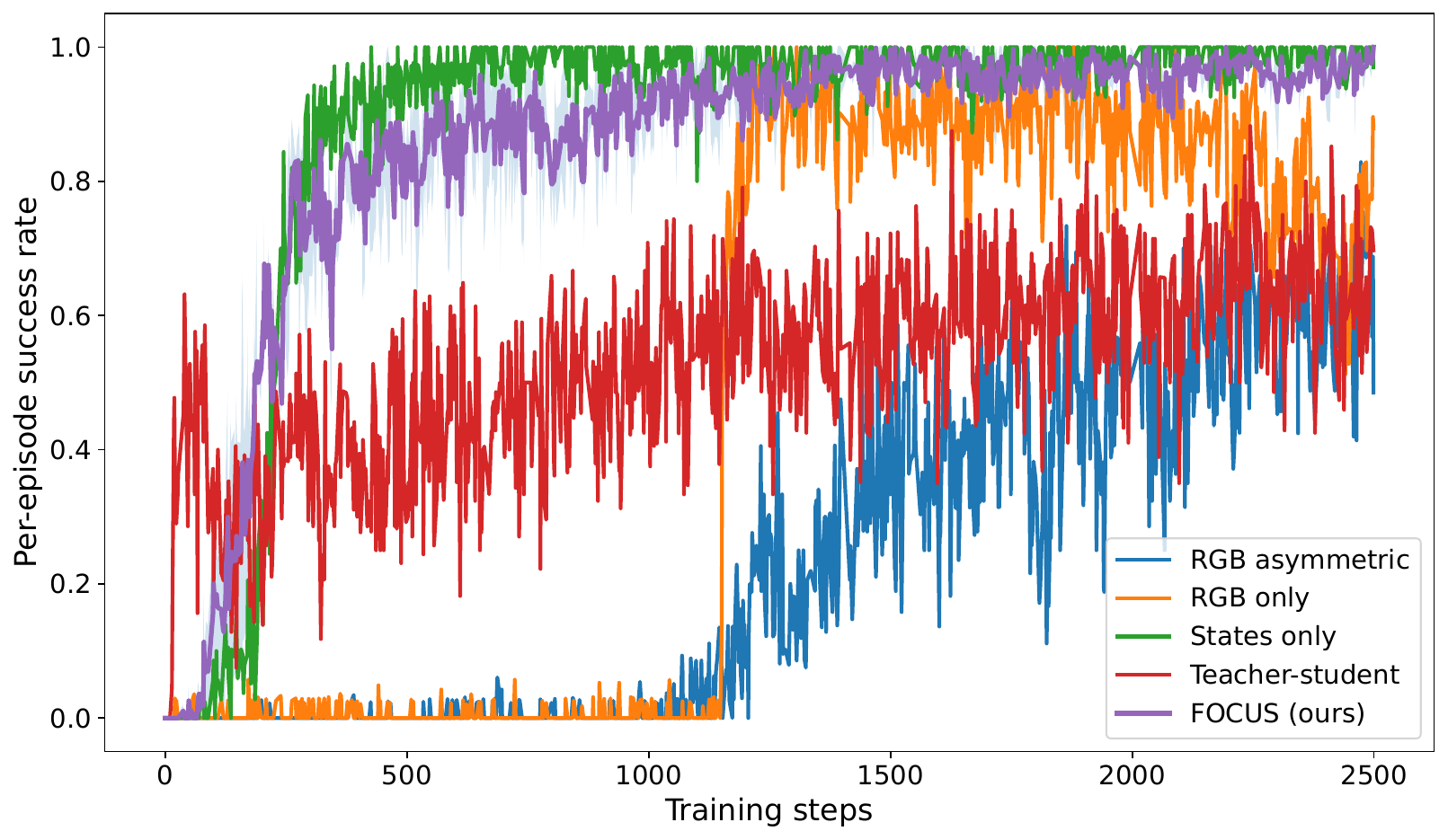}
        \caption{\textbf{Pick-and-Place.}}
        \label{fig:per-episode-pick-place-success}
    \end{subfigure}\hfill
    \begin{subfigure}[t]{0.49\columnwidth}\vspace{0pt}
        \centering
    \end{subfigure}

    \caption{\textbf{Per-episode training success rate} for all tasks. Earlier rises and higher plateaus indicate better sample efficiency.}
    \label{fig:per-episode-success-vs-steps}
\end{figure}

\clearpage

\section{Visualizations}
\label{app:visualizations}

This appendix collects qualitative probes of the learned visual and latent representations. These plots are intended as supporting diagnostics only. They are not part of the main quantitative evidence.

\subsection{Activation probe}
\label{app:activation-probe}

We first report an activation probe in
Figure~\ref{fig:activation-visualizations}, where we visualize intermediate
image-encoder activations as a lightweight probe of the learned visual
representation. We use forward activations rather than Grad-CAM, since they
are inexpensive to log and more compatible with our two-optimizer setup.

Qualitatively, the depth branch tends to emphasize geometric structure around the gripper, object, and drawer regions, while the RGB branch responds more broadly to appearance, contours, and scene boundaries. The activation patterns before and after image exposure remain broadly similar. We do not interpret these maps as a formal explanation of policy behavior. Instead, we use them as a simple check that the learned visual representation remains focused on task-relevant parts of the scene.

\begin{figure}[t]
\centering
\scriptsize

\begin{subfigure}[t]{0.49\columnwidth}
\centering
\setlength{\tabcolsep}{1pt}
\begin{tabular}{@{}c c c@{}}
 & \textbf{before} & \textbf{after} \\
\rotatebox{90}{\textbf{Lift}} &
\includegraphics[width=0.43\linewidth]{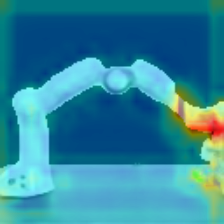} &
\includegraphics[width=0.43\linewidth]{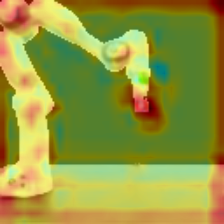} \\
\rotatebox{90}{\textbf{Push}} &
\includegraphics[width=0.43\linewidth]{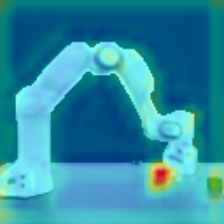} &
\includegraphics[width=0.43\linewidth]{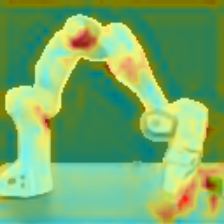} \\
\rotatebox{90}{\textbf{Open}} &
\includegraphics[width=0.43\linewidth]{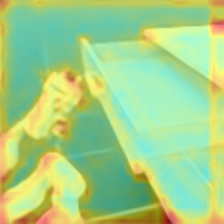} &
\includegraphics[width=0.43\linewidth]{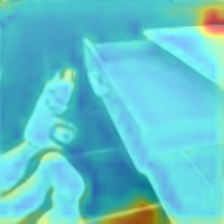} \\
\end{tabular}
\caption{RGB encoder branch}
\end{subfigure}\hfill
\begin{subfigure}[t]{0.49\columnwidth}
\centering
\setlength{\tabcolsep}{1pt}
\begin{tabular}{@{}c c c@{}}
 & \textbf{before} & \textbf{after} \\
\rotatebox{90}{\textbf{Lift}} &
\includegraphics[width=0.43\linewidth]{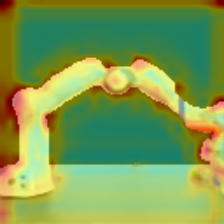} &
\includegraphics[width=0.43\linewidth]{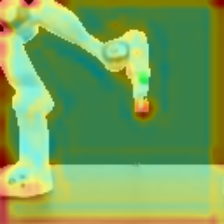} \\
\rotatebox{90}{\textbf{Push}} &
\includegraphics[width=0.43\linewidth]{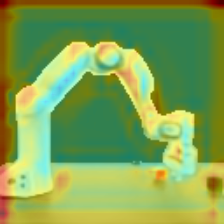} &
\includegraphics[width=0.43\linewidth]{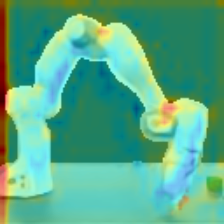} \\
\rotatebox{90}{\textbf{Open}} &
\includegraphics[width=0.43\linewidth]{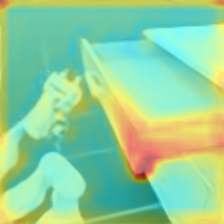} &
\includegraphics[width=0.43\linewidth]{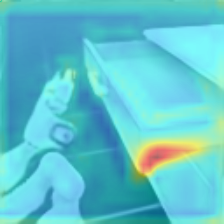} \\
\end{tabular}
\caption{Depth encoder branch}
\end{subfigure}

\caption{\textbf{Encoder activation visualizations} (channel-averaged intermediate feature magnitudes) before and after image exposure for Lift, Push, and Open-Drawer.}
\label{fig:activation-visualizations}
\end{figure}

\clearpage

\subsection{Additional activation visualizations}
\label{app:activation-probe-cheaper}

Figure~\ref{fig:activation-visualizations-cheaper-encoder} provides additional activation maps from the alternative visualization pipeline. As above, these plots are qualitative diagnostics of the learned visual representation. They are useful for checking whether the encoder continues to attend to task-relevant regions after the transition to RGB-D control.

\begin{figure}[H]
\centering
\scriptsize

\begin{subfigure}[t]{0.49\columnwidth}
\centering
\setlength{\tabcolsep}{1pt}
\begin{tabular}{@{}c c c@{}}
 & \textbf{before} & \textbf{after} \\
\rotatebox{90}{\textbf{Lift}} &
\includegraphics[width=0.43\linewidth]{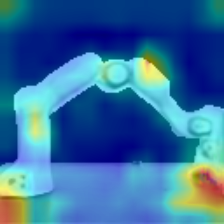} &
\includegraphics[width=0.43\linewidth]{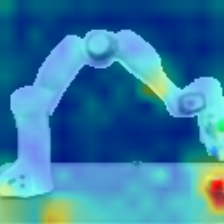} \\
\rotatebox{90}{\textbf{Pick-and-Place}} &
\includegraphics[width=0.43\linewidth]{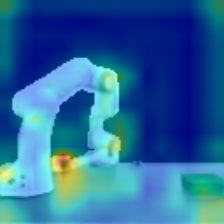} &
\includegraphics[width=0.43\linewidth]{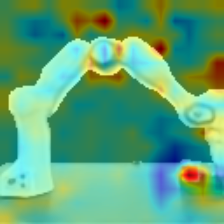} \\
\end{tabular}
\end{subfigure}\hfill
\begin{subfigure}[t]{0.49\columnwidth}
\centering
\setlength{\tabcolsep}{1pt}
\begin{tabular}{@{}c c c@{}}
 & \textbf{before} & \textbf{after} \\
\rotatebox{90}{\textbf{Push}} &
\includegraphics[width=0.43\linewidth]{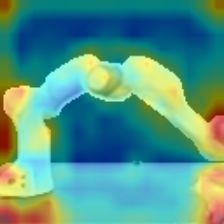} &
\includegraphics[width=0.43\linewidth]{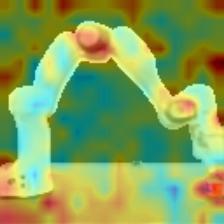} \\
\rotatebox{90}{\textbf{Open}} &
\includegraphics[width=0.43\linewidth]{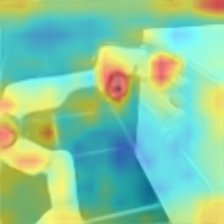} &
\includegraphics[width=0.43\linewidth]{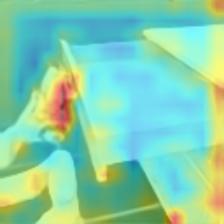} \\
\end{tabular}
\end{subfigure}

\caption{\textbf{RGB-D encoder activation probe} (channel-averaged intermediate feature magnitudes) before and after image exposure for Lift, Push, Pick-and-Place, and Open-Drawer. These plots are qualitative diagnostics of the learned visual representation.}
\label{fig:activation-visualizations-cheaper-encoder}
\end{figure}

\clearpage

\subsection{t-SNE latent visualizations}
\label{app:tsne-visualizations}

Figure~\ref{fig:tsne-tasks} shows t-SNE projections of the state and image latent representations before and after alignment for Lift, Push, Pick-and-Place, and Open-Drawer. These plots are also qualitative. Their purpose is to give a rough visual summary of how the two modalities relate in the latent space over training.

Across the harder tasks, the later projections often show tighter cross-modal grouping than the earlier ones, which is consistent with improved latent compatibility between state and RGB-D. At the same time, these plots should be interpreted cautiously, since t-SNE depends on the projection procedure and does not provide a quantitative measure of alignment. 
We therefore treat these plots as supporting diagnostics rather than primary
evidence. Our main quantitative evidence comes from the task-performance and
ablation results in the main paper.

\begin{figure}[H]
\centering
\scriptsize

\begin{subfigure}[t]{0.6\columnwidth}
\centering
\setlength{\tabcolsep}{1pt}
\begin{tabular}{@{}c c c@{}}
 & \textbf{before} & \textbf{after} \\
\rotatebox{90}{\textbf{Lift}} &
\includegraphics[width=0.6\linewidth]{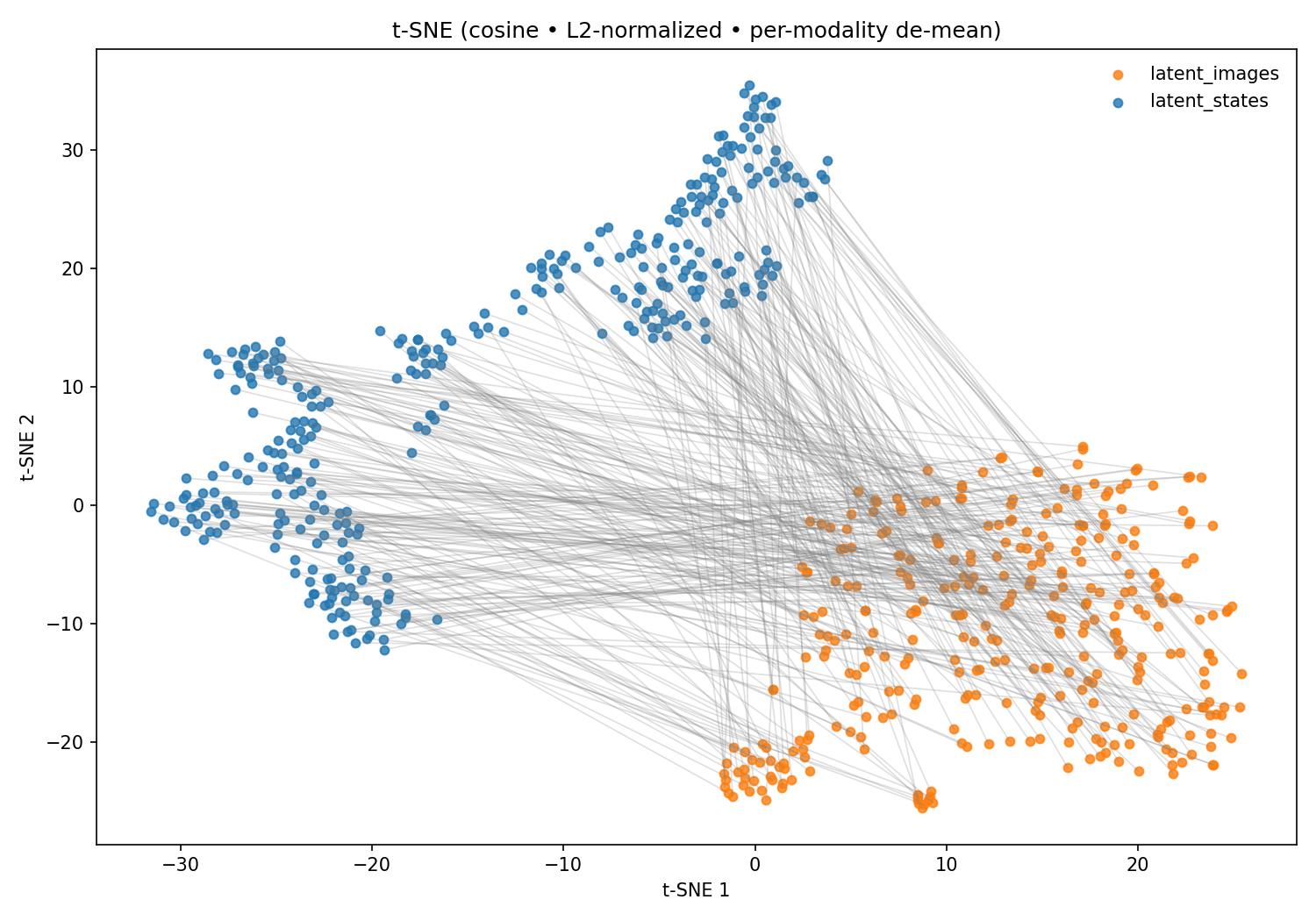} &
\includegraphics[width=0.6\linewidth]{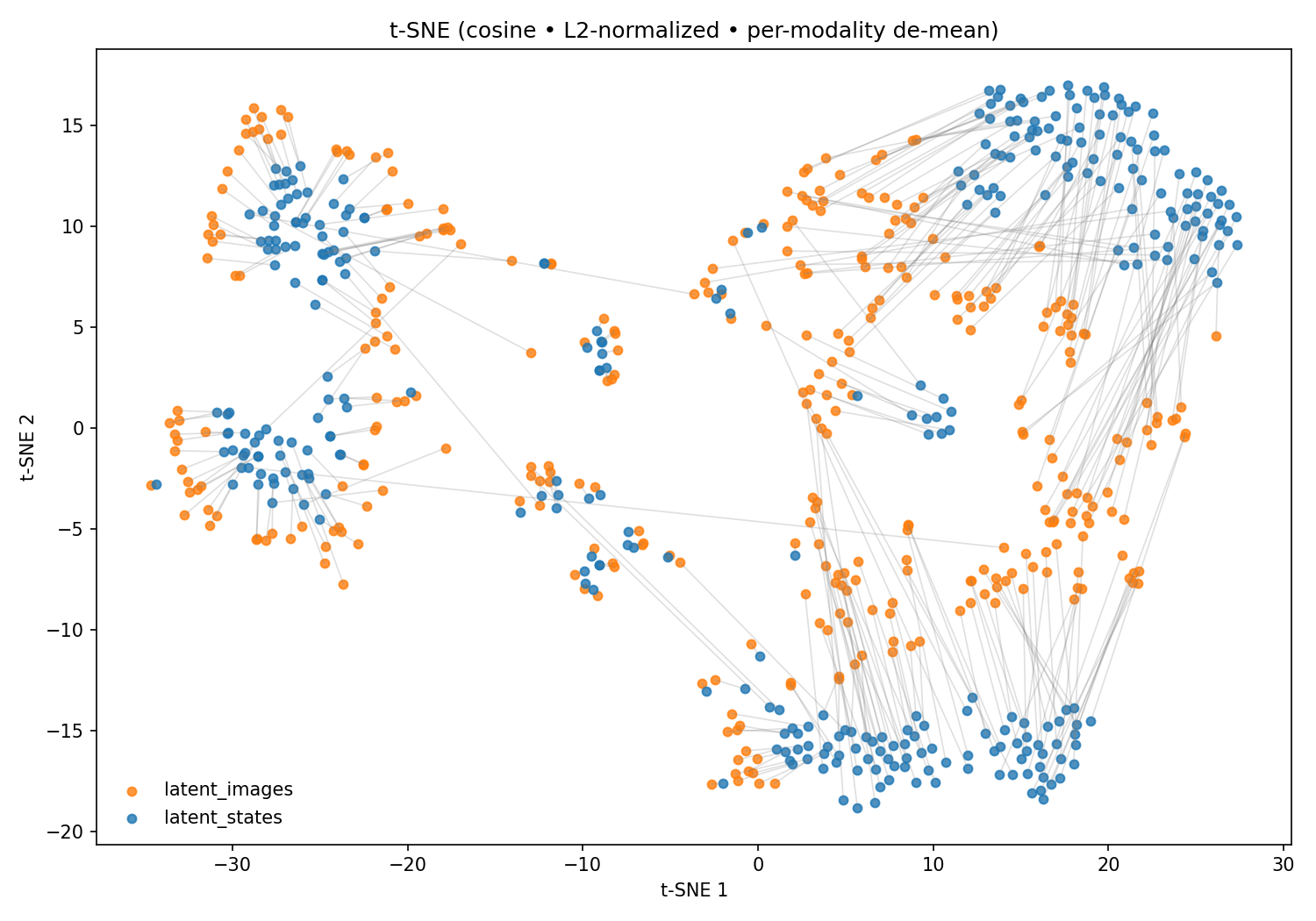} \\
\rotatebox{90}{\textbf{Pick-and-Place}} &
\includegraphics[width=0.6\linewidth]{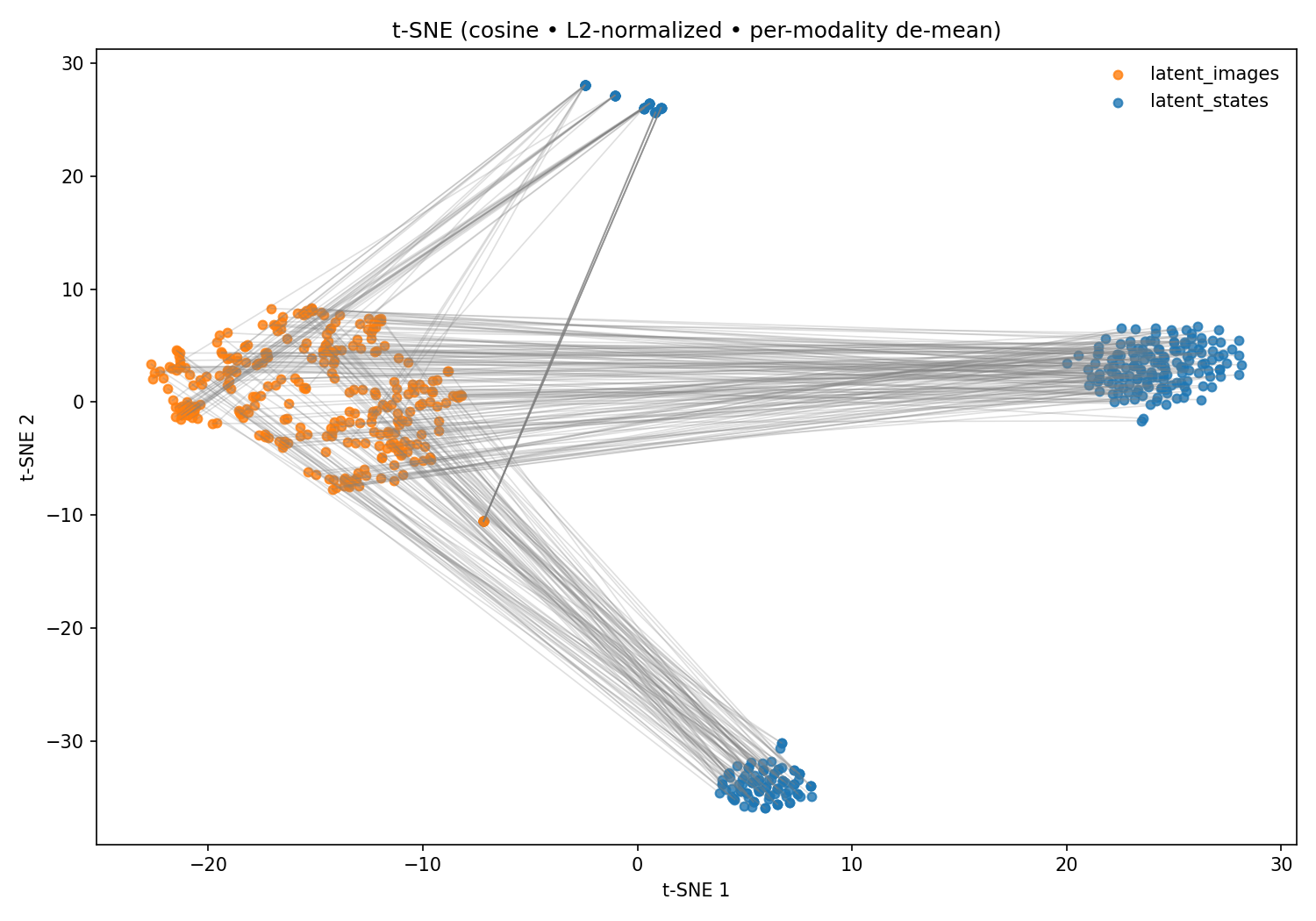} &
\includegraphics[width=0.6\linewidth]{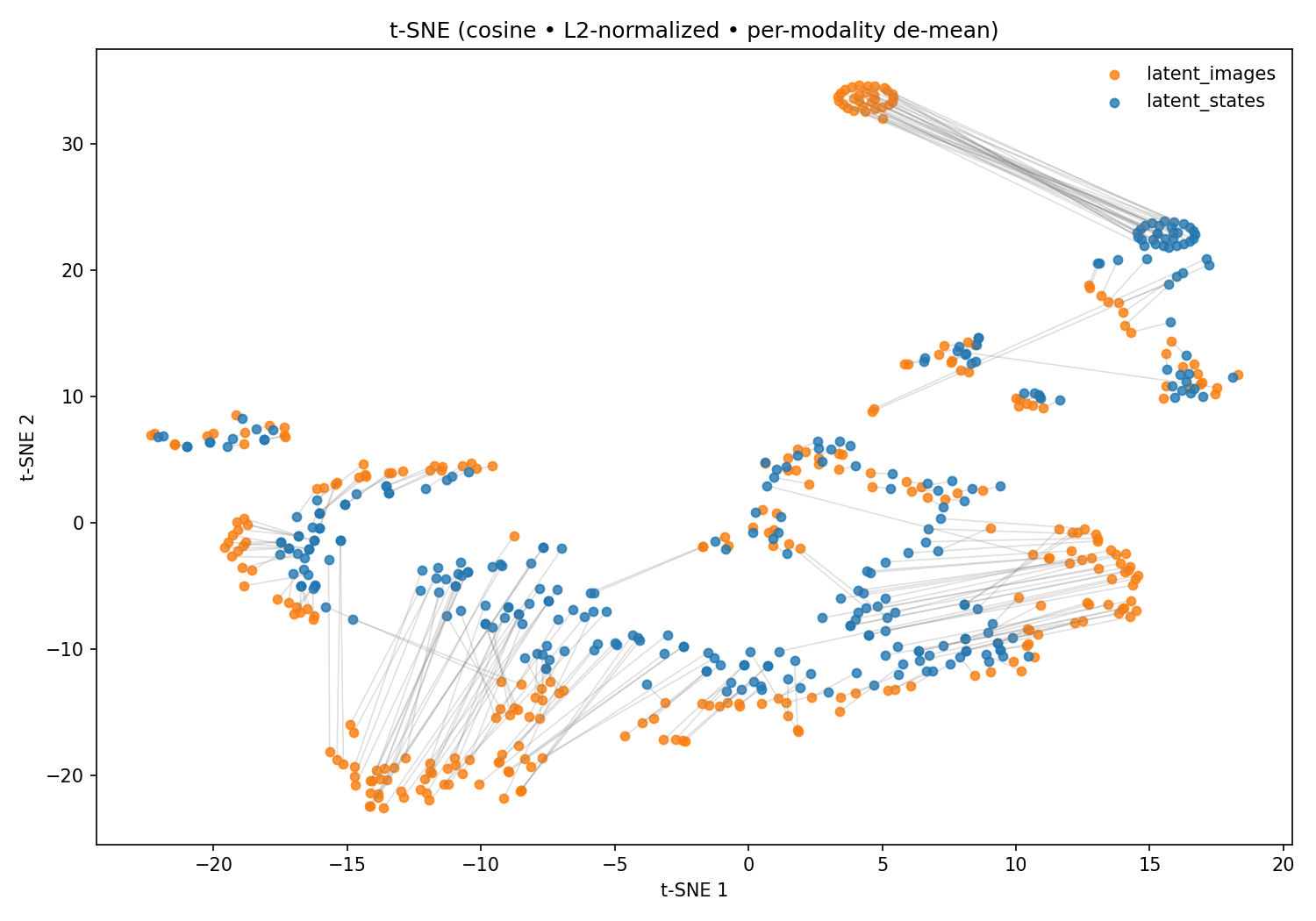} \\
\end{tabular}
\end{subfigure}\hfill
\begin{subfigure}[t]{0.6\columnwidth}
\centering
\setlength{\tabcolsep}{1pt}
\begin{tabular}{@{}c c c@{}}
 & \textbf{before} & \textbf{after} \\
\rotatebox{90}{\textbf{Push}} &
\includegraphics[width=0.6\linewidth]{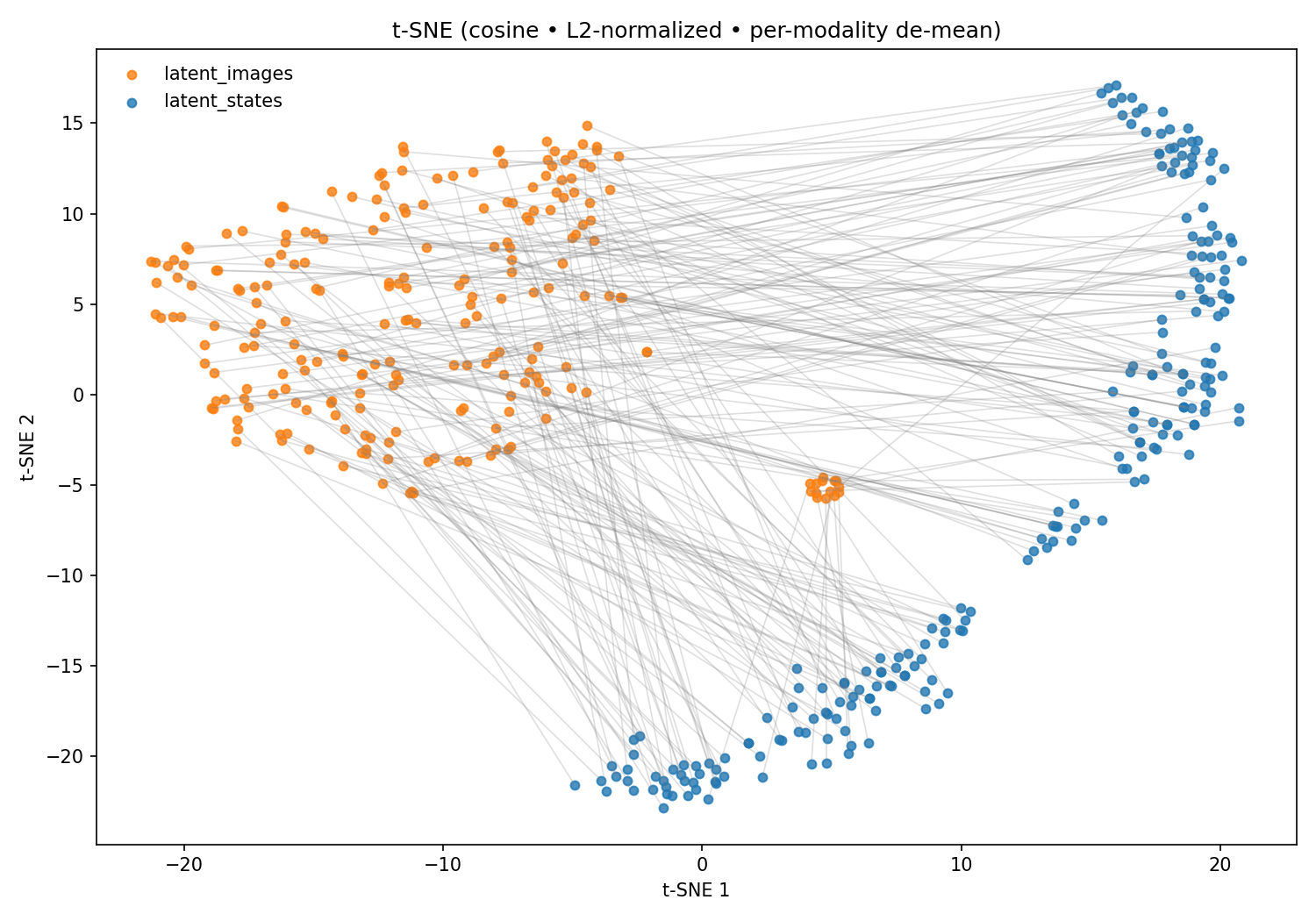} &
\includegraphics[width=0.6\linewidth]{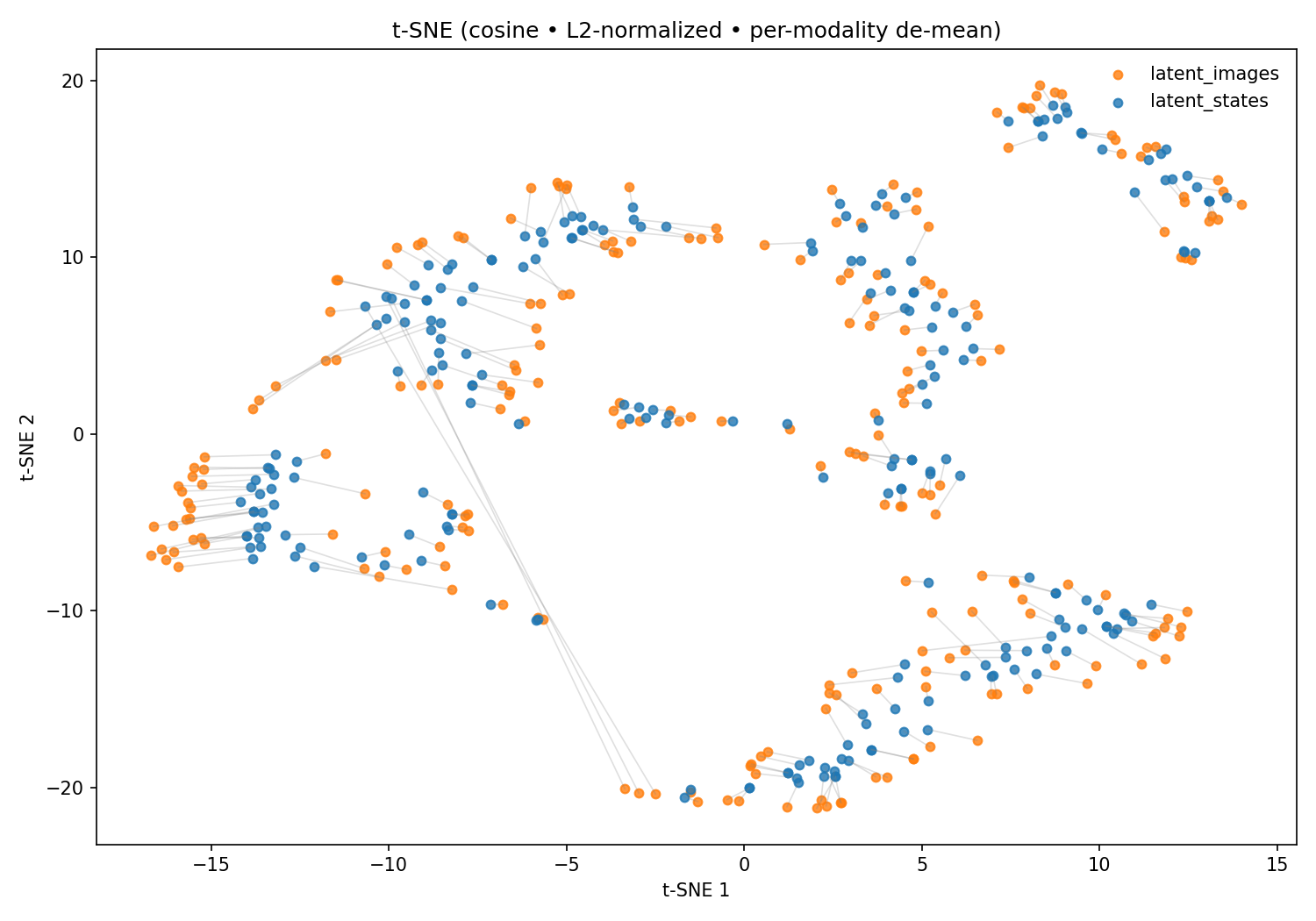} \\
\rotatebox{90}{\textbf{Open}} &
\includegraphics[width=0.6\linewidth]{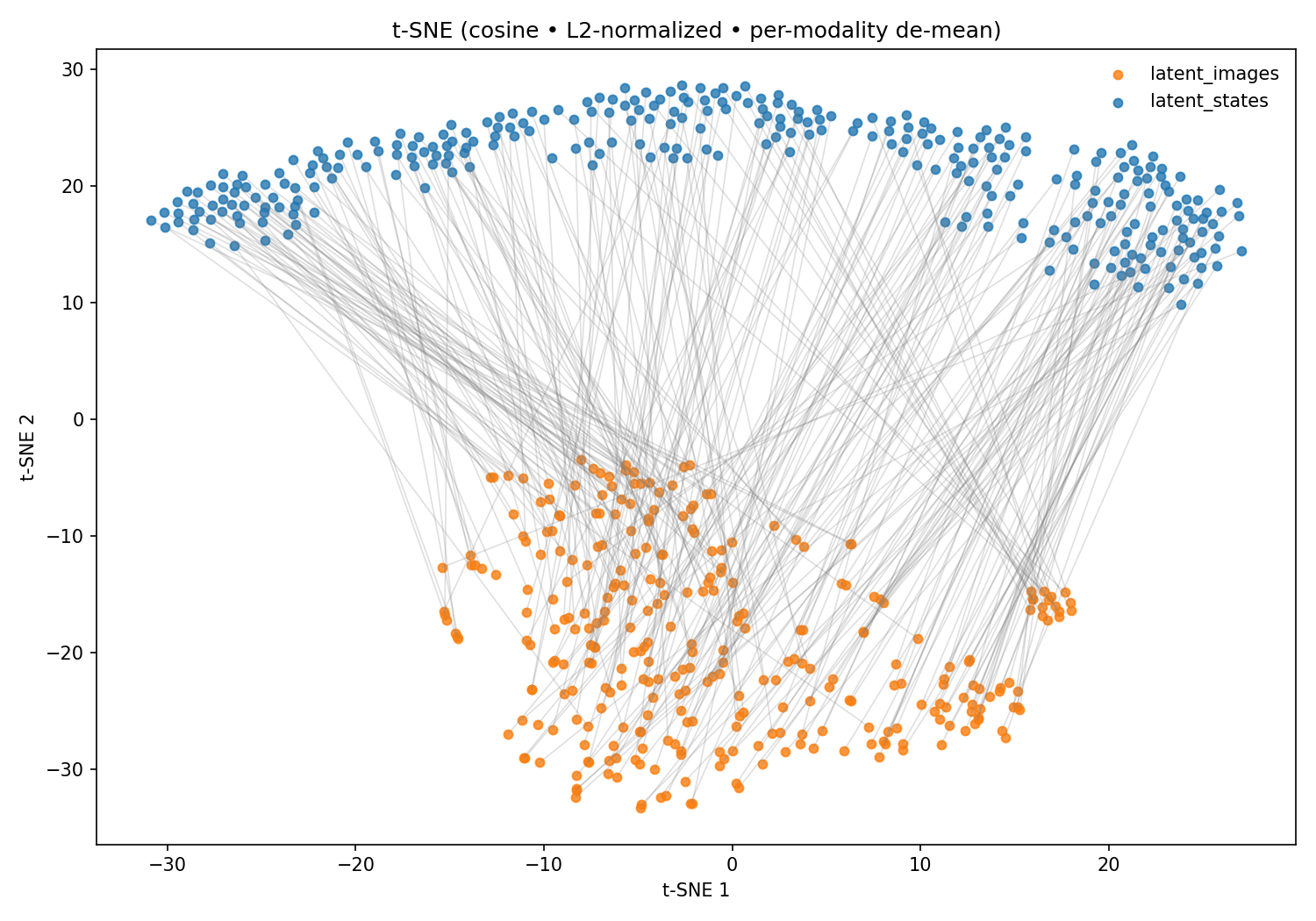} &
\includegraphics[width=0.6\linewidth]{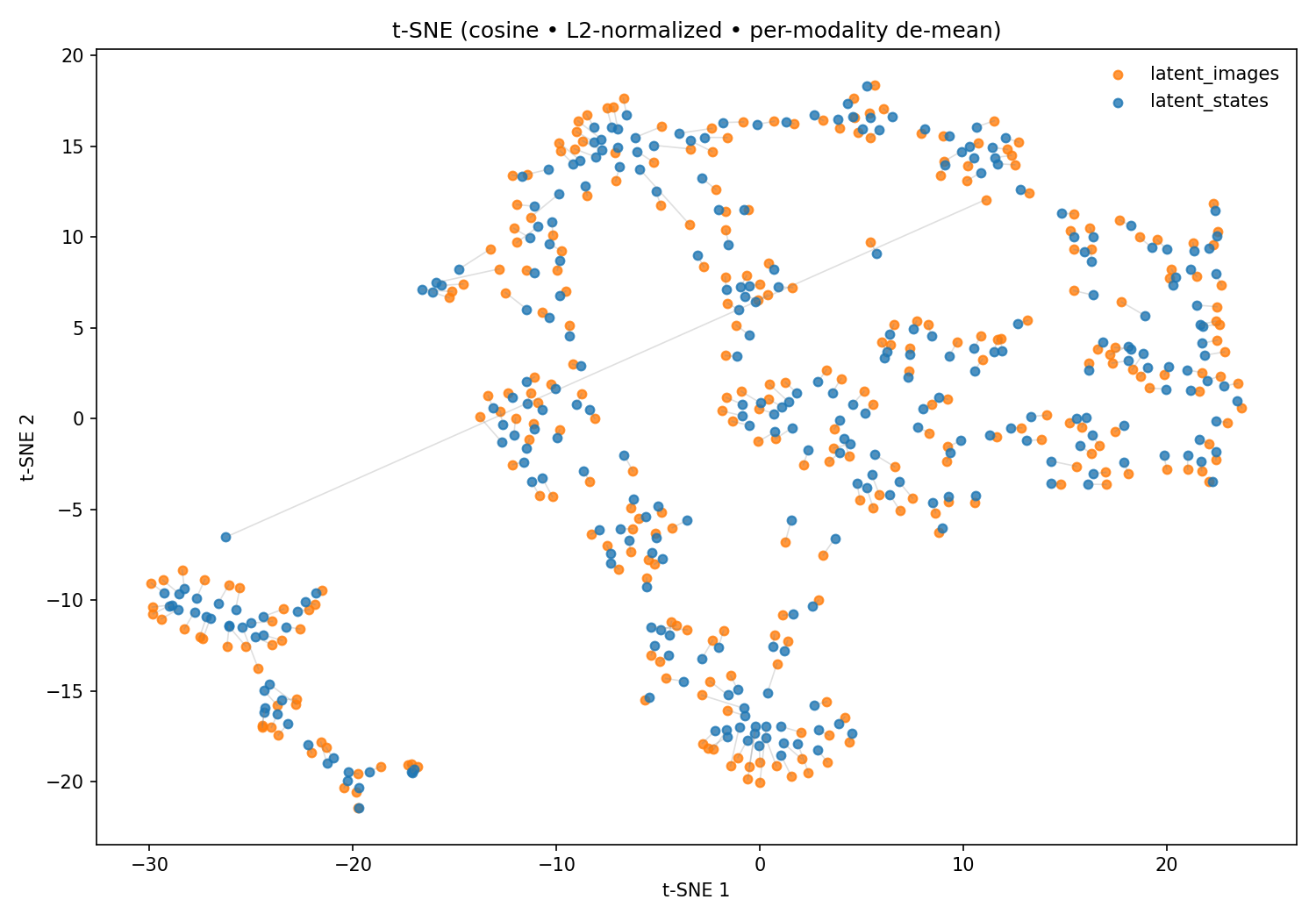} \\
\end{tabular}
\end{subfigure}

\caption{\textbf{t-SNE projections of state and image latent representations} before and after alignment for Lift, Push, Pick-and-Place, and Open-Drawer. These plots are qualitative and are included only as supporting diagnostics.}
\label{fig:tsne-tasks}
\end{figure}

\clearpage

\section{Protocol details}
\label{app:protocol_details}
We evaluate on five IsaacLab manipulation tasks: Reach-Cube, Lift-Cube, Push-Cube, Open-Drawer, and two-block Pick-and-Place. Simulation runs at 100Hz, control at 50Hz, and episodes last 5s unless stated otherwise. All methods use matched task definitions, observation preprocessing, action spaces, and evaluation protocols.

The tasks were selected to cover different levels of visual and control difficulty. Reach-Cube is a saturation and sanity-check task. Lift-Cube tests visually grounded grasping and lifting. Push-Cube tests contact-rich object-goal control. Open-Drawer evaluates articulated-object interaction under dense reward shaping. Pick-and-Place is the most demanding task, requiring grasping, transport, and placement.

\section{Implementation details}
\label{app:implementation_details}

RGB-D observations are rendered from tiled cameras, which are used in IsaacLab to accelerate parallel simulation, and are flattened before being passed to the policy implementation. Because tiled-camera rendering can introduce cross-environment visual artifacts, some task configurations apply a camera decontamination step before constructing the RGB-D observation. This step is used only to remove artifacts of vectorized rendering and is not a source of task information.

The deployable observation interface remains RGB-D, proprioception, and the previous action. The actor is not given object poses, privileged simulator state, contact state, semantic labels, segmentation IDs, or task variables at test time.

In the affected simulator configurations, the cleanup mask removes pixels that do not belong to the agent’s rendered environment before RGB-D construction. This is equivalent in purpose to standard deployable camera preprocessing such as invalid-depth removal, depth-range truncation, workspace cropping, or background clipping using camera calibration. The cleanup is applied identically to all vision-based methods on a given task, including RGB-D-only, RGB-D-asymmetric, teacher-student, and FOCUS.

When such simulator-specific cleanup is unavailable, the same observation contract can be implemented with depth truncation and workspace cropping alone. Thus, this preprocessing is a shared rendering-artifact correction rather than a method-specific component or a source of privileged task information.

\clearpage

\section{Additional implementation defaults}
\label{app:training_details}

\begin{table}[H]
\caption{\textbf{Main default hyperparameters} for \method{} unless otherwise noted.}
\label{tab:training-defaults-main}
\centering
\footnotesize
\setlength{\tabcolsep}{2pt}
\renewcommand{\arraystretch}{1.05}
\begin{tabularx}{\columnwidth}{@{}l X@{}}
\toprule
\multicolumn{2}{@{}l}{\textbf{Training and PPO defaults}} \\
\midrule
Latent dimension $d$                            & $256$ \\
PPO epochs                                      & $1$ \\
PPO mini-batches                                & $1$ \\
Clip parameter $\epsilon$                       & $0.2$ \\
Discount $\gamma$                               & $0.998$ \\
GAE parameter $\lambda$                         & $0.95$ \\
Max grad norm                                   & $1.0$ \\
\midrule
\multicolumn{2}{@{}l}{\textbf{Optimizer defaults}} \\
\midrule
RL optimizer                                    & Adam \\
Actor-critic learning rate                      & $1\times 10^{-3}$ \\
Fusion learning rate                            & $3\times 10^{-4}$ \\
State encoder learning rate                     & $3\times 10^{-4}$ \\
Representation optimizer                        & Adam \\
RGB-D encoder learning rate                     & $1\times 10^{-4}$ \\
Cross-attention aligner learning rate           & $2\times 10^{-4}$ \\
\midrule
\multicolumn{2}{@{}l}{\textbf{Modality switching defaults}} \\
\midrule
Max image exposure $\lambda_{\max}$             & $1.0$ \\
Switch warmup (start, duration)                 & $50$k, $30$k steps \\
Mean-KL budget $\varepsilon_{\text{mean}}$      & $0.03$ \\
Quantile-KL budget $\varepsilon_q$ ($q=0.9$)    & $0.05$ \\
Switch-controller EMA rate                      & $0.02$ \\
Policy-KL EMA rate                              & $0.10$ \\
Policy-KL tail EMA rate                         & $0.05$ \\
\midrule
\multicolumn{2}{@{}l}{\textbf{Main alignment / regularization weights}} \\
\midrule
RL policy-agreement $\lambda_{\pi,\mathrm{RL}}(t)$
                                                & cosine warmup to $0.02$ ($10$k / $50$k) \\
Value-agreement $\lambda_V(t)$                  & cosine warmup to $0.05$ ($5$k / $25$k) \\
Decoder reconstruction weight                   & $1.0$ \\
Latent conformity weight                        & $0.2$ \\
Distillation weight $\lambda_d$                 & $0.02$ \\
VICReg weight $\lambda_v$                       & $0.3$ \\
InfoNCE (raw) weight $\lambda_{\text{raw}}$     & $0.05$ \\
Image policy-agreement $\lambda_{\pi,\mathrm{img}}$
                                                & $0.01$ \\
\bottomrule
\end{tabularx}
\end{table}

\begin{table}[H]
\caption{\textbf{Additional implementation defaults} complementing Table~\ref{tab:training-defaults-main}.}
\label{tab:training-defaults-app}
\centering
\footnotesize
\setlength{\tabcolsep}{2pt}
\renewcommand{\arraystretch}{1.05}
\begin{tabularx}{\columnwidth}{@{}l X@{}}
\toprule
\multicolumn{2}{@{}l}{\textbf{Additional PPO / optimization details}} \\
\midrule
Value loss weight $\alpha_{\text{val}}$         & $1.0$ \\
Entropy weight $\beta_{\text{ent}}$             & $0.0$ \\
Clipped value loss                              & enabled \\
PPO KL target $\kappa_{\mathrm{PPO}}$           & $0.01$ when adaptive LR scheduling is enabled \\
RL optimizer details                            & Adam, betas $(0.9,0.999)$, weight decay $0$ \\
Representation optimizer details                & Adam, betas $(0.9,0.999)$, weight decay $10^{-5}$ \\
State decoder learning rate                     & $1\times 10^{-4}$ \\
\midrule
\multicolumn{2}{@{}l}{\textbf{Additional representation-loss details}} \\
\midrule
BYOL weight $\lambda_{\text{byol}}$             & $0.02$ \\
Raw-similarity weight $\lambda_{\text{raw-sim}}$& $0.01$ \\
BYOL EMA momentum $\tau(t)$                     & linear ramp $0.996 \rightarrow 0.999$ over $100$k steps \\
\bottomrule
\end{tabularx}
\end{table}

\begin{table}[H]
\caption{\textbf{Torch-level architecture} components used in the main experiments (latent dim $d{=}256$).}
\label{tab:arch-summary}
\centering
\captionsetup{skip=4pt}
\footnotesize
\setlength{\tabcolsep}{4pt}
\renewcommand{\arraystretch}{1.12}
\begin{tabularx}{\linewidth}{@{}p{0.22\linewidth} X@{}}
\toprule
\multicolumn{2}{@{}l}{\textbf{Architecture summary} (latent dim $d{=}256$)} \\
\midrule

\textbf{State encoder $E_s$} &
Linear$(\text{states}\!\rightarrow\!d)$
$\rightarrow$ Swish
$\rightarrow$ LayerNorm. \\

\addlinespace[0.6ex]
\textbf{RGB-D encoder $E_o$} &
\textbf{Shared-trunk ResNet-18 RGB-D encoder.}
Separate RGB/depth stems (RGB Conv1 from pretrained ResNet-18~\cite{he2016deep}; depth Conv1 initialized from averaged RGB weights, with optional validity-mask channel)
$\rightarrow$ summed stems
$\rightarrow$ shared ResNet-18 trunk (BN/ReLU/MaxPool, Layer1--Layer4)
$\rightarrow$ ECA channel attention~\cite{wang2020eca} (+ optional CBAM-style SpatialGate~\cite{woo2018cbam})
$\rightarrow$ AdaptiveAvgPool2d$(1,1)$
$\rightarrow$ Linear$(512\!\rightarrow\!d)$
$\rightarrow$ LayerNorm (projection: Identity). \\

\addlinespace[0.6ex]
\textbf{Fusion $g_\psi$} &
Residual fusion of a chosen latent $z\in\mathbb{R}^d$ with proprioception and previous action:
concat$([z;\mathbf{s}^{prop}_t;\mathbf{a}_{t-1}])$
$\rightarrow$ LayerNorm
$\rightarrow$ Linear$(m\!\rightarrow\!d)$ (zero-init)
$\rightarrow$ residual add $z{+}\Delta$.
(\emph{Same} fusion block used for actor and critic inputs; critic always uses $z_{t,s}$.) \\

\addlinespace[0.6ex]
\textbf{Aligner $f_\varphi$} &
Symmetric cross-attention aligner operating on raw latents $(\mathbf{z}_{t,s},\mathbf{z}_{t,o})$ (2 heads).
Per stream: CrossAttn$(q{=}z,\,kv{=}z_{\text{other}})$
$\rightarrow$ concat$([z;\mathrm{att}])$
$\rightarrow$ LayerNorm
$\rightarrow$ Linear$(2d\!\rightarrow\!d)$
$\rightarrow$ gated residual update. \\

\addlinespace[0.6ex]
\textbf{State decoder $D_s$} &
Residual MLP decoder from raw latent to privileged-state target:
$[d \!\rightarrow\! 512 \!\rightarrow\! 1024 \!\rightarrow\! 1024 \!\rightarrow\! 512 \!\rightarrow\! d_{\text{state}}]$
with Swish activations, Dropout$(0.1)$ after the first layer, final LayerNorm, and residual Linear$(d\!\rightarrow\!d_{\text{state}})$. \\

\addlinespace[0.6ex]
\textbf{Scratch CNN $E_o$ ablation} &
For the ``RGB-D encoder from scratch'' ablation, the ResNet-based $E_o$ is replaced by a lightweight two-branch RGB-D CNN.
The RGB and depth streams each use three stride-2 Conv--Swish blocks with channels $32,64,128$, followed by AdaptiveAvgPool2d$(4,4)$, flattening, Linear$(\cdot\!\rightarrow\!d)$, and LayerNorm.
The two $d$-dimensional branch features are concatenated and passed through a two-layer MLP to produce the final $d$-dimensional image latent.
The same RGB-D preprocessing contract is used as in the ResNet-based encoder. \\

\bottomrule
\end{tabularx}
\end{table}

\section{Logged metrics and diagnostics}
\label{app:diagnostics}
\begin{table}[H]
\caption{\textbf{Train/test metrics and additional diagnostics} logged by our protocol.}
\label{tab:metrics-diagnostics}
\centering
\tiny
\setlength{\tabcolsep}{2.5pt}
\renewcommand{\arraystretch}{1.08}
\begin{tabularx}{\linewidth}{@{}l c X@{}}
\toprule
\textbf{Signal (TB tag)} & \textbf{Phase} & \textbf{What it captures} \\
\midrule
\multicolumn{3}{@{}l}{\textit{Performance \& sample efficiency}}\\
\texttt{Train/success\_rate\_ema} & train & smoothed training success vs.\ env steps \\
\texttt{Train/steps\_to\_50pct} & train & Steps@50 (time-to-threshold) \\
\texttt{Test/Success\_rate} & test & evaluation success rate over episodes/seeds \\
\texttt{Test/rewards\_mean} & test & mean episode return (task-specific scale) \\
\midrule
\multicolumn{3}{@{}l}{\textit{PPO update stability}}\\
\texttt{Stability/ratio\_std\_mean} & train & mean ratio dispersion within PPO update \\
\texttt{Stability/ratio\_std\_p95} & train & upper-tail ratio volatility (95th percentile) \\
\texttt{Stability/ratio\_std\_time>0.3} & train & fraction of updates with ratio-std above threshold \\
\texttt{Stability/kl\_spike\_rate\_tail} & train & frequency of large KL spikes (tail events) \\
\texttt{Stability/neg\_jump\_rate\_tail} & train & tail rate of negative update “jumps” (debug) \\
\texttt{Stability/dReturn\_var\_tail} & train & tail variability proxy of returns/advantages (debug) \\
\texttt{StabilityLate/neg\_jump\_rate} & train & late-training negative jump rate (debug) \\
\texttt{StabilityLate/dReturn\_var} & train & late-training return-variance proxy (debug) \\
\midrule
\multicolumn{3}{@{}l}{\textit{Cross-modal policy mismatch}}\\
\texttt{Diag/policy\_KL\_image\_to\_state} & train & $\mathrm{KL}(\pi(\cdot\mid \tilde{\mathbf z}^{\,o})\,\|\,\pi(\cdot\mid \tilde{\mathbf z}^{\,s}))$ (behavioral gap) \\
\texttt{Diag/policy\_KL\_state\_to\_image} & train & $\mathrm{KL}(\pi(\cdot\mid \tilde{\mathbf z}^{\,s})\,\|\,\pi(\cdot\mid \tilde{\mathbf z}^{\,o}))$ \\
\texttt{Diag/KL\_i\_s\_p90} & train & upper-tail policy KL, 90th percentile (rare mismatch) \\
\texttt{Diag/KL\_i\_s\_p95} & train & upper-tail policy KL, 95th percentile (rarer mismatch) \\
\texttt{Diag/KL\_i\_s\_tail\_frac\_gt\_0.25} & train & tail exceedance rate: $\Pr[\mathrm{KL}_{o\to s} > 0.25]$ (example threshold) \\
\texttt{Diag/pi\_mean\_diff\_all} & train & policy mean disagreement summary across modalities (debug) \\
\texttt{Diag/mu\_diff\_norm} & train & $\|\Delta \mu\|$ between modality-conditioned Gaussians \\
\texttt{Diag/logstd\_diff\_norm} & train & $\|\Delta \log\sigma\|$ between modality-conditioned Gaussians \\
\texttt{Diag/std\_image\_mean} & train & mean action std under image-conditioned policy \\
\texttt{Diag/std\_state\_mean} & train & mean action std under state-conditioned policy \\
\midrule
\multicolumn{3}{@{}l}{\textit{Cross-modal value mismatch}}\\
\texttt{Diag/value\_gap\_mse\_half} & train & $\tfrac12\mathbb{E}\!\left[(V(\tilde{\mathbf z}^{\,s})-V(\tilde{\mathbf z}^{\,o}))^2\right]$ (critic disagreement) \\
\midrule
\multicolumn{3}{@{}l}{\textit{Latent / fusion mismatch (pre- and post-fusion)}}\\
\texttt{Diag/gap\_raw\_latents} & train & latent mismatch before alignment/fusion (raw encoder outputs) \\
\texttt{Diag/gap\_aligned\_latents} & train & latent mismatch after alignment module (if enabled) \\
\texttt{Diag/gap\_actor\_input} & train & mismatch at final actor input after fusion \\
\texttt{Diag/fusion\_gap\_raw} & train & raw fusion input gap (debug) \\
\texttt{Diag/fusion\_gap\_fused} & train & fused feature gap (debug) \\
\texttt{Diag/fusion\_gap\_fused\_over\_raw} & train & ratio of fused gap to raw gap (debug) \\
\texttt{Diag/fusion\_delta\_gap} & train & change in gap induced by fusion (debug) \\
\texttt{Diag/fusion\_delta\_norm\_i} & train & magnitude of fusion update on image pathway (debug) \\
\texttt{Diag/fusion\_delta\_norm\_s} & train & magnitude of fusion update on state pathway (debug) \\
\texttt{Diag/fusion\_fc\_weight\_norm} & train & fusion layer weight norm (debug / drift) \\
\texttt{Diag/zs\_norm} & train & $\|z_s\|$ (state latent norm tracker) \\
\texttt{Diag/zi\_norm} & train & $\|z_o\|$ (image latent norm tracker) \\
\midrule
\multicolumn{3}{@{}l}{\textit{Sensitivity diagnostics}}\\
\texttt{Diag/jacobian\_sensitivity} & train & local action sensitivity to latent mismatch \\
\texttt{Diag/actor\_L\_eff\_along\_dz} & train & effective gain $\|\Delta\mu\|/\|\Delta z\|$ along modality gap \\
\midrule
\multicolumn{3}{@{}l}{\textit{Representation losses (alignment branch)}}\\
\texttt{AlignLoss/alignment\_regularization} & train & total representation/alignment regularizer (summary) \\
\texttt{AlignLoss/distillation\_loss} & train & distillation/teacher matching term (if used) \\
\texttt{AlignLoss/feats\_mse\_loss} & train & feature-space MSE (debug / ablation) \\
\texttt{AlignLoss/similarity\_loss} & train & similarity-based alignment loss (debug) \\
\texttt{AlignLoss/sim\_score} & train & similarity score summary (higher is better) \\
\texttt{AlignLoss/raw\_latents\_similarity\_loss} & train & raw-latent similarity loss (pre-align) \\
\texttt{AlignLoss/raw\_latents\_sim\_score} & train & raw-latent similarity score (pre-align) \\
\texttt{AlignLoss/vicreg\_loss} & train & VICReg regularization term (if used) \\
\midrule
\multicolumn{3}{@{}l}{\textit{Alignment quality (retrieval / Procrustes summaries)}}\\
\texttt{PreAlign/procrustes\_rmse\_rl} & train & Procrustes RMSE between pre-align latents (RL-side) \\
\texttt{PreAlign/recall\_at\_1\_rl} & train & retrieval recall@1 for pre-align latents (debug) \\
\texttt{PreAlign/std\_image\_rl} & train & pre-align image latent std (debug) \\
\texttt{PreAlign/std\_state\_rl} & train & pre-align state latent std (debug) \\
\texttt{PostAlign/procrustes\_rmse} & train & Procrustes RMSE between post-align latents (debug) \\
\texttt{PostAlign/recall\_at\_1} & train & retrieval recall@1 for post-align latents (debug) \\
\midrule
\multicolumn{3}{@{}l}{\textit{RL-side agreement / weighting (if enabled)}}\\
\texttt{RL\_align/L\_align\_RL} & train & total RL-side alignment/consistency term (summary) \\
\texttt{RL\_align/policy\_consistency\_KL} & train & policy agreement regularizer (KL) \\
\texttt{RL\_align/value\_consistency\_MSE} & train & value agreement regularizer (MSE) \\
\texttt{RL\_align/lambda\_pi} & train & weight schedule/value for policy-consistency term \\
\texttt{RL\_align/lambda\_V} & train & weight schedule/value for value-consistency term \\
\bottomrule
\end{tabularx}
\end{table}

\section{Engineered rewards for each task}
\label{app:rewards}
\begin{lstlisting}[caption={Reward terms and corresponding implementations for the \textbf{Lift-Cube} task.}, label={lst:lift_cube_rewards}]
@configclass
class RewardsCfg:
    reaching_object = RewTerm(
        func=mdp.object_ee_distance_exp,
        params={"std": 0.1},
        weight=10.0,
    )

    grasp_reward = RewTerm(
        func=mdp.grasp_reward_from_pnp,
        params={
            "open_joint_pos": 0.04,
            "distance_threshold": 0.07,
            "near_close_reward": 1.0,
            "far_open_reward": 1.0,
        },
        weight=8.0,
    )

    lifting_object = RewTerm(
        func=mdp.object_is_lifted,
        params={"minimal_height": 0.04},
        weight=15.0,
    )

    dense_object_is_lifted = RewTerm(
        func=mdp.object_is_lifted_dense,
        params={
            "minimal_height": 0.02,
            "target_height": 0.10,
            "sigmoid_scale": 40.0,
            "object_cfg": SceneEntityCfg("object"),
        },
        weight=5.0,
    )

    object_goal_tracking = RewTerm(
        func=mdp.object_goal_distance_V2,
        params={
            "std": 0.3,
            "minimal_height": 0.04,
            "command_name": "object_pose",
        },
        weight=16.0,
    )

    object_goal_tracking_fine_grained = RewTerm(
        func=mdp.object_goal_distance_V2,
        params={
            "std": 0.05,
            "minimal_height": 0.04,
            "command_name": "object_pose",
        },
        weight=10.0,
    )

    success_rate = RewTerm(
        func=mdp.success_rate,
        params={
            "std": 0.1,
            "command_name": "object_pose",
        },
        weight=1e-16,
    )


def object_is_lifted(
    env: ManagerBasedRLEnv,
    minimal_height: float,
    object_cfg: SceneEntityCfg = SceneEntityCfg("object"),
) -> torch.Tensor:
    object: RigidObject = env.scene[object_cfg.name]
    return torch.where(object.data.root_link_pos_w[:, 2] > minimal_height, 1.0, 0.0)


def object_is_lifted_dense(
    env: ManagerBasedRLEnv,
    minimal_height: float,
    target_height: float = 0.2,
    object_cfg: SceneEntityCfg = SceneEntityCfg("object"),
    sigmoid_scale: float = 50.0,
    bonus_weight: float = 0.5,
) -> torch.Tensor:
    object = env.scene[object_cfg.name]
    z_pos = object.data.root_link_pos_w[:, 2]

    lift_progress = torch.sigmoid(sigmoid_scale * (z_pos - minimal_height))
    lifted_above_target = (z_pos > target_height).float()
    bonus = lifted_above_target * bonus_weight

    return lift_progress + bonus


def grasp_reward_from_pnp(
    env: ManagerBasedRLEnv,
    open_joint_pos: float = 0.04,
    distance_threshold: float = 0.02,
    near_close_reward: float = 1.0,
    far_open_reward: float = 1.0,
    asset_cfg: SceneEntityCfg = SceneEntityCfg("robot"),
    object_cfg: SceneEntityCfg = SceneEntityCfg("object"),
    ee_frame_cfg: SceneEntityCfg = SceneEntityCfg("ee_frame"),
) -> torch.Tensor:
    robot = env.scene[asset_cfg.name]
    object: RigidObject = env.scene[object_cfg.name]
    ee_frame: FrameTransformer = env.scene[ee_frame_cfg.name]

    object_pos = object.data.root_link_pos_w
    ee_tcp_pos = ee_frame.data.target_pos_w[..., 0, :]
    distance = torch.norm(ee_tcp_pos - object_pos, dim=-1)

    is_near = distance < distance_threshold
    is_far = ~is_near

    gripper_joint_pos = robot.data.joint_pos[:, asset_cfg.joint_ids]
    closure_amount = torch.sum(open_joint_pos - gripper_joint_pos, dim=-1) / open_joint_pos
    closure_amount = torch.clamp(closure_amount, 0.0, 1.0)

    reward = (
        is_near.float() * closure_amount * near_close_reward +
        is_far.float() * (1.0 - closure_amount) * 0.1 * far_open_reward
    )

    return reward


def object_ee_distance_exp(
    env: ManagerBasedRLEnv,
    std: float,
    object_cfg: SceneEntityCfg = SceneEntityCfg("object"),
    ee_frame_cfg: SceneEntityCfg = SceneEntityCfg("ee_frame"),
) -> torch.Tensor:
    object: RigidObject = env.scene[object_cfg.name]
    ee_frame: FrameTransformer = env.scene[ee_frame_cfg.name]

    cube_pos_w = object.data.root_link_pos_w
    ee_w = ee_frame.data.target_pos_w[..., 0, :]
    object_ee_distance = torch.norm(cube_pos_w - ee_w, dim=1)

    return torch.exp(-object_ee_distance / std)


def object_goal_distance_V2(
    env: ManagerBasedRLEnv,
    std: float,
    minimal_height: float,
    command_name: str,
    robot_cfg: SceneEntityCfg = SceneEntityCfg("robot"),
    object_cfg: SceneEntityCfg = SceneEntityCfg("object"),
) -> torch.Tensor:
    robot: RigidObject = env.scene[robot_cfg.name]
    object: RigidObject = env.scene[object_cfg.name]

    command = env.command_manager.get_command(command_name)

    des_pos_b = command[:, :3]
    des_pos_w, _ = combine_frame_transforms(
        robot.data.root_state_w[:, :3],
        robot.data.root_state_w[:, 3:7],
        des_pos_b,
    )

    object_pos = object.data.root_link_pos_w[:, :3]
    object_z = object_pos[:, 2]
    distance = torch.norm(des_pos_w - object_pos, dim=1)

    height_factor = torch.clamp((object_z - minimal_height) / 0.05, 0.0, 1.0)
    dist_reward = torch.exp(-distance / std)

    return height_factor * dist_reward
\end{lstlisting}

\begin{lstlisting}[caption={Reward configuration and reward functions for the \textbf{Pick-and-Place} task.}, label={lst:pick_place_rewards}]
@configclass
class RewardsCfg:
    pick_and_place_reward = RewTerm(
        func=mdp.rewards.pick_and_place_reward,
        params={
            "force_threshold": 0.3,
            "closure_thr": 0.5,
            "lift_height": 0.03,
            "place_horizontal_threshold": 0.07,
            "place_height_threshold": 0.06,
            "place_bonus": 3.0,
            "reach_scale": 5.0,
            "transport_scale": 5.0,
            "object_cfg": SceneEntityCfg("object"),
            "target_cfg": SceneEntityCfg("target_object"),
            "asset_cfg": SceneEntityCfg("robot", joint_names=["panda_finger_.*"]),
            "ee_frame_cfg": SceneEntityCfg("ee_frame"),
        },
        weight=10.0,
    )


def is_grasping(
    env,
    sensor_cfg: SceneEntityCfg = SceneEntityCfg("object_contact_sensor"),
    asset_cfg: SceneEntityCfg = SceneEntityCfg("robot"),
    open_joint_pos: float = 0.04,
    force_threshold: float = 2.0,
    closure_thr: float = 0.6,
) -> torch.Tensor:
    sensor: ContactSensor = env.scene.sensors[sensor_cfg.name]
    f = sensor.data.net_forces_w.norm(dim=-1)
    contact_detected = (f > force_threshold).any(dim=-1)

    robot = env.scene[asset_cfg.name]
    finger_q = robot.data.joint_pos[:, asset_cfg.joint_ids]
    open_amt = finger_q.mean(dim=-1).clamp(0.0, open_joint_pos) / open_joint_pos
    closed = open_amt < (1.0 - closure_thr)

    return contact_detected & closed


def pick_and_place_reward(
    env: ManagerBasedRLEnv,
    force_threshold: float = 0.1,
    place_horizontal_threshold: float = 0.07,
    place_height_threshold: float = 0.06,
    place_bonus: float = 3.0,
    reach_scale: float = 5.0,
    transport_scale: float = 5.0,
    lift_height: float = 0.03,
    lift_bonus: float = 2.0,
    closure_thr: float = 0.6,
    close_reward_scale: float = 2.0,
    close_distance_scale: float = 0.05,
    object_cfg: SceneEntityCfg = SceneEntityCfg("object"),
    target_cfg: SceneEntityCfg = SceneEntityCfg("target_object"),
    asset_cfg: SceneEntityCfg = SceneEntityCfg("robot"),
    sensor_cfg: SceneEntityCfg = SceneEntityCfg("object_contact_sensor"),
    ee_frame_cfg: SceneEntityCfg = SceneEntityCfg("ee_frame"),
) -> torch.Tensor:
    obj: RigidObject = env.scene[object_cfg.name]
    tgt: RigidObject = env.scene[target_cfg.name]
    ee_frame: FrameTransformer = env.scene[ee_frame_cfg.name]
    robot = env.scene[asset_cfg.name]

    obj_pos = obj.data.root_link_pos_w[:, :3]
    tgt_pos = tgt.data.root_link_pos_w[:, :3]
    ee_pos = ee_frame.data.target_pos_w[..., 0, :]

    reach_dist = torch.norm(ee_pos - obj_pos, dim=-1)
    reach_reward = 1.0 - torch.tanh(reach_scale * reach_dist)

    grasped = is_grasping(
        env,
        sensor_cfg=sensor_cfg,
        asset_cfg=asset_cfg,
        open_joint_pos=0.04,
        force_threshold=force_threshold,
        closure_thr=closure_thr,
    )

    finger_q = robot.data.joint_pos[:, asset_cfg.joint_ids]
    open_amt = finger_q.mean(dim=-1).clamp(0.0, 0.04) / 0.04

    near_object = torch.exp(-reach_dist / close_distance_scale)
    closing_reward = (~grasped).float() * near_object * (1.0 - open_amt) * close_reward_scale

    obj_rel_z = obj_pos[:, 2] - tgt_pos[:, 2]
    is_lifted = obj_rel_z > lift_height
    is_lifted_soft = torch.clamp(
        (obj_rel_z - (lift_height * 0.5)) / (lift_height * 0.5),
        0.0, 1.0,
    )

    lift_reward = grasped.float() * is_lifted_soft * lift_bonus

    carry_gate = torch.norm(ee_pos - obj_pos, dim=-1) < 0.08
    carrying = grasped & is_lifted & carry_gate

    transport_dist = torch.norm(obj_pos - tgt_pos, dim=-1)
    transport_reward = carrying.float() * (
        1.5 * (1.0 - torch.tanh(transport_scale * transport_dist))
        + 3.0 * torch.exp(-transport_dist / 0.05)
    )

    xy_dist = torch.norm(obj_pos[:, :2] - tgt_pos[:, :2], dim=-1)
    height_diff = (obj_pos[:, 2] - tgt_pos[:, 2]).abs()
    in_position = (xy_dist < place_horizontal_threshold) & (height_diff < place_height_threshold)

    hover_bonus = carrying.float() * in_position.float()
    release_approach_reward = carrying.float() * in_position.float() * open_amt * 4.0

    fully_open = open_amt > 0.40

    pick_latch = getattr(env.cfg, "grasp_hold_ctr", None)
    if pick_latch is None or pick_latch.shape != grasped.shape:
        pick_latch = torch.zeros_like(grasped, dtype=torch.bool, device=grasped.device)
    pick_latch = pick_latch | carrying
    setattr(env.cfg, "grasp_hold_ctr", pick_latch)

    release_reward = (pick_latch & in_position & fully_open).float() * place_bonus

    return (
        reach_reward
        + closing_reward
        + lift_reward
        + transport_reward
        + hover_bonus
        + release_approach_reward
        + release_reward
    )


def pick_and_place_reward_states(
    env: ManagerBasedRLEnv,
    force_threshold: float = 0.1,
    place_horizontal_threshold: float = 0.07,
    place_height_threshold: float = 0.06,
    place_bonus: float = 3.0,
    reach_scale: float = 5.0,
    transport_scale: float = 5.0,
    lift_height: float = 0.03,
    lift_bonus: float = 2.0,
    closure_thr: float = 0.6,
    object_cfg: SceneEntityCfg = SceneEntityCfg("object"),
    target_cfg: SceneEntityCfg = SceneEntityCfg("target_object"),
    asset_cfg: SceneEntityCfg = SceneEntityCfg("robot"),
    sensor_cfg: SceneEntityCfg = SceneEntityCfg("object_contact_sensor"),
    ee_frame_cfg: SceneEntityCfg = SceneEntityCfg("ee_frame"),
) -> torch.Tensor:
    obj: RigidObject = env.scene[object_cfg.name]
    tgt: RigidObject = env.scene[target_cfg.name]
    ee_frame: FrameTransformer = env.scene[ee_frame_cfg.name]
    robot = env.scene[asset_cfg.name]

    obj_pos = obj.data.root_link_pos_w[:, :3]
    tgt_pos = tgt.data.root_link_pos_w[:, :3]
    ee_pos = ee_frame.data.target_pos_w[..., 0, :]

    finger_q = robot.data.joint_pos[:, asset_cfg.joint_ids]
    open_amt = finger_q.mean(dim=-1).clamp(0.0, 0.04) / 0.04

    reach_dist = torch.norm(ee_pos - obj_pos, dim=-1)
    reach_reward = 1.0 - torch.tanh(reach_scale * reach_dist)

    near_object = torch.exp(-reach_dist / 0.05)
    closing_reward = near_object * (1.0 - open_amt) * 2.0

    grasped = is_grasping(
        env,
        sensor_cfg=sensor_cfg,
        asset_cfg=asset_cfg,
        open_joint_pos=0.04,
        force_threshold=force_threshold,
        closure_thr=closure_thr,
    )

    obj_rel_z = obj_pos[:, 2] - tgt_pos[:, 2]
    is_lifted = obj_rel_z > lift_height
    is_lifted_soft = torch.clamp(
        (obj_rel_z - (lift_height * 0.5)) / (lift_height * 0.5),
        0.0, 1.0,
    )

    lift_reward = grasped.float() * is_lifted_soft * lift_bonus

    transport_dist = torch.norm(obj_pos - tgt_pos, dim=-1)
    transport_reward = (grasped & is_lifted).float() * (
        (1.0 - torch.tanh(transport_scale * transport_dist))
        + 2.0 * torch.exp(-transport_dist / 0.05)
    )

    xy_dist = torch.norm(obj_pos[:, :2] - tgt_pos[:, :2], dim=-1)
    height_diff = (obj_pos[:, 2] - tgt_pos[:, 2]).abs()
    in_position = (xy_dist < place_horizontal_threshold) & (height_diff < place_height_threshold)

    hover_bonus = (grasped & is_lifted).float() * in_position.float()
    release_approach_reward = (grasped & is_lifted).float() * in_position.float() * open_amt * 2.0

    fully_open = open_amt > 0.70
    release_reward = (in_position & fully_open).float() * place_bonus

    return (
        reach_reward
        + closing_reward
        + lift_reward
        + transport_reward
        + hover_bonus
        + release_approach_reward
        + release_reward
    )
\end{lstlisting}

\begin{lstlisting}[caption={Reward terms and corresponding implementations for the \textbf{Push-Cube} task.}, label={lst:push_rewards}]
@configclass
class RewardsCfg:
    reaching_object = RewTerm(
        func=mdp.object_ee_distance,
        params={"std": 0.1},
        weight=20.0,
    )

    object_to_target_distance = RewTerm(
        func=mdp.distance_obj_to_target_reward,
        params={
            "sigma": 0.2,
            "object_cfg": SceneEntityCfg("object"),
            "target_object_cfg": SceneEntityCfg("target_object"),
        },
        weight=1.0,
    )

    gripper_fingers_reward = RewTerm(
        func=mdp.gripper_fingers_close_reward,
        params={
            "ee_frame_cfg": SceneEntityCfg("ee_frame"),
            "max_distance": 0.02,
        },
        weight=1.0,
    )

    ee_to_object_distance = RewTerm(
        func=mdp.ee_to_object_distance_penalty,
        params={"distance_threshold": 0.03},
        weight=-1e-3,
    )


def object_ee_distance(
    env: ManagerBasedRLEnv,
    std: float,
    object_cfg: SceneEntityCfg = SceneEntityCfg("object"),
    ee_frame_cfg: SceneEntityCfg = SceneEntityCfg("ee_frame"),
) -> torch.Tensor:
    object: RigidObject = env.scene[object_cfg.name]
    ee_frame: FrameTransformer = env.scene[ee_frame_cfg.name]

    cube_pos_w = object.data.root_pos_w
    ee_w = ee_frame.data.target_pos_w[..., 0, :]
    object_ee_distance = torch.norm(cube_pos_w - ee_w, dim=1)

    return 1 - torch.tanh(object_ee_distance / std)


def distance_obj_to_target_reward(
    env: ManagerBasedRLEnv,
    sigma: float = 0.2,
    object_cfg: SceneEntityCfg = SceneEntityCfg("object"),
    target_object_cfg: SceneEntityCfg = SceneEntityCfg("target_object"),
) -> torch.Tensor:
    object: RigidObject = env.scene[object_cfg.name]
    target_object: RigidObject = env.scene[target_object_cfg.name]

    object_pos = object.data.root_pos_w
    target_pos = target_object.data.root_pos_w

    distance = torch.norm(object_pos - target_pos, dim=1)
    shaped_distance = torch.tanh(torch.clamp(distance, min=0.02) / sigma)
    reward = 1.0 - shaped_distance

    return reward


def gripper_fingers_close_reward(
    env: ManagerBasedRLEnv,
    ee_frame_cfg: SceneEntityCfg = SceneEntityCfg("ee_frame"),
    max_distance: float = 0.02,
) -> torch.Tensor:
    ee_fingertips_w = env.scene[ee_frame_cfg.name].data.target_pos_w[..., 1:, :]
    lfinger_pos = ee_fingertips_w[..., 0, :]
    rfinger_pos = ee_fingertips_w[..., 1, :]

    finger_distance = torch.norm(lfinger_pos - rfinger_pos, dim=1)
    reward = torch.clamp(1.0 - finger_distance / max_distance, 0.0, 1.0)

    return reward


def ee_to_object_distance_penalty(
    env: ManagerBasedRLEnv,
    distance_threshold: float = 0.03,
    object_cfg: SceneEntityCfg = SceneEntityCfg("object"),
    ee_frame_cfg: SceneEntityCfg = SceneEntityCfg("ee_frame"),
    target_object_cfg: SceneEntityCfg = SceneEntityCfg("target_object"),
) -> torch.Tensor:
    object: RigidObject = env.scene[object_cfg.name]
    ee_frame: FrameTransformer = env.scene[ee_frame_cfg.name]
    target_object: RigidObject = env.scene[target_object_cfg.name]

    object_pos = object.data.root_pos_w
    ee_tcp_pos = ee_frame.data.target_pos_w[..., 0, :]
    target_object_pos = target_object.data.root_pos_w

    ee_distance = torch.norm(ee_tcp_pos - object_pos, dim=1)
    distance_to_target = torch.norm(target_object_pos[:, 0:3] - object_pos, dim=1)

    penalty = (ee_distance > distance_threshold) * 0.02 / (
        torch.clamp(distance_to_target, min=0.02)
    )

    return penalty
\end{lstlisting}

\begin{lstlisting}[caption={Reward configuration and reward functions for the \textbf{Reach-Cube} task.}, label={lst:reach_rewards}]
@configclass
class RewardsCfg:
    reaching_object = RewTerm(
        func=mdp.object_ee_distance,
        params={"std": 0.1},
        weight=9.0,
    )

    success_rate = RewTerm(
        func=mdp.object_ee_distance_success_rate,
        params={"std": 0.1},
        weight=1e-16,
    )


def object_ee_distance(
    env: ManagerBasedRLEnv,
    std: float,
    object_cfg: SceneEntityCfg = SceneEntityCfg("object"),
    ee_frame_cfg: SceneEntityCfg = SceneEntityCfg("ee_frame"),
) -> torch.Tensor:
    object: RigidObject = env.scene[object_cfg.name]
    ee_frame: FrameTransformer = env.scene[ee_frame_cfg.name]

    cube_pos_w = object.data.root_pos_w
    ee_w = ee_frame.data.target_pos_w[..., 0, :]
    object_ee_distance = torch.norm(cube_pos_w - ee_w, dim=1)

    return 1 - torch.tanh(object_ee_distance / std)


def object_ee_distance_success_rate(
    env: ManagerBasedRLEnv,
    std: float,
    object_cfg: SceneEntityCfg = SceneEntityCfg("object"),
    ee_frame_cfg: SceneEntityCfg = SceneEntityCfg("ee_frame"),
) -> torch.Tensor:
    object: RigidObject = env.scene[object_cfg.name]
    ee_frame: FrameTransformer = env.scene[ee_frame_cfg.name]

    cube_pos_w = object.data.root_pos_w
    ee_w = ee_frame.data.target_pos_w[..., 0, :]
    object_ee_distance = torch.norm(cube_pos_w - ee_w, dim=1)
    success_f = (object_ee_distance < 0.3).float()

    if env.cfg.success_rate is None:
        env.cfg.success_rate = success_f.clone()
    else:
        env.cfg.success_rate += success_f
    env.cfg.success_rate_count += 1

    env.cfg.last_step_success = success_f

    return torch.zeros_like(1 - torch.tanh(object_ee_distance / std), device=env.device)
\end{lstlisting}

\begin{lstlisting}[caption={Reward configuration and reward functions for the \textbf{Open-Drawer} task.}, label={lst:open_drawer_rewards}]
@configclass
class RewardsCfg:
    approach_ee_handle = RewTerm(
        func=mdp.approach_ee_handle,
        weight=2.0,
        params={"threshold": 0.2},
    )

    align_ee_handle = RewTerm(
        func=mdp.align_ee_handle,
        weight=0.5,
    )

    approach_gripper_handle = RewTerm(
        func=mdp.approach_gripper_handle,
        weight=5.0,
        params={"offset": MISSING},
    )

    align_grasp_around_handle = RewTerm(
        func=mdp.align_grasp_around_handle,
        weight=0.125,
    )

    grasp_handle = RewTerm(
        func=mdp.grasp_handle,
        weight=0.5,
        params={
            "threshold": 0.03,
            "open_joint_pos": MISSING,
            "asset_cfg": SceneEntityCfg("robot", joint_names=MISSING),
        },
    )

    open_drawer_bonus = RewTerm(
        func=mdp.open_drawer_bonus,
        weight=7.5,
        params={"asset_cfg": SceneEntityCfg("cabinet", joint_names=["drawer_top_joint"])},
    )

    multi_stage_open_drawer = RewTerm(
        func=mdp.multi_stage_open_drawer,
        weight=1.0,
        params={"asset_cfg": SceneEntityCfg("cabinet", joint_names=["drawer_top_joint"])},
    )

    success_rate = RewTerm(
        func=mdp.open_drawer_success_rate,
        weight=1e-16,
        params={
            "asset_cfg": SceneEntityCfg("cabinet", joint_names=["drawer_top_joint"]),
        },
    )


def approach_ee_handle(env: ManagerBasedRLEnv, threshold: float) -> torch.Tensor:
    ee_tcp_pos = env.scene["ee_frame"].data.target_pos_w[..., 0, :]
    handle_pos = env.scene["cabinet_frame"].data.target_pos_w[..., 0, :]

    distance = torch.norm(handle_pos - ee_tcp_pos, dim=-1, p=2)

    reward = 1.0 / (1.0 + distance**2)
    reward = torch.pow(reward, 2)
    return torch.where(distance <= threshold, 2 * reward, reward)


def align_ee_handle(env: ManagerBasedRLEnv) -> torch.Tensor:
    ee_tcp_quat = env.scene["ee_frame"].data.target_quat_w[..., 0, :]
    handle_quat = env.scene["cabinet_frame"].data.target_quat_w[..., 0, :]

    ee_tcp_rot_mat = matrix_from_quat(ee_tcp_quat)
    handle_mat = matrix_from_quat(handle_quat)

    handle_x, handle_y = handle_mat[..., 0], handle_mat[..., 1]
    ee_tcp_x, ee_tcp_z = ee_tcp_rot_mat[..., 0], ee_tcp_rot_mat[..., 2]

    align_z = torch.bmm(ee_tcp_z.unsqueeze(1), -handle_x.unsqueeze(-1)).squeeze(-1).squeeze(-1)
    align_x = torch.bmm(ee_tcp_x.unsqueeze(1), -handle_y.unsqueeze(-1)).squeeze(-1).squeeze(-1)
    return 0.5 * (torch.sign(align_z) * align_z**2 + torch.sign(align_x) * align_x**2)


def align_grasp_around_handle(env: ManagerBasedRLEnv) -> torch.Tensor:
    handle_pos = env.scene["cabinet_frame"].data.target_pos_w[..., 0, :]
    ee_fingertips_w = env.scene["ee_frame"].data.target_pos_w[..., 1:, :]
    lfinger_pos = ee_fingertips_w[..., 0, :]
    rfinger_pos = ee_fingertips_w[..., 1, :]

    is_graspable = (rfinger_pos[:, 2] < handle_pos[:, 2]) & (lfinger_pos[:, 2] > handle_pos[:, 2])
    return is_graspable


def approach_gripper_handle(env: ManagerBasedRLEnv, offset: float = 0.04) -> torch.Tensor:
    handle_pos = env.scene["cabinet_frame"].data.target_pos_w[..., 0, :]
    ee_fingertips_w = env.scene["ee_frame"].data.target_pos_w[..., 1:, :]
    lfinger_pos = ee_fingertips_w[..., 0, :]
    rfinger_pos = ee_fingertips_w[..., 1, :]

    lfinger_dist = torch.abs(lfinger_pos[:, 2] - handle_pos[:, 2])
    rfinger_dist = torch.abs(rfinger_pos[:, 2] - handle_pos[:, 2])

    is_graspable = (rfinger_pos[:, 2] < handle_pos[:, 2]) & (lfinger_pos[:, 2] > handle_pos[:, 2])

    return is_graspable * ((offset - lfinger_dist) + (offset - rfinger_dist))


def grasp_handle(
    env: ManagerBasedRLEnv,
    threshold: float,
    open_joint_pos: float,
    asset_cfg: SceneEntityCfg,
) -> torch.Tensor:
    ee_tcp_pos = env.scene["ee_frame"].data.target_pos_w[..., 0, :]
    handle_pos = env.scene["cabinet_frame"].data.target_pos_w[..., 0, :]
    gripper_joint_pos = env.scene[asset_cfg.name].data.joint_pos[:, asset_cfg.joint_ids]

    distance = torch.norm(handle_pos - ee_tcp_pos, dim=-1, p=2)
    is_close = distance <= threshold

    return is_close * torch.sum(open_joint_pos - gripper_joint_pos, dim=-1)


def open_drawer_bonus(env: ManagerBasedRLEnv, asset_cfg: SceneEntityCfg) -> torch.Tensor:
    drawer_pos = env.scene[asset_cfg.name].data.joint_pos[:, asset_cfg.joint_ids[0]]
    is_graspable = align_grasp_around_handle(env).float()

    return (is_graspable + 1.0) * drawer_pos


def multi_stage_open_drawer(env: ManagerBasedRLEnv, asset_cfg: SceneEntityCfg) -> torch.Tensor:
    drawer_pos = env.scene[asset_cfg.name].data.joint_pos[:, asset_cfg.joint_ids[0]]
    is_graspable = align_grasp_around_handle(env).float()

    open_easy = (drawer_pos > 0.01) * 0.5
    open_medium = (drawer_pos > 0.2) * is_graspable
    open_hard = (drawer_pos > 0.3) * is_graspable

    return open_easy + open_medium + open_hard


def open_drawer_success_rate(
    env: ManagerBasedRLEnv,
    asset_cfg: SceneEntityCfg,
    threshold: float = 0.01,
) -> torch.Tensor:
    drawer_pos = env.scene[asset_cfg.name].data.joint_pos[:, asset_cfg.joint_ids[0]]
    is_graspable = align_grasp_around_handle(env).float()

    success_f = (drawer_pos > threshold).float()

    if env.cfg.success_rate is None:
        env.cfg.success_rate = success_f.clone()
    else:
        env.cfg.success_rate += success_f
    env.cfg.success_rate_count += 1

    env.cfg.last_step_success = success_f

    return torch.zeros_like((is_graspable + 1.0) * drawer_pos, device=env.device)
\end{lstlisting}


\end{document}